\documentclass{article}
\usepackage{iclr2027_conference,times}

\usepackage{amsmath,amsfonts,bm}

\def\eqref#1{equation~\ref{#1}}

\def\1{\bm{1}}

\DeclareMathAlphabet{\mathsfit}{\encodingdefault}{\sfdefault}{m}{sl}
\SetMathAlphabet{\mathsfit}{bold}{\encodingdefault}{\sfdefault}{bx}{n}

\usepackage{amssymb,amsthm,mathtools}
\usepackage{booktabs}
\usepackage{hyperref}
\usepackage{url}
\usepackage{graphicx}
\usepackage{subcaption}
\usepackage{wrapfig}
\usepackage{multirow}
\usepackage[letterpaper,margin=1.2in]{geometry}
\newtheorem{theorem}{Theorem}
\newtheorem{proposition}[theorem]{Proposition}
\newtheorem{lemma}[theorem]{Lemma}
\newtheorem{corollary}[theorem]{Corollary}
\newtheorem{assumption}[theorem]{Assumption}
\newtheorem{definition}[theorem]{Definition}
\newtheorem{remark}[theorem]{Remark}

\newcommand{\PD}{\mathbb{S}_{++}}
\newcommand{\Sym}{\mathbb{S}}
\newcommand{\M}{\mathcal{M}}
\newcommand{\Mbar}{\overline{\mathcal{M}}}
\newcommand{\Gclass}{\mathcal{G}_{p,L_{\max}}}
\newcommand{\diag}{\operatorname{diag}}
\newcommand{\tr}{\operatorname{tr}}
\newcommand{\cl}{\operatorname{cl}}
\newcommand{\od}{\odot}

\newenvironment{feedbackrevision}{\color{black}}{}
\newcommand{\feedbackedit}[1]{{\color{black}#1}}

\title {Differentiable Structure Learning for Cyclic Linear Gaussian Models with Latent Confounders}

\author{\begin{minipage}{\dimexpr\textwidth-2\tabcolsep\relax}
\textbf{Sadegh Khorasani\textsuperscript{1}, Ali Najar\textsuperscript{1},
Saber Salehkaleybar\textsuperscript{2},}
\textbf{Negar Kiyavash\textsuperscript{3}}\\[6pt]
\normalfont\small
\textsuperscript{1}School of Computer and Communication Sciences,
EPFL, Lausanne, Switzerland\\
\textsuperscript{2}Leiden Institute of Advanced Computer Science (LIACS), Leiden University, Leiden, The Netherlands\\
\textsuperscript{3}College of Management of Technology,
EPFL, Lausanne, Switzerland\\[4pt]
\texttt{sadegh.khorasani@epfl.ch}\quad\texttt{anajar13750@gmail.com}\quad
\\
\texttt{s.salehkaleybar@liacs.leidenuniv.nl} \quad\texttt{negar.kiyavash@epfl.ch}
\end{minipage}}

\iclrfinalcopy 
\begin{document}

\maketitle
\lhead{Preprint} 
\begin{abstract}

We study causal structure learning from observational data in linear Gaussian structural causal models in the presence of directed cycles and an unknown number of exogenous latent confounders, bounded by a given maximum. We derive the covariance of the observed variables and introduce marginal quasi-equivalence, which characterizes when different causal models share a full-dimensional subset of the observational distributions they can generate. We formulate structure learning as minimization of the Gaussian negative log-likelihood with a logarithmically scaled complexity penalty that counts directed edges and latent variables. For a fixed number of observed variables and a fixed upper bound on latent variables, we establish consistency of global score minimizers up to marginal quasi-equivalence under algebraic faithfulness, structural minimality, and model-overlap assumptions. We parameterize the inclusion of directed edges and candidate latent variables using Bernoulli gates, whose continuous probabilities are optimized jointly with the structural coefficients. Averaging the penalized negative log-likelihood over these gates yields an objective with a closed-form differentiable complexity penalty. We prove that this expected objective has the same global infimum as the corresponding discrete structure-learning objective. Experimental results show that our approach achieves lower recovery error than previous methods in several experimental settings.

\end{abstract}

\section{Introduction}
\label{sec:introduction}
\begingroup

Causal graphical models describe causal relationships among variables.
Learning their structure from observational data is a central problem in
causal inference. In many systems, these relationships involve feedback
loops and unobserved common causes, known as latent confounders. This
motivates structure-learning methods that allow both directed cycles and
latent variables.

Several lines of research address these two challenges separately.
Work on cyclic linear Gaussian models has developed distributional
characterizations and score-based learning methods under the assumption
of no latent confounders \citep{ghassami2020,yi2024}. In contrast,
observational methods for latent confounding commonly retain acyclicity
\citep{kaltenpoth2023,ma2024spot}. Approaches that allow both cycles and
confounding use different model or data assumptions. In particular,
\citet{amendola2020} consider mixed graphs, whose bidirected edges represent
possible correlations between structural noise variables. Their
simple-graph restriction permits at most one edge of any type between
each pair of observed variables. It therefore excludes reciprocal directed
edges and pairs joined by both a directed and a bidirected edge. Other
approaches use interventional data \citep{hyttinen2012,sethuraman2025} or
conditional-independence assumptions for nonlinear cyclic models
\citep{forre2018}.

The distributional-equivalence and consistency results for cyclic models without latent variables do not directly extend after latent variables are marginalized, since candidate graphs may contain different numbers of latent variables and have different structural parameterizations. Likewise, results established under the simple-graph restriction do not directly apply to our setting, where these more general structures are allowed. We do not impose this restriction and allow arbitrary directed cycles among observed variables together with explicit exogenous latent confounders.

We study observational linear Gaussian structural causal models with arbitrary directed cycles among observed variables and an unknown, but bounded, number of independent exogenous latent confounders. Rather than recovering a unique latent graph, which is generally not identifiable from the observed distribution, we formulate the problem directly in terms of the families of observed covariance matrices induced by candidate graphs. This perspective leads to a marginal notion of quasi-equivalence that captures when two graphs generate a full-dimensional common subset of observational distributions, while allowing their explicit latent structures to differ.

Our contributions are as follows.
\begin{itemize}
    \item
    We derive the observed covariance family induced by a cyclic linear
    Gaussian SCM with exogenous latent variables and establish its basic
    geometric properties. We define marginal distribution equivalence
    and marginal quasi-equivalence in observed covariance space,
    extending the positive-overlap perspective of
    \citet[Definition~9]{ghassami2020} to candidate graphs with different
    latent-variable structures (Section~\ref{sec:marginal-models}).

    \item
    We formulate structure learning as joint optimization over the
    discrete graph structure and its continuous parameters, using the
    Gaussian negative log-likelihood and a logarithmically scaled penalty
    that counts directed edges and latent variables. For a fixed number
    of observed variables and a fixed upper bound on latent variables,
    we show that global score minimizers asymptotically select covariance
    families whose closures contain the true covariance. 
    Under additional conditions (in particular, under algebraic faithfulness, structural minimality, and full-dimensional model overlap), we obtain consistency up to marginal quasi-equivalence (Section~\ref{sec:consistency-result}).

    \item
    We use Bernoulli gates for observed edges, latent-variable activation,
    and latent-to-observed edges, optimizing their probabilities jointly
    with the continuous model parameters. Averaging the penalized objective over
    these gates yields a differentiable expected objective with a
    closed-form complexity penalty. We prove that this expected objective
    has the same global infimum as the corresponding discrete
    structure-learning problem, so the continuous probability
    parameterization preserves the globally optimal objective value
    (Section~\ref{sec:bernoulli}).

    \item
    We evaluate the approach on synthetic cyclic SCMs with varying graph
    density and latent confounding, as well as on simulated GeneNetWeaver
    networks. Across several settings with latent variables, the method achieves lower bidirectional covariance-family compatibility error than the considered baselines, while scaling to substantially larger observed graphs.
    (Section~\ref{sec:experiments}).
\end{itemize}
Appendix~\ref{app:related-work} provides further comparisons with related work.
\endgroup

\section{Problem Setting}
\label{sec:problem-setting}

We study causal structure learning from observational data in linear Gaussian
structural causal models (SCMs) with feedback loops among observed variables
and exogenous latent confounders, meaning latent variables with no incoming
edges. The unknown structure consists of edges among observed variables,
edges from latent to observed variables, and the
number of latent variables.

\subsection{Notation and conventions}
\label{sec:notation}

For a positive integer $d$, let $I_d$ denote the $d\times d$ identity matrix,
$\Sym^d$ the space of real symmetric $d\times d$ matrices, and
$\PD^d\subset\Sym^d$ the set of symmetric positive-definite matrices. For a
symmetric matrix $A$, the notation $A\succ0$ and $A\succeq0$ means positive
definiteness and positive semidefiniteness, respectively. We write $A^\top$
for the transpose of $A$ and, when $A$ is square, $\det(A)$, $\tr(A)$, and
$\rho(A)$ for its determinant, trace, and spectral radius (the largest
absolute value of its eigenvalues). For invertible
$A$, we use $A^{-\top}:=(A^{-1})^\top$. The operator
$\diag(a_1,\ldots,a_d)$ forms a diagonal matrix, and $\od$ denotes the
elementwise product. For a matrix $A$, $|A|$ denotes entrywise absolute
value, $\|A\|_1=\sum_{i,j}|A_{ij}|$ the entrywise $\ell_1$ norm,
$\|A\|_2$ the spectral norm, and $\|A\|_F$ the Frobenius norm. For a finite
set, vertical bars instead denote cardinality. Random vectors are treated
as column vectors, and $\operatorname{Cov}(Y)$ denotes the covariance
matrix of a random vector $Y$.

For subsets $T\subseteq S$ of a Euclidean space, $\cl_S(T)$ denotes the
closure of $T$ relative to $S$: all limits in $S$ of convergent sequences
from $T$. An \emph{algebraic set} is the set of common solutions to finitely
many polynomial equalities with real coefficients. It is \emph{irreducible}
if it is nonempty and cannot be written as the union of two strictly
smaller algebraic sets. A \emph{semialgebraic set} is a finite union of sets,
each defined by finitely many polynomial equalities and inequalities.
To say that a polynomial \emph{vanishes on} a set means that it equals zero
at every point in that set.

For a nonempty semialgebraic set $T\subseteq\mathbb R^m$, $\dim T$ is the
largest $k\in\{0,\ldots,m\}$ for which projecting $T$ onto some $k$
coordinates gives a set containing a nonempty open ball in $\mathbb R^k$.
Here projection means keeping the selected coordinates and discarding the
others. We set $\dim\varnothing=-1$. For nonempty real algebraic sets, this
dimension agrees with algebraic dimension \citep[Section~3.3]{coste2000}. For sets of
symmetric $p\times p$ matrices, the coordinates are the $p(p+1)/2$
upper-triangular entries. For semialgebraic sets $T\subseteq S$, $T$ is called
\emph{full-dimensional in $S$} when $\dim T=\dim S$.
Appendix~\ref{app:geometry-definitions} gives formal definitions (including
open balls), references, and examples.

A candidate graph is denoted by $G^+=(V,E)$, where
$V=O\mathbin{\dot\cup}L$ is partitioned into the observed vertex set $O$ and
the latent vertex set $L$. We write $p=|O|$, $\ell=|L|$, and
$0\leq\ell\leq L_{\max}$, where $L_{\max}<\infty$ is a prescribed upper
bound on the number of latent variables. The superscript $+$
indicates that the graph includes both types of vertices. Matrices are
indexed with observed variables before latent variables. We use the
row-tail, column-head convention: $B_{ij}$ is the coefficient associated
with the directed edge $i\to j$. Graph compatibility requires $B_{ij}=0$
whenever $(i,j)\notin E$. Conversely, an edge in $E$ permits, but does not
require, its coefficient to be nonzero. Thus a graph specifies which
coefficients are allowed to be nonzero, not which must be nonzero at every parameter
value. Covariance-family notation and the two notions of graph equivalence
are defined in Section~\ref{sec:marginal-models}.

\subsection{Cyclic structural equations with exogenous confounders}

Let $X=(X_O^\top,X_L^\top)^\top\in\mathbb R^{p+\ell}$ collect the observed
and latent variables, and partition the structural noise vector $N$ in the
same order. The full zero-mean Gaussian SCM is
\begin{equation}
    X=B^\top X+N,
    \qquad N\sim\mathcal{N}(0,\Omega),
    \qquad \Omega\succ0\text{ diagonal},
    \label{eq:sem}
\end{equation}
where $B\in\mathbb R^{(p+\ell)\times(p+\ell)}$ respects the graph-imposed
zero restrictions and $B_{ii}=0$. The diagonal Gaussian noise covariance
makes the components of $N$ mutually independent.

We allow arbitrary directed cycles among observed variables and require
only that $I_{p+\ell}-B$ be invertible. For each noise realization, the
unique solution is
$X=(I_{p+\ell}-B)^{-\top}N$, which defines a nonsingular Gaussian
distribution.

Latent vertices are exogenous: they have no parents, and their only
permitted children are observed vertices. We additionally require each
latent vertex to have at least two permitted edges to observed vertices.
The resulting block structure is
\begin{equation}
    B=\begin{bmatrix}B_{OO}&0\\ \Lambda&0\end{bmatrix},
    \qquad
    \Omega=\begin{bmatrix}\Omega_O&0\\0&\Omega_L\end{bmatrix},
    \label{eq:block-sem}
\end{equation}
where $B_{OO}\in\mathbb R^{p\times p}$,
$\Lambda\in\mathbb R^{\ell\times p}$, and $\Omega_O$ and $\Omega_L$ are
positive diagonal matrices of sizes $p\times p$ and $\ell\times\ell$,
respectively. The $h$th row of $\Lambda$ contains the coefficients from the
$h$th latent variable to its observed children; these coefficients are
called \emph{latent loadings}. Equation~\ref{eq:block-sem}
gives
\[
    X_O=B_{OO}^\top X_O+\Lambda^\top X_L+N_O,
    \qquad X_L=N_L.
\]
Thus the latent variables are mutually independent and independent of the
observed structural noises $N_O$. The block-triangular
structure also gives
$\det(I_{p+\ell}-B)=\det(I_p-B_{OO})$, so the structural equations have a
unique solution for every noise realization exactly when
$\det(I_p-B_{OO})\neq0$.

The two-child restriction removes zero- and one-child latent vertices
without excluding any observed distribution representable with such
vertices. A latent variable with no child has no observational effect;
the variance contribution of a latent variable with only one child can be
absorbed into that child's diagonal noise variance. This restriction is
imposed on the permitted graph edges, not on the number of nonzero
loadings at each parameter value; other latent variables may still be
removable without changing the observed distribution. When $\ell=0$, the
latent blocks are empty and their
covariance contribution is understood to be the zero $p\times p$ matrix.

\subsection{Observed marginal distribution}
\label{sec:observed-marginal}

We assume that only $X_O$ is observed. The statistical analysis uses independent and
identically distributed (i.i.d.) observations
from its zero-mean marginal distribution. Eliminating $X_L$ introduces the
effective noise
\[
    \varepsilon_O:=N_O+\Lambda^\top N_L,
    \qquad X_O=B_{OO}^\top X_O+\varepsilon_O.
\]
Although the components of $N$ are independent, the components
of $\varepsilon_O$ need not be independent.

\begin{lemma}[Observed marginal covariance]
\label{lem:marginal-covariance}
Under the preceding assumptions,
\begin{equation}
    X_O\sim\mathcal{N}(0,\Sigma_O),\qquad
    \Sigma_O=(I_p-B_{OO})^{-\top}\Psi(I_p-B_{OO})^{-1},
    \label{eq:marginal-covariance}
\end{equation}
where the effective noise covariance is
\begin{equation}
    \Psi=\operatorname{Cov}(\varepsilon_O)
    =\Omega_O+\Lambda^\top\Omega_L\Lambda.
    \label{eq:effective-noise}
\end{equation}
Both $\Psi$ and $\Sigma_O$ are positive definite.
\end{lemma}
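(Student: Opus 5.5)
The plan is to read the marginal off the full linear Gaussian solution and then compute covariances blockwise. By \eqref{eq:sem}, $X=(I_{p+\ell}-B)^{-\top}N$ with $N\sim\mathcal N(0,\Omega)$, so $X$ is a linear image of a Gaussian vector and is therefore jointly Gaussian with mean zero. Its subvector $X_O$ is again zero-mean Gaussian, so only $\operatorname{Cov}(X_O)$ remains to be identified.

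Rather than inverting the full block matrix, I would work with the reduced equations $X_L=N_L$ and $X_O=B_{OO}^\top X_O+\varepsilon_O$, where $\varepsilon_O=N_O+\Lambda^\top N_L$. The block-triangular structure gives $\det(I_{p+\ell}-B)=\det(I_p-B_{OO})\neq0$, so $I_p-B_{OO}$ is invertible. Solving the reduced system yields $X_O=(I_p-B_{OO})^{-\top}\varepsilon_O$. Since $N_O$ and $N_L$ are independent with covariances $\Omega_O$ and $\Omega_L$, we get $\operatorname{Cov}(\varepsilon_O)=\Omega_O+\Lambda^\top\Omega_L\Lambda=\Psi$. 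Applying $\operatorname{Cov}(AY)=A\operatorname{Cov}(Y)A^\top$ with $A=(I_p-B_{OO})^{-\top}$, and using $A^\top=(I_p-B_{OO})^{-1}$, gives \eqref{eq:marginal-covariance}. When $\ell=0$ the term $\Lambda^\top\Omega_L\Lambda$ is the zero matrix, and the argument goes through unchanged.

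For positive definiteness, take any $v\neq0$. Then
\[
v^\top\Psi v=v^\top\Omega_O v+(\Lambda v)^\top\Omega_L(\Lambda v)\ge v^\top\Omega_O v>0,
\]
because $\Omega_O$ is a positive diagonal matrix and $\Omega_L\succeq0$. For $\Sigma_O$ we have $v^\top\Sigma_O v=w^\top\Psi w$ with $w=(I_p-B_{OO})^{-1}v$, and $w\neq0$ by invertibility, so this quantity is also strictly positive. Symmetry of both matrices is immediate from their form.

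No step is a real obstacle. The only point needing care is the transpose conventions: under the row-tail convention, the reduced equation inverts to $(I-B_{OO})^{-\top}$, and the covariance sandwich must be written consistently with that.
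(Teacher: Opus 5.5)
Your proposal is correct and follows essentially the same route as the paper: reduce to $X_O=(I_p-B_{OO})^{-\top}(N_O+\Lambda^\top N_L)$, compute $\Psi$ using the independence of $N_O$ and $N_L$, and prove positive definiteness from $v^\top\Psi v\ge v^\top\Omega_O v>0$ together with the substitution $w=(I_p-B_{OO})^{-1}v$. The paper phrases the semidefiniteness of the latent term as the factorization $C^\top C$ with $C=\Omega_L^{1/2}\Lambda$, which is the same observation as yours.
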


The matrix $\Omega_O$ describes the observed structural noise covariance,
whereas $\Lambda^\top\Omega_L\Lambda$ describes the covariance contribution
of the latent variables. The proof is given in
Appendix~\ref{app:proof-marginal-covariance}.

\section{Marginal Equivalence and Score}
\label{sec:equivalence-score}

\begin{feedbackrevision}
We compare graphs through their observed covariance families and define
the penalized likelihood used for structure learning.

\end{feedbackrevision}
\subsection{Marginal covariance families and equivalence}
\label{sec:marginal-models}

For a fixed graph $G^+$, let $\mathcal P(G^+)$ denote the set of parameters
$\theta=(B_{OO},\Lambda,\Omega_O,\Omega_L)$ satisfying its graph-imposed zero
restrictions, positive diagonal noise variances, and
$\det(I_p-B_{OO})\neq0$. We call these parameter values \emph{admissible}.
The graph's observed covariance family is
\begin{equation}
    \M(G^+)=\{\Sigma_O(\theta):\theta\in\mathcal{P}(G^+)\}
    \subseteq\PD^p.
    \label{eq:covariance-model}
\end{equation}
Because the observed distributions are Gaussian with known zero mean,
$\M(G^+)$ determines the entire observed distribution family.

We use two closures of this covariance family.
Write $\Mbar(G^+)=\cl_{\PD^p}\M(G^+)$ for its relative Euclidean closure.
It contains all positive-definite limits of covariances generated by
admissible parameter sequences, but does not include singular covariance
matrices.
In contrast, the \emph{real Zariski closure}
$V(G^+)=\overline{\M(G^+)}^{\mathrm{Zar}}\subseteq\Sym^p$ consists of all
symmetric matrices satisfying every polynomial equality that holds for
every covariance matrix in $\M(G^+)$. This definition
uses only polynomial equalities. Matrices in $V(G^+)$ need not be positive
definite or satisfy the other inequalities required to belong to $\M(G^+)$.
Appendix~\ref{app:closure-properties} explains the containment relations
between these sets and the semialgebraicity of the relative closure.

\begin{proposition}[Model geometry]
\label{prop:model-geometry}
For each fixed graph, $\M(G^+)$ is semialgebraic, $V(G^+)$ is irreducible,
and
\begin{equation}
    \dim\M(G^+)=\dim\Mbar(G^+)=\dim V(G^+).
    \label{eq:model-dimensions}
\end{equation}
\end{proposition}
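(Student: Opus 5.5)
The plan is to view $\M(G^+)$ as the image of a polynomial (in fact rational) map defined on a semialgebraic, connected-enough parameter domain, and then apply standard results from real algebraic geometry \citep{coste2000}. Identify $\mathcal P(G^+)$ with an open subset of $\mathbb R^k$, whose coordinates are the free entries of $B_{OO}$ and $\Lambda$ together with the diagonal entries of $\Omega_O$ and $\Omega_L$. This set is cut out by the strict inequalities $\omega_i>0$ and $\det(I_p-B_{OO})\neq0$, so it is semialgebraic. The map $\theta\mapsto\Sigma_O(\theta)$ from Lemma~\ref{lem:marginal-covariance} is rational: by Cramer's rule, $(I_p-B_{OO})^{-1}=\operatorname{adj}(I_p-B_{OO})/\det(I_p-B_{OO})$, and the denominator does not vanish on $\mathcal P(G^+)$. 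Its graph is therefore semialgebraic, and by the Tarski--Seidenberg theorem the image $\M(G^+)$ is semialgebraic.

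For irreducibility I would work with a polynomial parametrization instead of the rational one. Using the adjugate, $\Sigma_O=\det(I_p-B_{OO})^{-2}\operatorname{adj}(I_p-B_{OO})^\top\Psi\operatorname{adj}(I_p-B_{OO})$. Introduce an auxiliary variable $t$ subject to $t\det(I_p-B_{OO})=1$. This gives a polynomial map $\phi$ on an algebraic set $W\subseteq\mathbb R^{k+1}$, and $\phi(W)$ contains $\M(G^+)$.

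The cleaner route uses the following fact. The Zariski closure of any semialgebraic set with nonempty interior in an irreducible real variety, taken inside $\mathbb R^k$, is the whole space. Now suppose $V(G^+)=V_1\cup V_2$ with both $V_i$ algebraic. Pulling back gives $\mathcal P(G^+)=\Sigma_O^{-1}(V_1)\cup\Sigma_O^{-1}(V_2)$. Each preimage is the zero set of rational functions on the open set $\mathcal P(G^+)\subseteq\mathbb R^k$. A nonempty open subset of $\mathbb R^k$ cannot be the union of two proper real-algebraic subsets, since each proper one is nowhere dense and has empty interior. Hence one preimage, say the first, is all of $\mathcal P(G^+)$. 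Then $\M(G^+)\subseteq V_1$, so $V(G^+)\subseteq V_1$, which proves irreducibility. This also shows $V(G^+)\neq\varnothing$.

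For the dimensions, I would use two facts from \citet[Section~3.3]{coste2000}. First, a semialgebraic set and its Euclidean closure have the same dimension. Second, a semialgebraic set and its Zariski closure have the same dimension. The relative closure $\Mbar(G^+)=\cl_{\PD^p}\M(G^+)$ lies between $\M(G^+)$ and the full Euclidean closure $\cl\,\M(G^+)$, and it is semialgebraic because it is the intersection of a semialgebraic closure with the open semialgebraic set $\PD^p$. Monotonicity of dimension then gives
\[
\dim\M\le\dim\Mbar\le\dim\cl\,\M=\dim\M .
\]
The Zariski statement $\dim V=\dim\M$ is the step I expect to need the most care. The reason is that real algebraic dimension of $V$ is defined via the Krull dimension of its coordinate ring, while the paper's definition of $\dim$ uses projections containing open balls. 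I would rely on the stated agreement of these two notions for algebraic sets. I would then apply the result that the Zariski closure of a semialgebraic set has the same dimension as the set itself, which is proved via cylindrical decomposition into cells homeomorphic to open cubes.
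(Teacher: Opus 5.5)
Your proposal is correct and follows essentially the paper's route: Tarski--Seidenberg applied to the graph of the rational covariance map, irreducibility by pulling a decomposition $V(G^+)=V_1\cup V_2$ back to the nonempty open parameter domain (where a polynomial vanishing on an open set is identically zero), and the dimension identities from Coste's Theorem~3.20 plus monotonicity. The differences are cosmetic. The paper clears denominators to get $P_fP_g\equiv0$, so one factor vanishes, instead of your nowhere-density argument; it also sandwiches $\Mbar(G^+)$ between $\M(G^+)$ and $V(G^+)$ rather than the Euclidean closure, so it needs only that one theorem; and your auxiliary-variable parametrization is never used.
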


\begin{feedbackrevision}
The proof is in Appendix~\ref{app:proofs}. 

\end{feedbackrevision}
\begin{definition}[Marginal equivalence]
Let $G_1^+$ and $G_2^+$ have the same labeled observed vertex set; their
latent vertex sets may differ. They are \emph{marginally distribution
equivalent}, written $G_1^+\equiv_O G_2^+$, if
$\M(G_1^+)=\M(G_2^+)$. They are \emph{marginally quasi-equivalent}, written
$G_1^+\cong_O G_2^+$, if
\begin{equation}
\begin{split}
    \dim\bigl(\M(G_1^+)\cap\M(G_2^+)\bigr)
    &=\dim\M(G_1^+)=\dim\M(G_2^+).
\end{split}
\label{eq:quasi-equivalence}
\end{equation}
\end{definition}

\begin{feedbackrevision}
Distribution equivalence implies quasi-equivalence. For quasi-equivalence,
the intersection must have the dimension of each family, not necessarily
$p(p+1)/2$, the dimension of $\Sym^p$. A lower-dimensional intersection
in either family is insufficient.

\end{feedbackrevision}
This definition follows the positive-measure overlap idea of
\citet[Definition~9]{ghassami2020}, expressed here through the dimension of
the intersection of observed covariance families.
Appendix~\ref{app:quasi-equivalence-rationale}
explains this formulation for models with latent variables and why our
quasi-equivalence guarantees compare each estimated graph directly with the true
graph.

\subsection{Candidate graph class and structural complexity}
\label{sec:candidate-class}

For fixed $p$ and $L_{\max}<\infty$, let $\Gclass$ be the class of admissible
causal graphs with the same $p$ labeled observed vertices and at most
$L_{\max}$ exogenous latent vertices. These graphs have no self-loops or
edges into latent vertices, may contain directed cycles among observed
variables, and have at least two edges from each latent vertex to observed
vertices. With a fixed convention for labeling latent vertices, only
finitely many such graph structures are possible\footnote{Finiteness is used in
the statistical analysis; it does not require enumerating these graphs
during optimization.}. Relabeling latent vertices changes neither the
observed covariance family nor the structural complexity.

We measure structural complexity by
\begin{equation}
    c(G^+)=|E(G^+)|+\ell(G^+),
    \label{eq:complexity}
\end{equation}
where $|E(G^+)|$ counts edges among observed variables and edges from latent
to observed variables, and $\feedbackedit{\ell(G^+):=|L(G^+)|}$ counts latent variables\feedbackedit{;}
\feedbackedit{we write $\ell$ when the graph is fixed.} Each edge
and each latent variable contributes one unit to $c(G^+)$. An edge is
counted whenever it belongs to the graph, even if its fitted coefficient
is zero. \feedbackedit{Its additive form gives the closed-form expected penalty in
Section~\ref{sec:bernoulli}; a dimension-based penalty would require
separate covariance-family dimension calculations.
Appendix~\ref{app:complexity-dimension} explains this distinction with
a two-variable example and discusses the dimension-based alternative.}

\subsection{Observed-data objective and graph score}
\label{sec:observed-score}

Let $X_O^{(1)},\ldots,X_O^{(n)}$ be i.i.d.\ observations from
$\mathcal N(0,\Sigma^*)$, with $\Sigma^*\succ0$. The mean is known to be
zero, so the sample covariance is
$S_n=\frac1n\sum_{k=1}^nX_O^{(k)}X_O^{(k)\top}.$ We consider $n\geq p$, so that $S_n\succ0$ almost surely \feedbackedit{(with
probability one)}. For a candidate
covariance $\Sigma\succ0$, the observed-data Gaussian negative
log-likelihood is
\begin{equation}
    \mathcal L_n^{\mathrm{full}}(\Sigma)
    =\frac n2\bigl\{p\log(2\pi)+\log\det\Sigma
    +\tr(\Sigma^{-1}S_n)\bigr\}.
\label{eq:full-observed-nll}
\end{equation}
The term $np\log(2\pi)/2$ comes from the Gaussian density normalization
and is independent of the graph ($p$ is fixed) and its parameters. The superscript
``full'' indicates that this constant is included.
Appendix~\ref{app:observed-score-derivation} derives this likelihood and
its expression in structural parameters.

For $G^+\in\Gclass$ and $\theta\in\mathcal P(G^+)$, write
$\mathcal L_n^{\mathrm{full}}(\theta)
:=\mathcal L_n^{\mathrm{full}}(\Sigma_O(\theta))$. Our joint objective is
\begin{equation}
    J_n(G^+,\theta)
    =\mathcal L_n^{\mathrm{full}}(\theta)+\lambda_n c(G^+),
    \qquad \lambda_n=\frac12\log n.
    \label{eq:joint-objective}
\end{equation}
We optimize $J_n$ over both $G^+\in\Gclass$ and
$\theta\in\mathcal P(G^+)$. The penalty balances fit against the number
of edges and latent variables. Its scaling follows the Bayesian information
criterion (BIC), but $c(G^+)$ need not equal the covariance-family dimension;
we therefore do not claim a standard BIC interpretation.
{\color{black}In contrast, \citet{amendola2020} use a dimension-based BIC term;
Appendix~\ref{app:amendola-comparison} gives the comparison.}

For theoretical analysis, define the \emph{penalized graph score} by taking
the infimum of the joint objective over the parameters of a fixed graph:
\begin{equation}
    s_n(G^+):=\inf_{\theta\in\mathcal P(G^+)}J_n(G^+,\theta).
    \label{eq:graph-score-definition}
\end{equation}
Minimizing this score over graphs gives the infimum of the joint objective.
This theoretical definition does not require enumerating graphs during
computation; Section~\ref{sec:bernoulli} introduces Bernoulli gates for
joint structure and parameter optimization.

To express this same score in covariance space, define the Gaussian
Kullback--Leibler (KL) divergence for $S,\Sigma\in\PD^p$ by
\begin{equation}
    D(S\Vert\Sigma)=\frac12\{\tr(\Sigma^{-1}S)
    -\log\det(\Sigma^{-1}S)-p\}.
    \label{eq:gaussian-kl}
\end{equation}
This is the KL divergence from $\mathcal N(0,S)$ to
$\mathcal N(0,\Sigma)$; it is nonnegative and equals zero exactly when
$S=\Sigma$. Let
$\delta_n(G^+)=\inf_{\Sigma\in\Mbar(G^+)}D(S_n\Vert\Sigma)$.
Thus $\delta_n(G^+)$ measures the best fit to $S_n$ within the graph's
covariance-family closure. Taking the infimum over this closure gives the
same value as taking it over admissible parameters, even when no parameter
value attains the infimum; Appendix~\ref{app:observed-score-derivation}
proves this equality.

Let $C_n:=\mathcal L_n^{\mathrm{full}}(S_n)$, the negative log-likelihood
evaluated at the sample covariance. Subtracting $C_n$ from
Equation~\ref{eq:full-observed-nll} gives $nD(S_n\Vert\Sigma)$. Thus the
same graph score can be written as
\begin{equation}
    s_n(G^+)=C_n+n\delta_n(G^+)+\lambda_n c(G^+),
    \label{eq:graph-score}
\end{equation}
The constant $C_n$ depends only on the observed data and has no effect on
graph selection. This equivalent expression is used in the consistency
analysis below.

\section{Method and Theory}
\label{sec:method}

We first establish statistical guarantees for global minimizers of the
graph score induced by Equation~\ref{eq:joint-objective}. We then formulate
optimization of the same joint objective using independent Bernoulli gates.

\subsection{Closure consistency and geometric identification}
\label{sec:consistency-result}

Let $G^{*+}\in\Gclass$ be the data-generating graph and choose a
minimum-complexity distribution-equivalent representative
\begin{equation}
    G^\dagger\in\operatorname*{arg\,min}_{G^+\in\Gclass}
    \{c(G^+):G^+\equiv_O G^{*+}\}.
    \label{eq:minimum-representative}
\end{equation}
The minimum exists because $\Gclass$ is finite and contains $G^{*+}$.
\begin{feedbackrevision}
Write $c^\dagger:=c(G^\dagger)$. The reference $G^\dagger$ is not an input
to the estimator. All such representatives have the same family
$\M(G^{*+})$, closure $V(G^{*+})$, and complexity $c^\dagger$, so the
conditions below do not depend on the choice.

\end{feedbackrevision}
Define the set of candidate graphs whose covariance families contain or
can approximate the true covariance arbitrarily closely and whose complexity
is at most $c^\dagger$:
\begin{equation}
    \mathcal G_0(\Sigma^*)
    :=\left\{G^+\in\Gclass:
    \Sigma^*\in\Mbar(G^+),\ c(G^+)\leq c^\dagger\right\}.
    \label{eq:relevant-candidate-set}
\end{equation}

\begin{theorem}[Closure-selection consistency]
\label{thm:closure-consistency}
Suppose $p$ and $L_{\max}$ are fixed and the data are i.i.d. from
$\mathcal N(0,\Sigma^*)$, where $\Sigma^*\in\M(G^{*+})$.  Then
\begin{equation}
    \Pr\!\left\{
    \operatorname*{arg\,min}_{G^+\in\Gclass}s_n(G^+)
    \subseteq\mathcal G_0(\Sigma^*)
    \right\}\longrightarrow1.
    \label{eq:closure-selection-conclusion}
\end{equation}
Equivalently, with probability tending to one, every global minimizer
$\widehat G_n^+$ satisfies
\[
    \Sigma^*\in\Mbar(\widehat G_n^+),
    \qquad c(\widehat G_n^+)\leq c^\dagger.
\]
The same conclusion holds for any deterministic penalty sequence satisfying
$\lambda_n\to\infty$ and $\lambda_n/n\to0$.
\end{theorem}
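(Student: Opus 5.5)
Since $\Gclass$ is finite, it suffices to show that each graph outside $\mathcal G_0(\Sigma^*)$ is, with probability tending to one, strictly beaten by $G^\dagger$; a union bound over finitely many graphs then gives the claim. By Equation~\ref{eq:graph-score} the constant $C_n$ cancels, so for any $G^+$ we compare
\[
s_n(G^+)-s_n(G^\dagger)=n\bigl\{\delta_n(G^+)-\delta_n(G^\dagger)\bigr\}+\lambda_n\bigl\{c(G^+)-c^\dagger\bigr\}.
\]
First, I bound the reference term. Since $\Sigma^*\in\M(G^{*+})=\M(G^\dagger)\subseteq\Mbar(G^\dagger)$, we have $\delta_n(G^\dagger)\le D(S_n\Vert\Sigma^*)$. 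A second-order Taylor expansion of $D(\cdot\Vert\Sigma^*)$ at $\Sigma^*$, where it vanishes with zero gradient, together with $\sqrt n(S_n-\Sigma^*)=O_p(1)$ by the CLT, gives $nD(S_n\Vert\Sigma^*)=O_p(1)$. Hence $n\delta_n(G^\dagger)=O_p(1)$.

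Next come the graphs with $\Sigma^*\notin\Mbar(G^+)$. I claim that $\inf_{\Sigma\in\Mbar(G^+)}D(\Sigma^*\Vert\Sigma)=:\eta>0$. Take a minimizing sequence $\Sigma_k$. If it stays in a compact subset of $\PD^p$, a limit point lies in $\Mbar(G^+)$ (closed relative to $\PD^p$) and has positive divergence from $\Sigma^*$. Otherwise $\lambda_{\max}(\Sigma_k)\to\infty$ or $\lambda_{\min}(\Sigma_k)\to0$, and $D(\Sigma^*\Vert\Sigma_k)\to\infty$ by coercivity of $\tfrac12\{\tr(\Sigma^{-1}\Sigma^*)+\log\det\Sigma-\log\det\Sigma^*-p\}$. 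Then I transfer this bound to $S_n$: on the event $\|S_n-\Sigma^*\|\le\epsilon$, which has probability tending to one by the LLN, $\delta_n(G^+)\ge\eta/2$ for small $\epsilon$. For this I use that $D(S\Vert\Sigma)$ is uniformly continuous in $S$ near $\Sigma^*$ over the relevant $\Sigma$. Here $\Sigma$ may be restricted to a sublevel set, where the eigenvalues of $\Sigma$ are bounded away from $0$ and $\infty$ uniformly in $S$ near $\Sigma^*$, by the same coercivity argument. Consequently
\[
s_n(G^+)-s_n(G^\dagger)\ge n\eta/2-O_p(1)-\lambda_n c^\dagger,
\]
which is positive w.p.\ $\to1$ because $\lambda_n/n\to0$.

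Finally, consider graphs with $\Sigma^*\in\Mbar(G^+)$ but $c(G^+)>c^\dagger$. Here $\delta_n(G^+)\ge0$, so
\[
s_n(G^+)-s_n(G^\dagger)\ge\lambda_n-O_p(1)\to\infty
\]
in probability, since complexities are integers and $\lambda_n\to\infty$. Combining the two cases over the finitely many graphs not in $\mathcal G_0(\Sigma^*)$, with probability tending to one all of them have scores strictly larger than $s_n(G^\dagger)$, so every minimizer lies in $\mathcal G_0(\Sigma^*)$. The argument uses only $\lambda_n\to\infty$ and $\lambda_n/n\to0$, which gives the final sentence of the theorem.

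The main obstacle is the uniform lower bound $\delta_n(G^+)\ge\eta/2$, because $\Mbar(G^+)$ is not compact and the infimum may not be attained. I would handle it via the coercivity/sublevel-set argument above, showing that near-optimal $\Sigma$ for $S_n$ close to $\Sigma^*$ lie in a fixed compact subset of $\PD^p$ on which $D$ is jointly continuous.
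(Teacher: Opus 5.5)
Your proposal is correct and follows essentially the same route as the paper. Both arguments combine three ingredients by a finite union bound over $\Gclass$: the bound $n\delta_n(G^\dagger)\le nD(S_n\Vert\Sigma^*)=O_p(1)$ via a second-order Taylor expansion, a strictly positive KL gap for closures missing $\Sigma^*$ (controlled through compact KL sublevel sets), and the integer-complexity argument with $\lambda_n\to\infty$. The remaining differences are cosmetic: you use the CLT rather than Markov's inequality for $\|S_n-\Sigma^*\|_F=O_p(n^{-1/2})$, a one-sided uniform lower bound on $\delta_n(G^+)$ instead of almost-sure convergence $\delta_n(G^+)\to\Delta(G^+)$, and a per-graph comparison with $G^\dagger$ instead of directly bounding $c(\widehat G_n^+)$ for every minimizer.
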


\begin{feedbackrevision}
The proof is in Appendix~\ref{app:proofs}. This result concerns global
score minimizers, not arbitrary numerical outputs, and requires none of
the identification assumptions below. Recovery up to marginal
quasi-equivalence needs an additional condition.

\end{feedbackrevision}
\begin{assumption}[Identification at the true covariance]
\label{ass:pointwise-identification}
Assume that every $G^+\in\mathcal G_0(\Sigma^*)$ satisfies
$G^+\cong_O G^\dagger$.
\end{assumption}

\begin{corollary}[Consistency up to marginal quasi-equivalence]
\label{cor:quasi-consistency}
Under the conditions of Theorem~\ref{thm:closure-consistency} and
Assumption~\ref{ass:pointwise-identification},
\begin{equation}
    \Pr\!\left\{
    \operatorname*{arg\,min}_{G^+\in\Gclass}s_n(G^+)
    \subseteq\{G^+\in\Gclass:G^+\cong_O G^{*+}\}
    \right\}\longrightarrow1.
    \label{eq:quasi-consistency-conclusion}
\end{equation}
\end{corollary}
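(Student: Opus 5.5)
The corollary follows by chaining Theorem~\ref{thm:closure-consistency} with Assumption~\ref{ass:pointwise-identification} and then transferring quasi-equivalence from $G^\dagger$ to $G^{*+}$.

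Let $A_n$ be the event in Equation~\ref{eq:closure-selection-conclusion}, namely that every global minimizer of $s_n$ lies in $\mathcal G_0(\Sigma^*)$. By Theorem~\ref{thm:closure-consistency}, $\Pr(A_n)\to1$. On $A_n$, every minimizer $\widehat G_n^+$ belongs to $\mathcal G_0(\Sigma^*)$. Assumption~\ref{ass:pointwise-identification} then gives $\widehat G_n^+\cong_O G^\dagger$. It therefore suffices to show the deterministic inclusion
\[
\{G^+\in\Gclass: G^+\cong_O G^\dagger\}\subseteq\{G^+\in\Gclass: G^+\cong_O G^{*+}\}.
\]
Once this holds, the event in Equation~\ref{eq:quasi-consistency-conclusion} contains $A_n$, and its probability tends to one.

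For the inclusion, I would use the choice of $G^\dagger$ in Equation~\ref{eq:minimum-representative}: $G^\dagger\equiv_O G^{*+}$, which means $\M(G^\dagger)=\M(G^{*+})$ as sets. Definition~(\ref{eq:quasi-equivalence}) depends on $G_2^+$ only through the set $\M(G_2^+)$, both in the intersection $\M(G_1^+)\cap\M(G_2^+)$ and in $\dim\M(G_2^+)$. Substituting $\M(G^{*+})$ for $\M(G^\dagger)$ therefore leaves every quantity unchanged. Hence $G^+\cong_O G^\dagger$ holds if and only if $G^+\cong_O G^{*+}$.

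The argument has no analytic obstacle, because all the probabilistic work is carried by Theorem~\ref{thm:closure-consistency}. The one point needing care is the transfer step. Quasi-equivalence is not transitive in general, but here the link between $G^\dagger$ and $G^{*+}$ is exact equality of covariance families, so no transitivity is needed. I would also state explicitly that the argmin is taken over the finite class $\Gclass$, so it is nonempty and the set inclusion is a well-defined event.
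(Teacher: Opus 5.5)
Your proposal is correct and follows the paper's own proof essentially step for step. You take the high-probability event from Theorem~\ref{thm:closure-consistency}, apply Assumption~\ref{ass:pointwise-identification} to every minimizer on that event, and replace $G^\dagger$ by $G^{*+}$ using $\M(G^\dagger)=\M(G^{*+})$ rather than transitivity.
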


The proof is given in Appendix~\ref{app:proof-quasi-consistency}.
Assumption~\ref{ass:pointwise-identification} is sufficient for this
conclusion; necessity is not claimed. The following three assumptions are
sufficient to establish it. They concern inclusions between Zariski
closures, structural complexity, and intersections of covariance families.

\begin{assumption}[Relevant-candidate algebraic faithfulness]
\label{ass:algebraic-faithfulness}
For every $G^+\in\Gclass$ satisfying $c(G^+)\leq c^\dagger$,
\[
    \Sigma^*\in\Mbar(G^+)
    \quad\Longrightarrow\quad
    V(G^\dagger)\subseteq V(G^+).
\]
\end{assumption}

\begin{assumption}[Structural minimality]
\label{ass:structural-minimality}
For every $G^+\in\Gclass$, if
\[
    V(G^\dagger)\subseteq V(G^+)
    \quad\text{and}\quad
    \dim\M(G^\dagger)<\dim\M(G^+),
\]
then $c(G^+)>c^\dagger$.
\end{assumption}

\begin{assumption}[Full-dimensional overlap]
\label{ass:model-overlap}
If $\Sigma^*\in\Mbar(G^+)$, $c(G^+)\leq c^\dagger$, and
$V(G^+)=V(G^\dagger)=V$, then
\begin{equation}
    \dim\bigl(\M(G^+)\cap\M(G^\dagger)\bigr)=\dim V.
    \label{eq:overlap-assumption}
\end{equation}
\end{assumption}

These assumptions connect closure-selection consistency to recovery up to
marginal quasi-equivalence. Lemma~\ref{lem:geometric-identification} gives the proof.
Appendix~\ref{app:assumption-scope} examines the scope of these
assumptions. It shows that algebraic faithfulness and full-dimensional
overlap hold for almost every admissible choice of the true model parameters
(genericity). Structural minimality requires separate
justification; the appendix gives sufficient conditions and a
counterexample showing that it can fail.

\subsection{Bernoulli formulation for joint structure optimization}
\label{sec:bernoulli}

To optimize the joint objective in Equation~\ref{eq:joint-objective} over
edges among observed variables, edges from latent to observed variables,
and the number of latent variables, we reserve $L_{\max}$ slots for
potential latent variables and use mutually independent binary gates:
\begin{equation}
\begin{gathered}
    M_{ij}\sim\operatorname{Bernoulli}(q_{ij})\quad(i\neq j),
    \qquad M_{ii}=q_{ii}=0,\\
    H_{hj}\sim\operatorname{Bernoulli}(r_{hj}),
    \qquad Z_h\sim\operatorname{Bernoulli}(\pi_h).
\end{gathered}
\label{eq:bernoulli-masks}
\end{equation}
Here $M_{ij}$ selects the edge $i\to j$ among observed variables, $Z_h$
activates latent slot $h$, and $H_{hj}$ selects its edge to observed
variable $j$ when that slot is active. The entries of
$Q=(q_{ij})$, $R=(r_{hj})$, and $\pi=(\pi_h)$ lie in $[0,1]$ and are the
probabilities that the corresponding gates equal one. We optimize these
probabilities; the sampled gates remain zero or one.

Let $\vartheta=(W,\Gamma,\Omega_O)$, where
$W\in\mathbb R^{p\times p}$,
$\Gamma\in\mathbb R^{L_{\max}\times p}$, and $\Omega_O\succ0$ is diagonal.
The matrix $W$ contains coefficients for edges among observed variables,
and $\Gamma$ contains latent loadings with latent noise standard deviations
absorbed into them. The same continuous parameters are used for all
sampled configurations. Define
\begin{equation}
\begin{split}
    B_{OO}(M)&=M\od W,\qquad A(M)=I_p-M\od W,\\
    \Psi(H,Z)&=\Omega_O+(H\od\Gamma)^\top
    \diag(Z)(H\od\Gamma).
\end{split}
\label{eq:masked-parameters}
\end{equation}
For each configuration, let $L_n(M,H,Z;\vartheta)$ be the Gaussian
negative log-likelihood in Equation~\ref{eq:full-observed-nll}, evaluated
at $A(M)^{-\top}\Psi(H,Z)A(M)^{-1}$ and with $np\log(2\pi)/2$ removed.
Set $L_n=+\infty$ if $A(M)$ is singular. Let $c(M,H,Z)$ count selected
edges among observed variables, active latent variables, and selected edges
from active latents to observed variables. Appendix~\ref{app:bernoulli-details} gives
their explicit formulas. The exact Bernoulli objective is
\begin{equation}
\begin{split}
    \overline F_n(Q,R,\pi,\vartheta)
    ={}&\mathbb{E}_{M,H,Z}[L_n(M,H,Z;\vartheta)] +\lambda_n\!\left[\sum_{i\neq j}q_{ij}
    +\sum_{h=1}^{L_{\max}}\pi_h
    \left(1+\sum_{j=1}^p r_{hj}\right)\right].
\end{split}
\label{eq:exact-bernoulli-objective}
\end{equation}
The penalty is the exact expected complexity because
$\mathbb E[Z_hH_{hj}]=\pi_hr_{hj}$. Only configurations with positive
probability contribute to the expectation; a singular $A(M)$ in any such
configuration makes the objective infinite. \feedbackedit{If all $q_{ij}\in(0,1)$
for $i\neq j$, every binary observed-edge mask has positive probability,
so $A(M)$ must be invertible for every such mask. For fixed continuous
parameters making every binary $A(M)$ invertible, the objective is
differentiable in the probabilities but need not be jointly convex.
Appendix~\ref{app:bernoulli-details}
gives further details on this.}
Active latent slots with fewer than two children violate the definition
of $\Gclass$. For each finite-likelihood configuration, the appendix
shows how to remove them without changing the covariance or increasing
the score.

\begin{proposition}[Global exactness of the Bernoulli formulation]
\label{prop:bernoulli-exactness}
Let
$\feedbackedit{F_n^{\mathrm{bin}}}(M,H,Z;\vartheta)=L_n(M,H,Z;\vartheta)
+\lambda_nc(M,H,Z)$.  Then
\begin{equation}
    \inf_{Q,R,\pi,\vartheta}
    \mathbb{E}[\feedbackedit{F_n^{\mathrm{bin}}}(M,H,Z;\vartheta)]
    =\min_{M,H,Z}\inf_{\vartheta}
    \feedbackedit{F_n^{\mathrm{bin}}}(M,H,Z;\vartheta).
    \label{eq:global-exactness}
\end{equation}
The common value equals $\min_{G^+\in\Gclass}s_n(G^+)-np\log(2\pi)/2$.
After removing inactive slots and active slots with fewer than two children,
both formulations have the same minimizing graphs. If the left infimum is
attained, a global minimizer with binary $Q$, $R$, and $\pi$ exists.
\end{proposition}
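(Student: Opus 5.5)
The argument rests on the fact that an expectation of a function over a product of Bernoulli distributions is a convex combination of that function's values at the binary configurations. Write
\[
m^\star:=\min_{M,H,Z}\inf_{\vartheta}F_n^{\mathrm{bin}}(M,H,Z;\vartheta).
\]
The minimum over configurations exists because there are finitely many binary triples.

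\textbf{Lower bound.} For fixed $(Q,R,\pi,\vartheta)$, the expectation is a finite sum
\[
\mathbb E[F_n^{\mathrm{bin}}]=\sum_{(M,H,Z)}w(M,H,Z)\,F_n^{\mathrm{bin}}(M,H,Z;\vartheta),
\]
with nonnegative weights summing to one. Each term satisfies $F_n^{\mathrm{bin}}(M,H,Z;\vartheta)\ge\inf_{\vartheta'}F_n^{\mathrm{bin}}(M,H,Z;\vartheta')\ge m^\star$. This also holds when some term equals $+\infty$, because then the expectation is $+\infty$. Hence the left side of Equation~\ref{eq:global-exactness} is at least $m^\star$.

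\textbf{Upper bound.} Fix a minimizing configuration $(M^\star,H^\star,Z^\star)$ and a sequence $\vartheta_k$ whose values approach its infimum. Choose the point masses $Q=M^\star$, $R=H^\star$, $\pi=Z^\star$. The expectation is then exactly $F_n^{\mathrm{bin}}(M^\star,H^\star,Z^\star;\vartheta_k)$, which tends to $m^\star$. This gives equality.

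\textbf{Attainment.} Suppose the left infimum is attained at $(Q,R,\pi,\vartheta)$. Equality in the convex-combination bound forces every configuration with positive weight to satisfy $F_n^{\mathrm{bin}}(M,H,Z;\vartheta)=m^\star$. Take any one of them and replace the probabilities by its point mass. The result is a binary global minimizer.

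\textbf{Link to the graph score.} It remains to identify $m^\star$ with $\min_{G^+}s_n(G^+)-np\log(2\pi)/2$, and this is where I expect the main work.

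First direction. Any $G^+\in\Gclass$ with admissible $\theta$ is realized by a configuration:
\begin{itemize}
\item set $M$ to the observed-edge indicator and $W=B_{OO}$;
\item activate $\ell$ slots with $Z$;
\item set $H$ to the latent-edge indicator;
\item absorb the latent noise by setting $\Gamma_h=\Omega_{L,hh}^{1/2}\Lambda_h$.
\end{itemize}
Then $\Psi(H,Z)$ equals Equation~\ref{eq:effective-noise} and $c(M,H,Z)=c(G^+)$, so the configuration infimum is at most $s_n(G^+)-np\log(2\pi)/2$.

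Converse direction. Take any configuration with finite likelihood. I remove redundant slots in three steps, each of which leaves the covariance unchanged and lowers the complexity:
\begin{itemize}
\item Drop inactive slots; they contribute neither covariance nor cost.
\item Drop active slots with no selected children; this lowers $c$ by one.
\item For an active slot with exactly one child $j$, delete the slot and replace $\Omega_{O,jj}$ by $\Omega_{O,jj}+\Gamma_{hj}^2$. This preserves $\Psi$ and lowers $c$ by two.
\end{itemize}
Next, handle slots with $\Gamma_{hj}=0$ on selected edges. If an edge is kept, it still counts in $c(G^+)$, which is allowed since graphs only permit nonzero coefficients. Otherwise, to obtain $\Omega_{L}\succ0$ with a nonzero loading normalization, set $\Omega_{L,hh}=1$ and $\Lambda_h=\Gamma_h$. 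Zero rows are admissible because $\Lambda$ may vanish.

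After these steps the configuration defines a graph in $\Gclass$ whose complexity is at most $c(M,H,Z)$, together with an admissible $\theta$ with the same $\Sigma_O$. Finiteness of the likelihood ensures that $A(M)$ is invertible, so $\det(I_p-B_{OO})\ne0$. Hence $s_n(G^+)-np\log(2\pi)/2\le F_n^{\mathrm{bin}}$, and the two minima coincide.

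Minimizing graphs correspond under this reduction. A minimizing configuration cannot contain a one-child or childless active slot, since removing it would strictly decrease the score. Inactive slots are simply discarded. Therefore the reduced minimizing configurations are exactly the minimizing graphs, up to relabeling of latent slots, which by Section~\ref{sec:candidate-class} changes neither the covariance family nor the complexity.
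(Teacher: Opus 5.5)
Your proposal is correct and follows essentially the same route as the paper: a convex-combination lower bound, a point-mass upper bound, and a support-point argument for attainment. It then identifies the binary optimum with $\min_{G^+}s_n(G^+)-np\log(2\pi)/2$ by the same per-configuration slot reduction (dropping inactive and childless slots, absorbing one-child slots into $\Omega_O$) together with the mask representation of each graph in $\Gclass$; your attainment step is slightly sharper, showing that every support configuration is optimal rather than just one, and your claim that minimizing configurations contain no childless or one-child active slots tacitly uses $\lambda_n>0$.
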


{\color{black}
\begin{remark} (Significance of global exactness).
The Bernoulli formulation
supports gradient-based joint optimization of structure probabilities and
continuous parameters.
Proposition~\ref{prop:bernoulli-exactness} ensures that introducing these
continuous probabilities preserves the global infimum of the corresponding
binary objective. This exactness concerns
the Bernoulli expectation. Appendix~\ref{app:bernoulli-details} proves the result.
\end{remark}}

\section{Experiments}
\label{sec:experiments}

We evaluate the proposed method on controlled synthetic linear Gaussian SCMs and on simulated GeneNetWeaver (GNW) benchmarks \citep{schaffter2011}. 
We compare against Ghassami et al.~\citep{ghassami2020}, DCCD-CONF \citep{sethuraman2025}, and Am\'endola et al.~\citep{amendola2020}. All methods evaluated at a shared setting receive the same generated data and trial seeds.

\subsection{Setup and evaluation}
\label{sec:experiment-setup}

Synthetic experiments use $n=10{,}000$ observational samples. Observed graphs are generated following the procedure used by \citet{ghassami2020}; for confounded settings, we augment them with randomly generated exogenous latent common causes (Appendix \ref{app:experimental-details}). Our gradient-based optimization algorithm is explained in Appendix \ref{sec:concrete}.

We evaluate performance by comparing the covariance families of the
reference graph $G^{*+}$ and the recovered graph $\widehat G^+$.
For the forward discrepancy $d_{\mathrm{fwd}}$, we generate covariance
matrices $\Sigma_k^*$ using different admissible parameter values of
the reference graph. We then fit the recovered graph's parameters to
each covariance matrix while keeping its structure fixed.
For the reverse discrepancy $d_{\mathrm{rev}}$, we generate covariance
matrices from the recovered graph and fit the reference graph's
parameters to each matrix. Each discrepancy is the largest fitting
error across the evaluated covariance matrices. We report
$d_{\mathrm{fwd}}+d_{\mathrm{rev}}$ as the primary bidirectional
compatibility error; lower values indicate better agreement at these
covariances. Appendix~\ref{app:compatibility-evaluation} provides the exact procedure. SHD measures agreement with one particular graph representation, whereas our compatibility error allows structurally different graphs that are distributionally equivalent. An equivalence-aware version of SHD requires characterizing the relevant equivalent graphs or searching over them, which is computationally impractical here.

\subsection{Synthetic Results}
\label{sec:experiment-size}

\begin{figure}[t]
    \centering
    \begin{subfigure}[b]{0.32\linewidth}
        \centering
        \includegraphics[width=\linewidth]{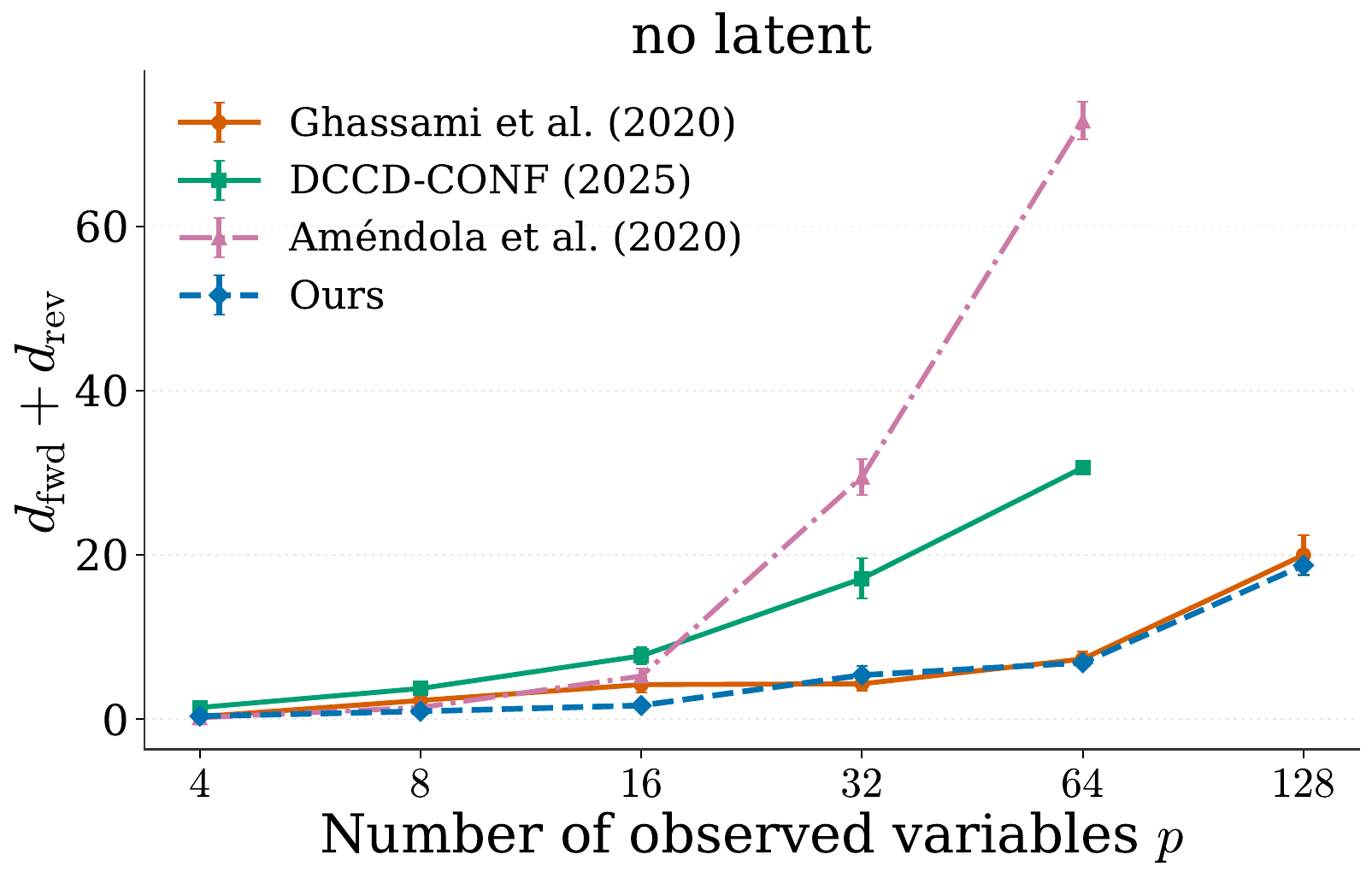}
        \caption{No latents, degree 4.}
    \end{subfigure}
    \hfill
    \begin{subfigure}[b]{0.32\linewidth}
        \centering
        \includegraphics[width=\linewidth]{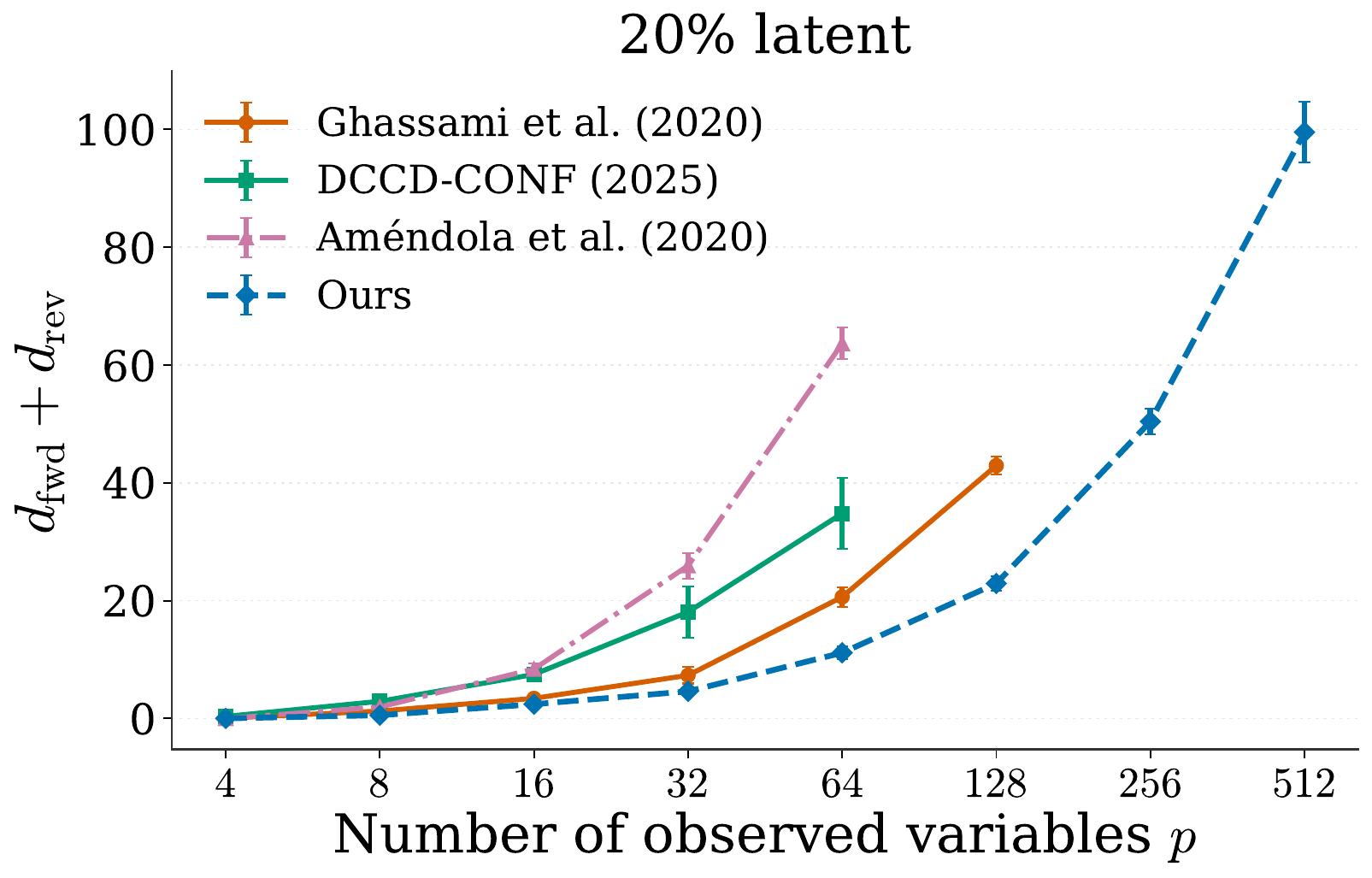}
        \caption{$20\%$ latent, degree 4.}
    \end{subfigure}
    \hfill
    \begin{subfigure}[b]{0.32\linewidth}
        \centering
        \includegraphics[width=\linewidth]{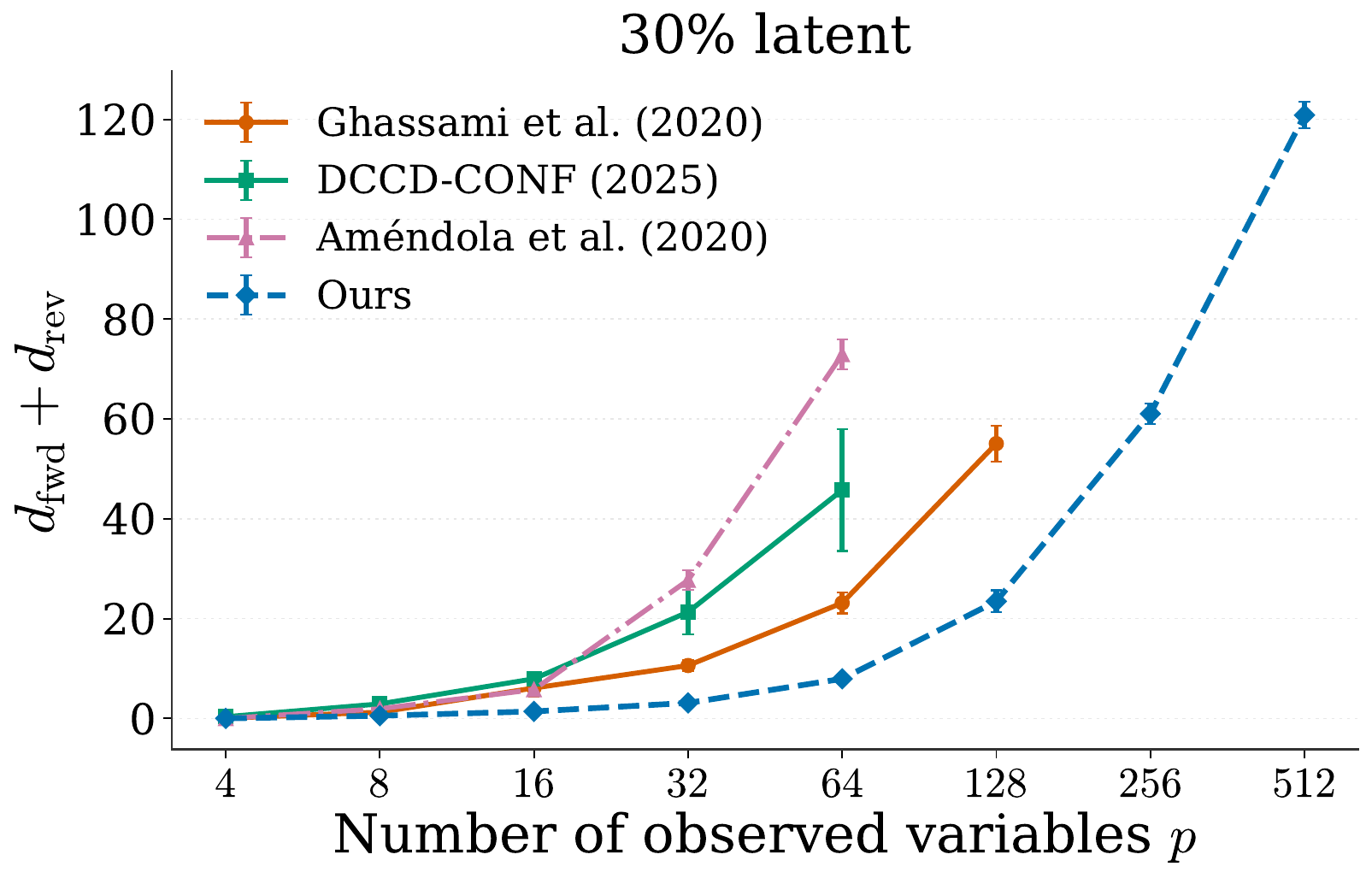}
        \caption{$30\%$ latent, degree 4.}
    \end{subfigure}

    \vspace{1mm}

    \begin{subfigure}[b]{0.32\linewidth}
        \centering
        \includegraphics[width=\linewidth]{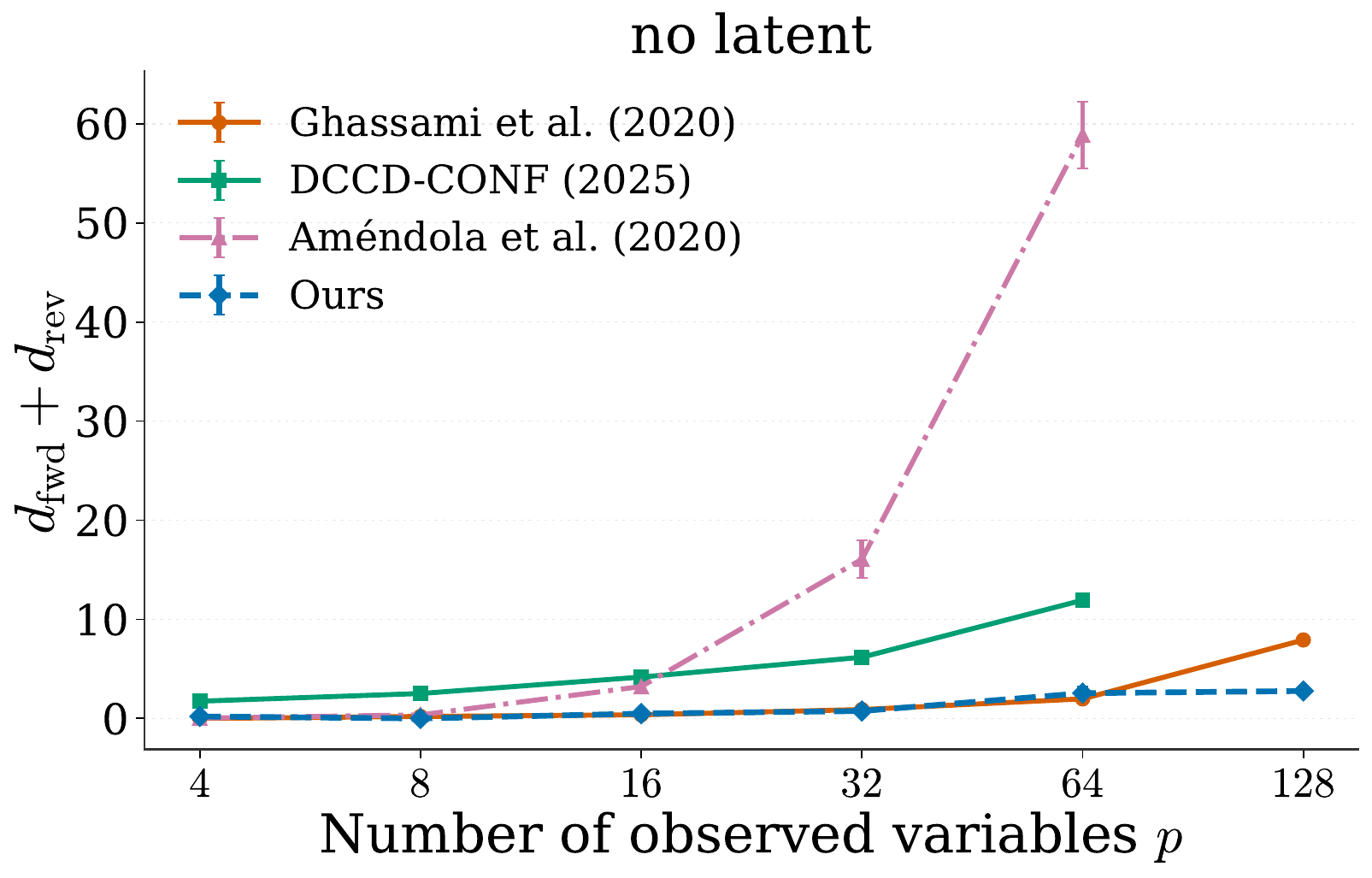}
        \caption{No latents, degree 2.}
    \end{subfigure}
    \hfill
    \begin{subfigure}[b]{0.32\linewidth}
        \centering
        \includegraphics[width=\linewidth]{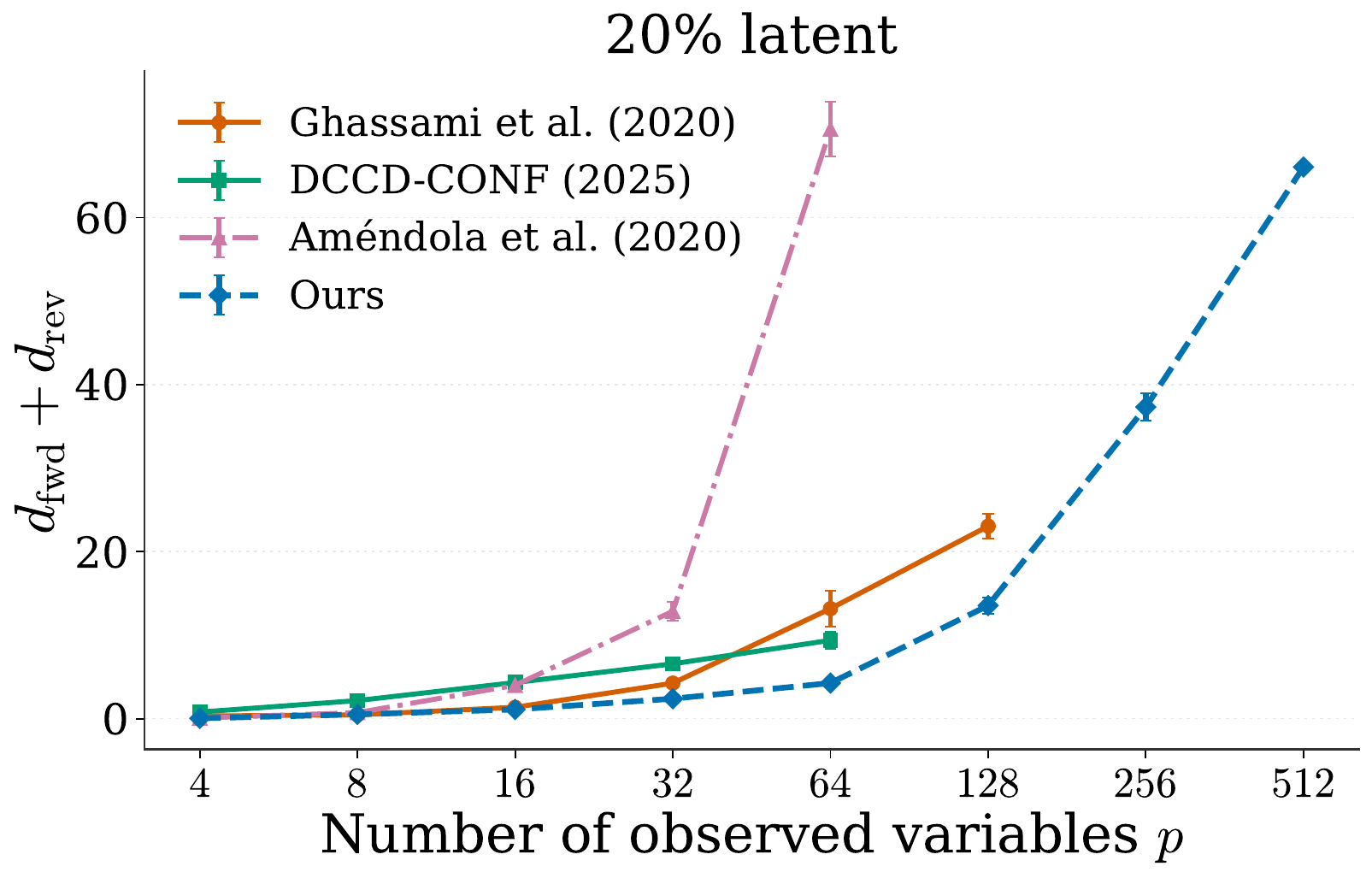}
        \caption{$20\%$ latent, degree 2.}
    \end{subfigure}
    \hfill
    \begin{subfigure}[b]{0.32\linewidth}
        \centering
        \includegraphics[width=\linewidth]{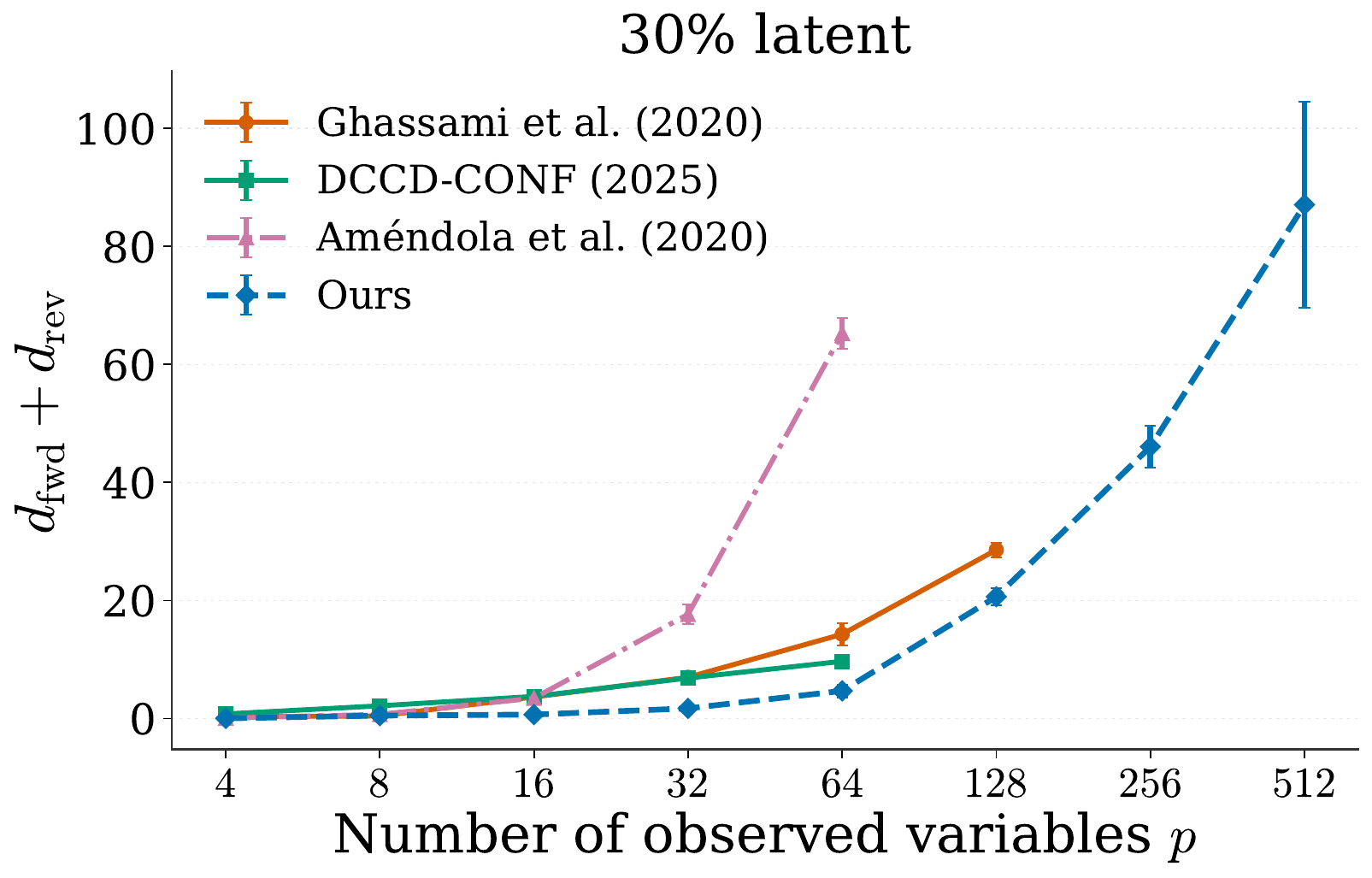}
        \caption{$30\%$ latent, degree 2.}
    \end{subfigure}

    \caption{Synthetic results across 10 trials. Compatibility error for varying number of observed variables, the maximum observed degree, and the latent-to-observed ratio. }
    \label{fig:scaling-size}
\end{figure}

\begin{wrapfigure}{r}{0.40\columnwidth}
    \centering
    \vspace{-3\baselineskip}
    \includegraphics[width=\linewidth]{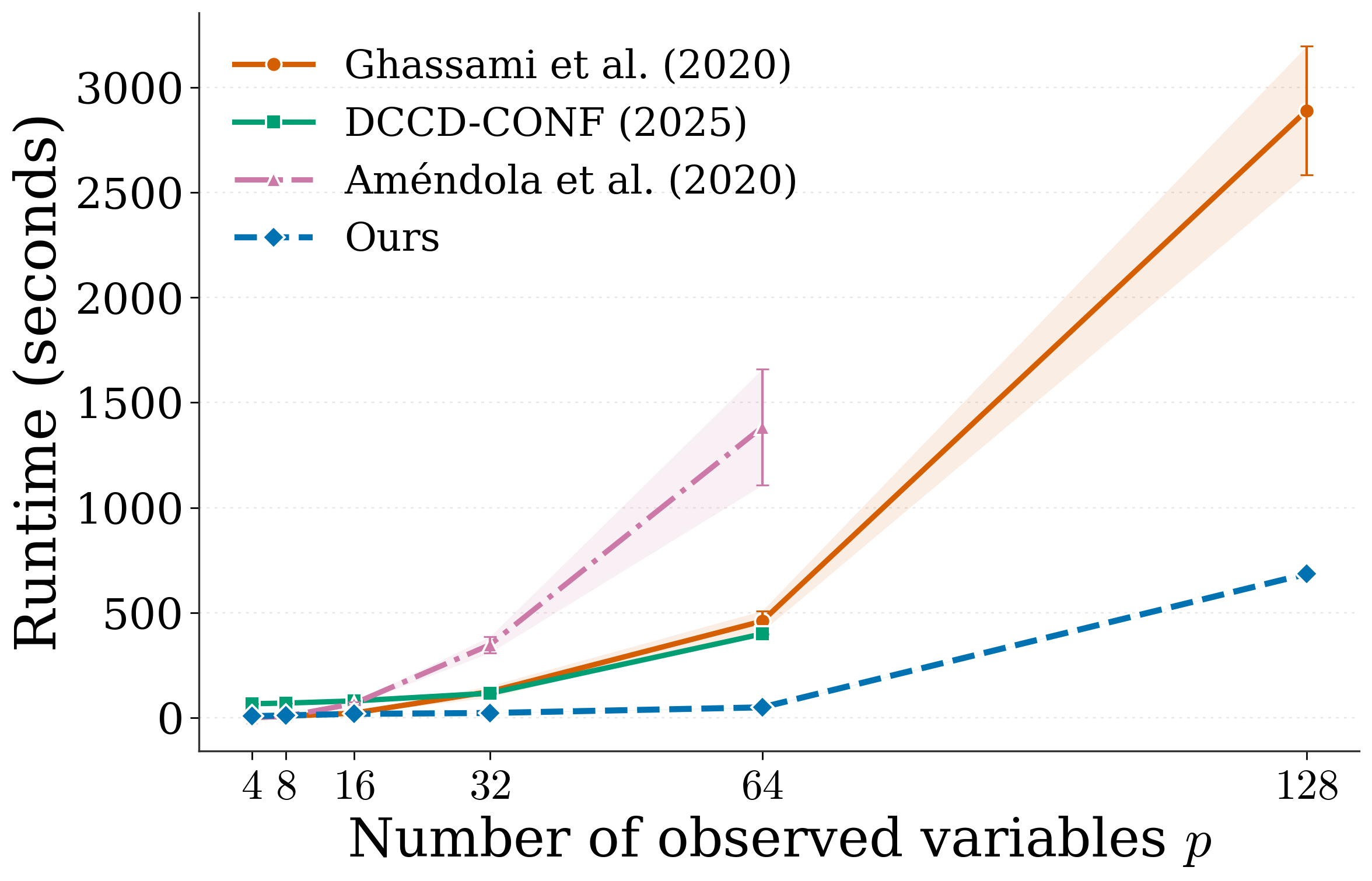}
    \caption{Wall-clock runtime.}
    \label{fig:runtime-scaling}
    \vspace{-1\baselineskip}
\end{wrapfigure}

Figure~\ref{fig:scaling-size} presents synthetic results across different
numbers of observed variables, maximum observed degrees, and ratios of
latent to observed variables.
Our method achieves lower mean compatibility error in several of the larger latent-variable settings where baseline results are available. Each baseline is evaluated up to the largest graph size for which its computation remains feasible. Our method is additionally evaluated at $p=256$ and $p=512$ to demonstrate scalability to substantially larger graphs. Appendix~\ref{sec:experiment-stress} provides more experiments over all degree and latent-to-observed ratio combinations. 
Figure~\ref{fig:runtime-scaling} compares wall-clock runtime.  For the baselines,
we could not report the wall-clock beyond $p=128$,
because they did not complete within our computational budget.
Our method additionally averaged $4544$ seconds at $p=256$ and $24770$ seconds at $p=512$, demonstrating scalability to substantially larger graphs.

\subsection{GeneNetWeaver benchmark}
\label{sec:experiment-gnw}

We additionally evaluate fixed GNW networks with 100 observed variables and either 20 or 30 explicitly constructed exogenous latent variables. GNW generates observations using gene-regulatory dynamics rather than our assumed linear Gaussian model, providing a complementary test under model misspecification while retaining a known cyclic graph structure. 
Table~\ref{tab:gnw-results} reports the compatibility error results. Across both latent settings, our method achieves a substantially lower total compatibility error, approximately half that of the closest baseline. It is also markedly faster than all baselines in both latent settings. See Appendix~\ref{app:gnw-details} for more details.

\begin{table}[t]
    \centering
    \caption{GeneNetWeaver results with 100 observed variables. Values are mean $\pm$ SEM across 10 trials.}
    \label{tab:gnw-results}
    \setlength{\tabcolsep}{2.8pt}
    \renewcommand{\arraystretch}{1.12}
    \begin{tabular}{@{}clcccc@{}}
        \toprule
        Latents & Method
        & $d_{\mathrm{fwd}}\downarrow$
        & $d_{\mathrm{rev}}\downarrow$
        & $(d_{\mathrm{fwd}}+d_{\mathrm{rev}})\downarrow$
        & Time (s)$\downarrow$ \\
        \midrule
        \multirow{4}{*}{20} & \citet{ghassami2020}
        & $16.57\pm0.23$
        & $22.78\pm2.87$
        & $39.35\pm2.71$
        & $615.40\pm45.48$ \\
        & DCCD-CONF
        & $28.01\pm1.67$
        & $10.03\pm0.43$
        & $38.05\pm1.91$
        & $1334.86\pm0.55$ \\
        & \citet{amendola2020}
        & $22.97\pm0.24$
        & $93.54\pm7.43$
        & $116.51\pm7.65$
        & $1464.31\pm166.35$ \\
        & Ours
        & $\mathbf{16.33\pm0.25}$
        & $\mathbf{3.31\pm0.13}$
        & $\mathbf{19.63\pm0.24}$
        & $\mathbf{47.92\pm0.07}$ \\
        \midrule
        \multirow{4}{*}{30} & \citet{ghassami2020}
        & $\mathbf{16.28\pm0.59}$
        & $24.75\pm1.59$
        & $41.03\pm1.80$
        & ${743.09\pm58.81}$ \\
        & DCCD-CONF
        & $36.61\pm4.93$
        & $11.12\pm0.93$
        & $47.73\pm5.08$
        & $1352.79\pm8.69$ \\
        & \citet{amendola2020}
        & $25.46\pm1.59$
        & $98.94\pm2.90$
        & $124.40\pm3.57$
        & $1822.18\pm308.10$ \\
        & Ours
        & $18.27\pm0.74$
        & $\mathbf{3.00\pm0.27}$
        & $\mathbf{21.27\pm0.73}$
        & $\mathbf{122.54\pm1.87}$ \\
        \bottomrule
    \end{tabular}
\end{table}

\begin{feedbackrevision}
\section{Conclusion}
\label{sec:conclusion}

We developed a framework for learning cyclic linear Gaussian models with exogenous latent confounders and established consistency up to marginal quasi-equivalence under explicit identification conditions.
Our Bernoulli formulation for cyclic Gaussian models with explicit latent
variables preserves the corresponding binary objective's global infimum.
Our recovery guarantee assumes identification conditions and global score
minimization; it does not imply unique graph or latent-count recovery.
Broader structural-minimality conditions and numerical optimization
guarantees remain open problems.

\end{feedbackrevision}
\clearpage

\begingroup
\let\feedbackOriginalBibitem\bibitem
\renewcommand{\bibitem}[2][]{\feedbackOriginalBibitem[#1]{#2}\color{black}\def\feedbackKey{#2}\def\feedbackHyttinen{hyttinen2012}\def\feedbackForre{forre2018}\def\feedbackSpot{ma2024spot}\def\feedbackBoyd{boyd2004}\ifx\feedbackKey\feedbackHyttinen\color{black}\fi
  \ifx\feedbackKey\feedbackForre\color{black}\fi
  \ifx\feedbackKey\feedbackSpot\color{black}\fi
  \ifx\feedbackKey\feedbackBoyd\color{black}\fi
}
\bibliography{iclr2027_conference}
\bibliographystyle{iclr2027_conference}
\endgroup

\newpage
\appendix

\section{Proofs}
\label{app:proofs}

\subsection{Proof of Lemma~\ref{lem:marginal-covariance}}
\label{app:proof-marginal-covariance}

\begin{proof}
Substituting the block matrices in Equation~\ref{eq:block-sem} into the
structural equations gives the two block equations
\[
    X_O=B_{OO}^\top X_O+\Lambda^\top X_L+N_O,
    \qquad
    X_L=N_L.
\]
The second identity follows because latent variables have no parents.  Using
$X_L=N_L$ in the observed equation and collecting the terms involving $X_O$
on the left gives
\[
    (I_p-B_{OO}^\top)X_O=\Lambda^\top N_L+N_O.
\]
By assumption, $I_p-B_{OO}$ is invertible.  Therefore,
\begin{equation}
    X_O=(I_p-B_{OO})^{-\top}(\Lambda^\top N_L+N_O).
    \label{eq:proof-observed-solution}
\end{equation}
The vector on the right is Gaussian and has mean zero.  Moreover, the block
diagonality of $\Omega$ implies that $N_O$ and $N_L$ are independent, so the
two cross-covariance terms vanish.  Consequently,
\begin{align*}
    \operatorname{Cov}(\Lambda^\top N_L+N_O)
    &={\Lambda^\top}\operatorname{Cov}(N_L)\Lambda
      +\operatorname{Cov}(N_O)\\
    &=\Lambda^\top\Omega_L\Lambda+\Omega_O
      =\Psi.
\end{align*}
Taking the covariance of both sides of
Equation~\ref{eq:proof-observed-solution} now yields
\[
    \operatorname{Cov}(X_O)
    =(I_p-B_{OO})^{-\top}\Psi(I_p-B_{OO})^{-1},
\]
which is Equation~\ref{eq:marginal-covariance}.

It remains to verify positive definiteness. Let $\Omega_L^{1/2}$ be the
diagonal matrix obtained by taking the positive square root of each latent
noise variance, and set $C=\Omega_L^{1/2}\Lambda\in\mathbb R^{\ell\times p}$.
The latent covariance contribution has the factorization
\[
    \Lambda^\top\Omega_L\Lambda
    =(\Omega_L^{1/2}\Lambda)^\top(\Omega_L^{1/2}\Lambda)
    =C^\top C.
\]
For every $v\in\mathbb R^p$,
$v^\top C^\top Cv=(Cv)^\top(Cv)\geq0$, so this contribution is positive
semidefinite. Since $C$ has $\ell$ rows and $p$ columns,
\[
    \operatorname{rank}(\Lambda^\top\Omega_L\Lambda)
    =\operatorname{rank}(C^\top C)
    \leq\operatorname{rank}(C)\leq\min(\ell,p).
\]
Because $\Omega_O\succ0$, every nonzero $v\in\mathbb R^p$ satisfies
\[
    v^\top\Psi v=v^\top\Omega_Ov+(Cv)^\top(Cv)>0.
\]
Thus $\Psi\succ0$, regardless of the rank of $\Lambda$. When $\ell=0$,
the latent contribution and its rank are zero, and $\Psi=\Omega_O\succ0$.
Finally, write $A=I_p-B_{OO}$. Its invertibility implies
$A^{-1}v\neq0$ whenever $v\neq0$, and hence
\[
    v^\top\Sigma_Ov=(A^{-1}v)^\top\Psi(A^{-1}v)>0.
\]
Therefore $\Sigma_O\succ0$, completing the proof.
\end{proof}

\subsection{Derivation of the observed-data score}
\label{app:observed-score-derivation}

We first derive the full Gaussian likelihood in
Equation~\ref{eq:full-observed-nll}, then express it in structural
parameters and establish the KL representation of the graph score in
Equation~\ref{eq:graph-score}.
Let $\mathbf X_O\in\mathbb R^{n\times p}$ have $k$th row
$X_O^{(k)\top}$. Throughout, the mean is known to be zero, so
$\mathbf X_O^\top\mathbf X_O=nS_n$ without sample-mean subtraction. Under
the nonsingular Gaussian sampling model, $\mathbf X_O$ has full column rank
almost surely when $n\geq p$, and hence $S_n\succ0$.

\paragraph{Gaussian likelihood.}
For a candidate covariance $\Sigma\succ0$, the density of one observation
$x\in\mathbb R^p$ is
\[
    f_\Sigma(x)=(2\pi)^{-p/2}(\det\Sigma)^{-1/2}
    \exp\!\left(-\frac12x^\top\Sigma^{-1}x\right).
\]
Independence of the observations makes their joint density a product.
Taking its negative logarithm gives
\begin{align*}
    \mathcal L_n^{\mathrm{full}}(\Sigma)
    &=-\sum_{k=1}^n\log f_\Sigma(X_O^{(k)})\\
    &=\frac{np}{2}\log(2\pi)+\frac n2\log\det\Sigma
      +\frac12\sum_{k=1}^nX_O^{(k)\top}\Sigma^{-1}X_O^{(k)}.
\end{align*}
Each observation contributes $p\log(2\pi)/2$ through the Gaussian
normalization factor, giving $np\log(2\pi)/2$ in total. This term depends
only on $n$ and the observed dimension $p$, not on the graph, the latent
count, or any continuous parameter. Using
$x^\top\Sigma^{-1}x=\tr(\Sigma^{-1}xx^\top)$ gives
\[
    \mathcal L_n^{\mathrm{full}}(\Sigma)
    =\frac{np}{2}\log(2\pi)+\frac n2\log\det\Sigma
     +\frac12\tr(\Sigma^{-1}\mathbf X_O^\top\mathbf X_O).
\]
Substituting $\mathbf X_O^\top\mathbf X_O=nS_n$ yields
Equation~\ref{eq:full-observed-nll}. As in the main text, we also write
$\mathcal L_n^{\mathrm{full}}(\theta)
=\mathcal L_n^{\mathrm{full}}(\Sigma_O(\theta))$.

\paragraph{Likelihood in structural parameters.}
For a fixed graph, write $A=I_p-B_{OO}$ and
$\Psi=\Omega_O+\Lambda^\top\Omega_L\Lambda$. By
Equation~\ref{eq:marginal-covariance},
\[
    \Sigma_O(\theta)=A^{-\top}\Psi A^{-1},\qquad
    \Sigma_O(\theta)^{-1}=A\Psi^{-1}A^\top,
\]
and
\[
    \log\det\Sigma_O(\theta)=\log\det\Psi-2\log|\det A|.
\]
Cyclic invariance of the trace gives
\begin{align*}
    \tr\!\left[\Sigma_O(\theta)^{-1}
        \mathbf X_O^\top\mathbf X_O\right]
    &=\tr\!\left[\Psi^{-1}A^\top
        \mathbf X_O^\top\mathbf X_O A\right]\\
    &=\tr\!\left[\Psi^{-1}
        (\mathbf X_OA)^\top(\mathbf X_OA)\right].
\end{align*}
Removing the Gaussian constant does not change the minimizing graphs or
parameters. Define the constant-free negative log-likelihood by
\begin{equation}
\begin{split}
    \mathcal L_n(\theta)
    &:=\mathcal L_n^{\mathrm{full}}(\theta)-\frac{np}{2}\log(2\pi)\\
    &=-n\log|\det A|+\frac n2\log\det\Psi
      +\frac12\tr\!\left[\Psi^{-1}
        (\mathbf X_OA)^\top(\mathbf X_OA)\right].
\end{split}
\label{eq:observed-nll}
\end{equation}
The second equality follows by substituting the preceding identities into
the full likelihood. Equivalently, the last term is
$\frac n2\tr(\Psi^{-1}A^\top S_nA)$, the form used for the masked
likelihood in Equation~\ref{eq:masked-likelihood}.
For one observation $x$, the vector
$r=A^\top x=x-B_{OO}^\top x$ is its \emph{structural residual}: what
remains after subtracting the observed-parent contributions in the
structural equations. Under the candidate SCM,
$A^\top X_O=\Lambda^\top N_L+N_O$ has covariance $\Psi$.
The rows of $\mathbf X_OA$ are therefore the transposes of these residuals.
The trace term sums their Gaussian quadratic terms
$r^\top\Psi^{-1}r/2$, and $\frac n2\log\det\Psi$ comes from the
Gaussian normalization. The linear change of variables $x\mapsto A^\top x$
multiplies volume by $|\det A|$. Consequently, converting the residual
density to the observation density contributes $-\log|\det A|$ to each
negative log-likelihood, giving $-n\log|\det A|$ in total.
These identities require only invertibility of $A$, not acyclicity.
The latent covariance contribution remains
$\Lambda^\top\Omega_L\Lambda$, with the loading restrictions specified
by the graph; $\Psi$ is not treated as an unrestricted covariance matrix.

\paragraph{KL representation.}
Expanding Equation~\ref{eq:gaussian-kl} gives
\[
    nD(S_n\Vert\Sigma)
    =\frac n2\{\tr(\Sigma^{-1}S_n)
      +\log\det\Sigma-\log\det S_n-p\}.
\]
Evaluating the full likelihood at $\Sigma=S_n$ gives the constant
\[
    C_n=\mathcal L_n^{\mathrm{full}}(S_n)
    =\frac n2\{p\log(2\pi)+\log\det S_n+p\}.
\]
Consequently,
\[
    \mathcal L_n^{\mathrm{full}}(\Sigma)=C_n+nD(S_n\Vert\Sigma).
\]
Here $C_n$ includes the Gaussian normalization term and the value of the
remaining likelihood terms at $S_n$. It is independent of the graph and
its parameters, so optimizing the likelihood is equivalent to minimizing
$D(S_n\Vert\Sigma)$, the KL divergence between the two zero-mean Gaussian
distributions with covariances $S_n$ and $\Sigma$.

\paragraph{Equality of the likelihood infima.}
For every admissible parameter tuple,
\[
    \mathcal L_n(\theta)
    =K_n+nD\bigl(S_n\Vert\Sigma_O(\theta)\bigr),
    \qquad
    K_n=\frac n2\{\log\det S_n+p\}.
\]
By definition, $\M(G^+)$ is exactly the image of $\mathcal P(G^+)$ under
the covariance map. Thus taking the infimum over admissible parameters
equals taking the infimum over $\M(G^+)$. To pass to its relative closure,
first note that set inclusion gives
\[
    \inf_{\Sigma\in\Mbar(G^+)}D(S_n\Vert\Sigma)
    \leq\inf_{\Sigma\in\M(G^+)}D(S_n\Vert\Sigma).
\]
Conversely, for any $\overline\Sigma\in\Mbar(G^+)$, there is a sequence
$\Sigma_k\in\M(G^+)$ converging to $\overline\Sigma\succ0$. Continuity of
$D(S_n\Vert\cdot)$ on $\PD^p$ implies
\[
    \inf_{\Sigma\in\M(G^+)}D(S_n\Vert\Sigma)
    \leq \lim_{k\to\infty}D(S_n\Vert\Sigma_k)
    =D(S_n\Vert\overline\Sigma).
\]
Taking the infimum over $\overline\Sigma\in\Mbar(G^+)$ proves the reverse
inequality. Consequently,
\begin{equation}
    \inf_{\theta\in\mathcal P(G^+)}\mathcal L_n(\theta)
    =K_n+n\inf_{\Sigma\in\Mbar(G^+)}D(S_n\Vert\Sigma)
    =K_n+n\delta_n(G^+).
    \label{eq:parameter-covariance-profile}
\end{equation}
Adding $\lambda_n c(G^+)$ and the omitted Gaussian constant gives
$\inf_{\theta\in\mathcal P(G^+)}J_n(G^+,\theta)=s_n(G^+)$ as defined
in Equation~\ref{eq:graph-score-definition}, and proves its equivalent
expression in Equation~\ref{eq:graph-score}. No attainment of the infimum
by admissible SCM parameters is required.

Every graph admits the covariance $I_p$, obtained by setting all edge
coefficients to zero and all noise variances to one. Hence
$0\leq\delta_n(G^+)\leq D(S_n\Vert I_p)<\infty$. Since $\Gclass$ is
finite, the graph score has at least one global minimizer even if a
particular graph's parameter infimum is not attained.

The choice $\lambda_n=\tfrac12\log n$ gives BIC-like scaling. The
consistency proof uses only $\lambda_n\to\infty$ and $\lambda_n/n\to0$:
the penalty eventually distinguishes unnecessarily complex candidates,
while a fixed positive population KL gap contributes an order-$n$
likelihood disadvantage that dominates any fixed complexity advantage.
This argument does not identify $c(G^+)$ with the covariance-family
dimension or require the regularity conditions of a standard BIC derivation.

\subsection{Algebraic and semialgebraic sets and their dimensions}
\label{app:geometry-definitions}

This subsection gives the formal definitions underlying
Section~\ref{sec:notation}. We work with subsets of $\mathbb R^m$ and
polynomials with real coefficients; $\mathbb R[x_1,\ldots,x_m]$ denotes
the set of such polynomials. Symmetric matrices are identified with vectors
of their upper-triangular entries. We first specify the topological terms
used in the proofs. The algebraic and dimension definitions that follow
include references to their sources. We temporarily use
$\dim_{\mathrm{sa}}$ and $\dim_{\mathrm{alg}}$ to distinguish the two
notions of dimension.

\paragraph{Open sets, interior, and neighborhoods.}
\label{app:topology-notation}
For $a\in\mathbb R^m$ and $\varepsilon>0$, the \emph{open ball} of
radius $\varepsilon$ centered at $a$ is
$B_\varepsilon(a)=\{x\in\mathbb R^m:\|x-a\|_2<\varepsilon\}$,
where $\|x\|_2=(\sum_i x_i^2)^{1/2}$ is the Euclidean norm of a vector.
A set $U\subseteq\mathbb R^m$ is \emph{Euclidean open} if, for every
$x\in U$, there exists $\varepsilon>0$ such that
$B_\varepsilon(x)\subseteq U$. Thus openness requires a ball around
every point of $U$, not merely one ball somewhere in $U$.
For $S\subseteq V\subseteq\mathbb R^m$, the \emph{interior of $S$
relative to $V$} is
\[
    \operatorname{int}_V S
    :=\{x\in S:\text{there exists }\varepsilon>0
       \text{ such that }B_\varepsilon(x)\cap V\subseteq S\}.
\]
A set $S\subseteq V$ is \emph{open relative to $V$} if
$\operatorname{int}_V S=S$. When $V=\mathbb R^m$, these definitions give
the ordinary Euclidean interior and openness. A \emph{neighborhood} of
$x\in V$ relative to $V$ is a subset of $V$ containing
$B_\varepsilon(x)\cap V$ for some $\varepsilon>0$; the neighborhood
itself need not be open. A set $S$ is \emph{closed relative to $V$} if
$\cl_V S=S$, using the closure defined in Section~\ref{sec:notation}.
For symmetric matrices, we use Frobenius-norm balls; they define the same
open sets as Euclidean balls in the upper-triangular coordinates.

For example, $[0,1]$ is not open in $\mathbb R$, but its interior is
$(0,1)$, which contains an open ball. Also, the line segment
$S=\{(t,0):-1<t<1\}$ has empty interior in $\mathbb R^2$ but satisfies
$\operatorname{int}_V S=S$ for $V=\{(t,0):t\in\mathbb R\}$.
Throughout the proofs, the specified space $V$ determines which interior
or neighborhood is meant.

\begin{definition}[Real algebraic sets and irreducibility]
\label{def:real-algebraic-set}
We use the real-set definitions given by \citet[Section~2]{laba2007}.
For finitely many polynomials $f_1,\ldots,f_r\in\mathbb R[x_1,\ldots,x_m]$,
define their common zero set by
\[
    Z(f_1,\ldots,f_r)
    :=\{x\in\mathbb R^m:f_i(x)=0\text{ for every }i=1,\ldots,r\}.
\]
A set $V\subseteq\mathbb R^m$ is \emph{real algebraic} if
$V=Z(f_1,\ldots,f_r)$ for some such finite collection. A nonempty real
algebraic set $V$ is \emph{irreducible} if, for any real algebraic subsets
$V_1,V_2\subseteq V$,
\[
    V=V_1\cup V_2\quad\Longrightarrow\quad V=V_1\text{ or }V=V_2.
\]
Thus an irreducible set cannot be decomposed into two strictly smaller
algebraic sets. A polynomial $f$ \emph{vanishes on} $V$ if
$f(x)=0$ for every $x\in V$.
\end{definition}

\begin{definition}[Semialgebraic sets]
\label{def:semialgebraic-set}
We use the finite-union formulation in
\citet[Section~2.1.1, Proposition~2.1]{coste2000}.
A set $S\subseteq\mathbb R^m$ is \emph{semialgebraic} if it can be written
as a finite union
\[
    S=\bigcup_{a=1}^{N}
    \{x\in\mathbb R^m:
    f_{ai}(x)=0\ (1\leq i\leq r_a),\quad
    g_{aj}(x)>0\ (1\leq j\leq s_a)\},
\]
where all $f_{ai}$ and $g_{aj}$ are real polynomials. The integers
$N,r_a,s_a$ are nonnegative: an empty union denotes the empty set, and an
empty list of conditions imposes no restriction. Weak inequalities are
also permitted through the identity
$\{g\geq0\}=\{g>0\}\cup\{g=0\}$.
\end{definition}

Every algebraic set is semialgebraic. The converse need not hold, because
inequalities can restrict a semialgebraic set to only part of an algebraic
set.
A map $f:S\to\mathbb R^k$ is called \emph{semialgebraic} if its graph
$\{(x,f(x)):x\in S\}$ is a semialgebraic subset of
$\mathbb R^{m+k}$, where $S\subseteq\mathbb R^m$
\citep[Chapter~2]{coste2000}.

\begin{definition}[Semialgebraic dimension]
\label{def:semialgebraic-dimension}
We use the coordinate-projection characterization of semialgebraic
dimension; see \citet[Section~3.3]{coste2000} for the standard dimension
theory.
For $I=\{i_1<\cdots<i_k\}\subseteq\{1,\ldots,m\}$, let
$\pi_I:\mathbb R^m\to\mathbb R^k$ be the coordinate projection
$\pi_I(x)=(x_{i_1},\ldots,x_{i_k})$. For nonempty semialgebraic
$S\subseteq\mathbb R^m$, define
\[
    \dim_{\mathrm{sa}}S
    :=\max\bigl\{|I|:\pi_I(S)\text{ contains a nonempty open ball
    in }\mathbb R^{|I|}\bigr\}.
\]
For $I=\varnothing$, the projection of a nonempty set is the singleton
$\mathbb R^0$, so dimension zero is included. We use the convention
$\dim_{\mathrm{sa}}\varnothing=-1$.
\end{definition}

Thus dimension is the largest number of selected coordinates whose
projection contains an open ball; it need not equal the total number of
coordinates used to describe the set.

\begin{definition}[Algebraic dimension]
\label{def:algebraic-dimension}
For a nonempty real algebraic set $V\subseteq\mathbb R^m$, its algebraic
dimension can be defined by chains of nested algebraic subsets
\citep[Section~2]{laba2007}:
\[
\begin{split}
    \dim_{\mathrm{alg}}V:=\max\bigl\{d\in\mathbb Z_{\geq0}:
    &\;Z_0\subsetneq Z_1\subsetneq\cdots\subsetneq Z_d\subseteq V,\\
    &\;Z_0,\ldots,Z_d\text{ are nonempty irreducible real algebraic sets}
    \bigr\}.
\end{split}
\]
The integer $d$ counts strict inclusions, not sets. We set
$\dim_{\mathrm{alg}}\varnothing=-1$. If $V$ is irreducible, a chain of
maximum length ends at $V$; for a reducible set, the final set $Z_d$ need
only be contained in $V$.
\end{definition}

\paragraph{Relation between the dimensions.}
For any $S\subseteq\mathbb R^m$, its real Zariski closure is the common
zero set of all polynomials that are zero throughout $S$
\citep[Section~3.3]{coste2000}:
\[
    \overline S^{\,\mathrm{Zar}}
    :=\{x\in\mathbb R^m:f(x)=0\text{ for every real polynomial }f
    \text{ satisfying }f(s)=0\text{ for all }s\in S\}.
\]
Equivalently, it is the smallest real algebraic set containing $S$.
For every nonempty semialgebraic set $S$, the standard dimension theorem
gives \citep[Theorem~3.20]{coste2000}
\[
    \dim_{\mathrm{sa}}S
    =\dim_{\mathrm{alg}}\overline S^{\,\mathrm{Zar}}.
\]
In particular, if $S$ is already algebraic, its Zariski closure equals
$S$ and the two dimensions agree. This justifies the common notation
$\dim$ used throughout the paper. For semialgebraic $T\subseteq S$,
\emph{full-dimensional in $S$} means $\dim T=\dim S$; it does not mean
$\dim T=m$.

For example, the open line segment
$S=\{(x,0)\in\mathbb R^2:0<x<1\}$ is semialgebraic and has dimension one:
its first-coordinate projection is $(0,1)$, but $S$ contains no open ball
in $\mathbb R^2$. Its Zariski closure is the entire line
$V=\{(x,0):x\in\mathbb R\}$, which also has dimension one. Indeed, a
polynomial that is zero on the segment restricts to a univariate polynomial
that is zero on $(0,1)$, and must therefore be zero on the entire line.
Thus $S$ is full-dimensional in $V$ even though $S\neq V$.

\subsection{Closure properties of covariance families}
\label{app:closure-properties}

We explain the containment and closure properties used in the model
geometry proof. The first two properties follow from the definition of
the real Zariski closure and continuity of polynomials; they do not require
semialgebraicity or irreducibility.

\paragraph{1. The covariance family is contained in its Zariski closure.}
By definition, $V(G^+)$ consists of the symmetric matrices satisfying every
polynomial equality that holds throughout $\M(G^+)$. Each matrix in
$\M(G^+)$ satisfies all these equalities, so
$\M(G^+)\subseteq V(G^+)$. This inclusion is a consequence of the
definition, not an additional model assumption.

\paragraph{2. Positive-definite limits are also in the Zariski closure.}
Let $\overline\Sigma\in\Mbar(G^+)$. By the definition of relative
Euclidean closure, there is a sequence $\Sigma_k\in\M(G^+)$ such that
$\Sigma_k\to\overline\Sigma\succ0$. If a polynomial $f$ in the
upper-triangular matrix entries equals zero throughout $\M(G^+)$, then
$f(\Sigma_k)=0$ for every $k$. Since polynomials are continuous,
\[
    f(\overline\Sigma)=\lim_{k\to\infty}f(\Sigma_k)=0.
\]
Thus $\overline\Sigma$ satisfies every polynomial equality defining
$V(G^+)$ and belongs to that set. Also, every element of $\M(G^+)$ belongs
to its relative closure, as witnessed by the constant sequence at that
element. Together these facts give
\[
    \M(G^+)\subseteq\Mbar(G^+)\subseteq V(G^+)\cap\PD^p.
\]
The same continuity argument shows that any real algebraic set contains
the limits of its convergent sequences, and is therefore Euclidean closed.
Containment in $V(G^+)$ alone would not justify the inclusion of limits;
its Euclidean closedness is the additional fact used here.

\paragraph{3. The relative closure is semialgebraic.}
Step~2 of the proof of Proposition~\ref{prop:model-geometry} below
establishes that $\M(G^+)$ is semialgebraic. Its Euclidean closure in
$\Sym^p$ is therefore semialgebraic by
\citet[Corollary~2.5]{coste2000}, a consequence of the Tarski--Seidenberg
theorem.

By Sylvester's criterion, positive definiteness can be expressed by
finitely many strict polynomial inequalities in the matrix entries
\citep[Section~7.2, Theorem~7.2.5]{horn2013}. Hence $\PD^p$ is
semialgebraic.

Finally, finite intersections of semialgebraic sets are semialgebraic
\citep[Section~2.1.2]{coste2000}. Since relative closure within $\PD^p$
keeps exactly the positive-definite matrices in the closure within
$\Sym^p$,
\[
    \Mbar(G^+)
    =\bigl(\cl_{\Sym^p}\M(G^+)\bigr)\cap\PD^p
\]
is semialgebraic.

\subsection{Proof of Proposition~\ref{prop:model-geometry}}

\begin{proof}
Fix $G^+$ throughout the proof. We first describe the parameter space and
its map to covariance matrices, and then establish the three claims.

\paragraph{1. Free parameters and the covariance map.}
Impose the graph-required zeros on $B_{OO}$ and $\Lambda$. List the
remaining allowed coefficients and the diagonal entries of $\Omega_O$ and
$\Omega_L$ in a vector $\theta\in\mathbb R^d$. Here, $d$ is the number of
free parameter coordinates, not necessarily the dimension of the covariance
family. The parameter space $\mathbb R^d$ is distinct from the covariance
space $\Sym^p$, which we identify with $\mathbb R^{p(p+1)/2}$ through the
upper-triangular matrix entries. Each admissible parameter vector $\theta$
determines one covariance matrix; different admissible parameter vectors
may determine the same one.

Write
\[
    A(\theta)=I_p-B_{OO},\qquad
    \Delta(\theta)=\det A(\theta),\qquad
    \Psi(\theta)=\Omega_O+\Lambda^\top\Omega_L\Lambda.
\]
In these free coordinates, $\mathcal P(G^+)$ is defined by
\[
    (\Omega_O)_{ii}>0,\qquad (\Omega_L)_{hh}>0,
    \qquad \Delta(\theta)^2>0.
\]
Each variance is a coordinate of $\theta$, and $\Delta(\theta)$ is a
polynomial because the determinant is polynomial in the matrix entries.
These finitely many polynomial inequalities therefore show that
$\mathcal P(G^+)$ is semialgebraic. All inequalities are strict, and their
left-hand sides are continuous. Hence, for every admissible $\theta$,
some $\varepsilon>0$ satisfies
$B_\varepsilon(\theta)\subseteq\mathcal P(G^+)$: the parameter set is
Euclidean open in the free-coordinate space $\mathbb R^d$.
It contains the point $\theta_0$ with $B_{OO}=0$, $\Lambda=0$, and unit
noise variances, and therefore contains an open ball centered at
$\theta_0$. Graph-required zero entries are fixed throughout; only the
free coordinates vary. This uses the convention that an allowed edge
coefficient may equal zero.

The covariance map is
\[
    \phi_{G^+}:\mathcal P(G^+)\longrightarrow\PD^p,
    \qquad
    \phi_{G^+}(\theta)=A(\theta)^{-\top}\Psi(\theta)A(\theta)^{-1}.
\]
Its values are positive definite by Lemma~\ref{lem:marginal-covariance},
and its image is exactly $\M(G^+)$.

\paragraph{2. The covariance family is semialgebraic.}
Consider the joint set of admissible parameters and their generated
covariance matrices:
\[
    \mathcal T=
    \bigl\{(\theta,\Sigma)\in\mathbb R^d\times\Sym^p:
    \theta\in\mathcal P(G^+),\;
    A(\theta)^\top\Sigma A(\theta)=\Psi(\theta)\bigr\}.
\]
The matrix equality represents finitely many scalar polynomial equalities
in the entries of $\theta$ and $\Sigma$. Together with the polynomial
inequalities defining $\mathcal P(G^+)$, these show that $\mathcal T$ is
semialgebraic. Since $A(\theta)$ is invertible on $\mathcal P(G^+)$, the
matrix equality is equivalent to $\Sigma=\phi_{G^+}(\theta)$. Therefore,
\[
    \M(G^+)
    =\{\Sigma\in\Sym^p:\text{there exists }\theta
      \text{ with }(\theta,\Sigma)\in\mathcal T\}.
\]
This is the coordinate projection of $\mathcal T$ onto the covariance
space. The Tarski--Seidenberg theorem states that a coordinate projection
of a semialgebraic set is semialgebraic
\citep[Theorem~2.3 and Corollary~2.4]{coste2000}. It therefore
proves that $\M(G^+)$ is semialgebraic. The theorem is needed because the
polynomial conditions defining $\mathcal T$ involve both parameters and
covariances, whereas $\M(G^+)$ is a set of covariance matrices alone. No
explicit elimination of the parameters is required.

\paragraph{3. The Zariski closure is irreducible.}
Write $V=V(G^+)$. Irreducibility means that $V$ cannot be written as the
union of two proper algebraic subsets. Suppose, to obtain a contradiction,
that $V=V_1\cup V_2$, where $V_1\neq V$ and $V_2\neq V$ are algebraic.
Choose a matrix $\Sigma_1\in V\setminus V_1$. Because $V_1$ is the common
zero set of a collection of polynomials in the symmetric matrix entries,
at least one defining polynomial, denoted by $f$, satisfies
$f(\Sigma_1)\neq0$. Thus $f$ is zero throughout $V_1$ but not throughout
$V$. Similarly, choose a polynomial $g$ that is zero throughout $V_2$ but
not throughout $V$. These are polynomials in covariance-space coordinates,
not in the structural parameters. Saying that a polynomial \emph{vanishes}
on a set means that it equals zero at every element of that set.

Every generated covariance belongs to $V_1$ or $V_2$. Consequently,
\[
    f\bigl(\phi_{G^+}(\theta)\bigr)
    g\bigl(\phi_{G^+}(\theta)\bigr)=0
    \qquad\text{for every }\theta\in\mathcal P(G^+).
\]
To turn this into a polynomial identity in the free parameters, use the
adjugate formula. Each entry of the adjugate
$J(\theta)=\operatorname{adj}A(\theta)$ is, up to sign, the determinant
of a matrix obtained by deleting one row and one
column of $A(\theta)$. Determinants are polynomials in the matrix entries,
so every entry of $J(\theta)$ is a polynomial in $\theta$. Since
$\Delta(\theta)=\det A(\theta)\neq0$ on the admissible parameter set,
\[
    A(\theta)^{-1}=\frac{J(\theta)}{\Delta(\theta)},
    \qquad
    \phi_{G^+}(\theta)
    =\frac{J(\theta)^\top\Psi(\theta)J(\theta)}{\Delta(\theta)^2}.
\]
Both $J(\theta)$ and $\Psi(\theta)$ have polynomial entries, so the
numerator entries are polynomials in $\theta$. Each covariance entry is
therefore a ratio of polynomials, called a \emph{rational function}.
The polynomials $f$ and $g$ are still polynomials in covariance entries;
substituting these ratios makes their values rational functions of
$\theta$. Putting each expression over a common denominator gives
\[
    f\bigl(\phi_{G^+}(\theta)\bigr)
       =\frac{P_f(\theta)}{\Delta(\theta)^{2r}},
    \qquad
    g\bigl(\phi_{G^+}(\theta)\bigr)
       =\frac{P_g(\theta)}{\Delta(\theta)^{2s}},
\]
where $P_f,P_g$ are polynomials in the free parameters and
$r=\deg f$, $s=\deg g$. Since $\Delta(\theta)\neq0$ on the admissible
parameter set, clearing denominators yields
\[
    P_f(\theta)P_g(\theta)=0
    \qquad\text{for every }\theta\in\mathcal P(G^+).
\]

A real polynomial that equals zero on an open ball in $\mathbb R^d$
is identically zero, meaning that all its coefficients are zero.
To see why, the ball contains a product of nonempty open intervals.
Fixing all coordinates but one gives a
polynomial in one variable that is zero throughout an interval. A nonzero
polynomial in one variable has only finitely many roots, so all its
coefficients must be zero. Applying this argument successively to the
remaining coordinates shows that every coefficient of the original
polynomial is zero. Step~1 therefore implies $P_fP_g\equiv0$ on
$\mathbb R^d$. A product of two nonzero real polynomials cannot be the
zero polynomial. Thus $P_f\equiv0$ or $P_g\equiv0$.
It follows that one fixed polynomial, either $f$ or $g$, is zero on every
generated covariance. By the definition of Zariski closure, any polynomial
that is zero on $\M(G^+)$ is also zero on $V(G^+)$. This contradicts the
choice of $f$ and $g$. Hence $V(G^+)$ is irreducible. The argument does
not require a covariance matrix to determine a unique parameter vector.

\paragraph{4. The dimension identities.}
Step~2 and Appendix~\ref{app:closure-properties} show that both
$\M(G^+)$ and $\Mbar(G^+)$ are semialgebraic, and that
$\M(G^+)\subseteq\Mbar(G^+)\subseteq V(G^+)$. For a semialgebraic set,
\citet[Theorem~3.20]{coste2000} identifies its semialgebraic dimension with
the algebraic dimension of its real Zariski closure. Thus
$\dim\M(G^+)=\dim V(G^+)$. The same theorem identifies the two dimensions
of the algebraic set $V(G^+)$. Dimension is monotone under inclusion:
any coordinate projection of a smaller set is contained in the same
projection of the larger set, so an open ball in the former also belongs
to the latter. Consequently,
\[
    \dim\M(G^+)\leq\dim\Mbar(G^+)\leq\dim V(G^+)
    =\dim\M(G^+).
\]
The dimensions must therefore all be equal, proving
Equation~\ref{eq:model-dimensions}.
\end{proof}

\subsection{Rationale and scope of marginal quasi-equivalence}
\label{app:quasi-equivalence-rationale}

\paragraph{Relation to the definition of Ghassami et al.}
\citet[Section~5, Definition~9]{ghassami2020} study linear Gaussian models
without latent confounders. They define quasi-equivalence through a
positive-measure set of distributions that both graphs can generate,
with measure expressed in parameters describing those distributions.
Their phrase ``linearly independent parameters'' refers to free
distributional coordinates, not statistically independent random variables.
Their definition is therefore phrased in distribution coordinates, rather
than as an intersection of the graphs' full structural parameter spaces.
The motivation is that two graphs may share
a substantial set of distributions without generating exactly the same
distribution family. This positive-overlap motivation is also the basis
of our definition.

\paragraph{Why we formulate the definition in observed covariance space.}
In our setting, candidates may have different numbers of latent variables
and different structural parameter spaces. Moreover, different structural
parameter values may generate the same observed covariance. To compare
graphs, we therefore use the sets of observed covariances themselves,
$\M(G_1^+)$ and $\M(G_2^+)$, in the common space $\PD^p$. Their dimensions
count the independent covariance coordinates, rather than the number of
structural parameters. Equation~\ref{eq:quasi-equivalence} makes the
positive-overlap idea precise without choosing separate distribution
parameterizations for the marginal models. This is a formulation of that
idea for our setting, not an assumption that the characterizations or
recovery results for models without latent variables automatically extend
to models with latent variables. A \emph{precision matrix} is the inverse
of a covariance matrix. Using covariance rather than precision matrices
does not change the dimension criterion: on $\PD^p$, matrix inversion is
one-to-one and is its own inverse. Its entries are rational functions on
this semialgebraic domain, so it preserves semialgebraic dimension
\citep[Theorem~3.18]{coste2000}. It also sends the intersection of two
covariance families to the intersection of their corresponding precision
families. Thus both the family dimensions and the intersection dimension
are unchanged.

\paragraph{Dimension and positive-measure overlap.}
Let $S=\M(G_1^+)\cap\M(G_2^+)$, and suppose both families have dimension
$d$. Proposition~\ref{prop:model-geometry} ensures that the families,
and hence their intersection, are semialgebraic. By
Definition~\ref{def:semialgebraic-dimension}, $\dim S=d$ means that some
projection of $S$ onto $d$ selected covariance coordinates contains a
nonempty open set in $\mathbb R^d$. Such a set has positive
$d$-dimensional Lebesgue measure, meaning positive ordinary volume in
those coordinates. Conversely, a semialgebraic subset of $\mathbb R^d$
with positive Lebesgue measure contains a nonempty open set
\citep[Section~3.3]{coste2000}. Thus the criterion requires the shared
covariances to have a coordinate projection containing an open ball in
$\mathbb R^d$, rather than only a subset of dimension below $d$.
We require the intersection
to have the dimension of \emph{both} families: a lower-dimensional family
contained in a larger one does not qualify. The criterion does not require
equality of the complete families; the parts that they do not share may
also have dimension $d$.

\paragraph{Why comparisons must be made directly.}
For arbitrary semialgebraic sets, full-dimensional overlap need not pass
through an intermediate set. For example, consider the one-dimensional
intervals
\[
    A=(0,2),\qquad B=(1,4),\qquad C=(3,5).
\]
Here $A\cap B=(1,2)$ and $B\cap C=(3,4)$ both have dimension one,
but $A\cap C=\varnothing$. Thus overlap of $A$ with $B$, and of $B$
with $C$, does not imply overlap of $A$ with $C$. This example explains
why transitivity, meaning this implication between three sets, cannot be
deduced from semialgebraicity alone. It is not a construction of three
covariance families in our graph class. Our proofs do not need such an
implication: they establish quasi-equivalence between each selected graph
and the data-generating graph directly. In
Lemma~\ref{lem:geometric-identification}, replacing $G^\dagger$ by the
true graph is justified by the equality
$\M(G^\dagger)=\M(G^{*+})$, not by transitivity of quasi-equivalence.

\paragraph{Why equal Zariski closures are insufficient.}
The same intervals also illustrate a separate distinction. All three
have Zariski closure $\mathbb R$, since a polynomial that equals zero on
a nonempty interval equals zero on the whole line. Nevertheless, $A$ and
$C$ are disjoint. Consequently, equality of Zariski closures alone does
not guarantee overlap for semialgebraic sets: it establishes that the
sets satisfy the same polynomial equalities, but does not determine which
points satisfy their inequalities. This is why
Assumption~\ref{ass:model-overlap} explicitly requires full-dimensional
intersection of the covariance families, even after their Zariski
closures have been shown to be equal.

\subsection{Sufficient conditions for geometric identification}

We first state the standard algebraic dimension fact used below; see
\citet[Section~3.3, p.~59]{coste2000}. We include its proof to make the
strict-dimension step explicit.

\begin{lemma}[Strict inclusion and algebraic dimension]
\label{lem:strict-inclusion-dimension}
Let $V_1,V_2\subseteq\mathbb R^m$ be nonempty irreducible real algebraic
sets. If $V_1\subsetneq V_2$, then $\dim V_1<\dim V_2$.
\end{lemma}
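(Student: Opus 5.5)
The argument uses the chain definition of algebraic dimension (Definition~\ref{def:algebraic-dimension}) directly. Set $d_1=\dim V_1$. Because $V_1$ is nonempty and irreducible, a maximal chain of nonempty irreducible real algebraic sets $Z_0\subsetneq Z_1\subsetneq\cdots\subsetneq Z_{d_1}\subseteq V_1$ can be chosen so that it ends at $V_1$, that is, $Z_{d_1}=V_1$. To justify this, take any maximal chain. If its last set $Z_{d_1}$ were strictly contained in $V_1$, then appending $V_1$, which is itself irreducible, would give a longer chain inside $V_1$. This contradicts maximality, so $Z_{d_1}=V_1$. This is exactly the remark made after Definition~\ref{def:algebraic-dimension}.

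Next, append $V_2$ to this chain. Since $V_1\subsetneq V_2$ and $V_2$ is a nonempty irreducible real algebraic set, the sequence $Z_0\subsetneq\cdots\subsetneq Z_{d_1}=V_1\subsetneq V_2$ is a chain of nonempty irreducible real algebraic subsets of $V_2$ with $d_1+1$ strict inclusions. By the definition of $\dim_{\mathrm{alg}}$ as a maximum, $\dim V_2\geq d_1+1>\dim V_1$.

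The only point needing care is finiteness. The maximum in Definition~\ref{def:algebraic-dimension} must be finite for $V_1$, so that a maximal chain exists at all. This follows from the standard fact that chain lengths in $\mathbb R^m$ are bounded by $m$: irreducible real algebraic sets correspond to real prime ideals in $\mathbb R[x_1,\ldots,x_m]$, whose chains have length at most $m$ by Krull dimension. Equivalently, it follows from the identification of $\dim_{\mathrm{alg}}$ with $\dim_{\mathrm{sa}}\leq m$ via \citet[Theorem~3.20]{coste2000}. I expect this to be the main obstacle, since it is the one step that is not purely formal, and I would simply cite it. Once the maximum is known to be attained and finite, the rest is the purely formal chain extension above.
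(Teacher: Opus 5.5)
Your proposal is correct and matches the paper's proof: a maximum-length chain in $V_1$ must end at $V_1$ because $V_1$ is irreducible, and appending $V_2$ then gives $\dim V_1+1$ strict inclusions inside $V_2$. Your remark that the maximum in the chain definition is finite (chain lengths are bounded by $m$) is a point the paper leaves implicit. The paper also notes that the same argument works when $V_1$ is replaced by any nonempty proper algebraic subset of $V_2$, whether irreducible or not.
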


\begin{proof}
By Definition~\ref{def:algebraic-dimension}, algebraic dimension is the
largest number of strict inclusions in a chain of nonempty irreducible
algebraic subsets. Let
$d=\dim V_1$. Since $V_1$ is irreducible, a chain of maximal length in
$V_1$ ends at $V_1$; otherwise, appending $V_1$ would give a longer chain.
Thus there exist nonempty irreducible algebraic sets with
\[
    Z_0\subsetneq Z_1\subsetneq\cdots\subsetneq Z_d=V_1.
\]
The assumed strict inclusion and the irreducibility of $V_2$ allow us to
append one more set:
\[
    Z_0\subsetneq Z_1\subsetneq\cdots\subsetneq Z_d=V_1
    \subsetneq V_2.
\]
This is a chain of $d+1$ strict inclusions in $V_2$, so
$\dim V_2\geq d+1>\dim V_1$.

The same argument applies to any nonempty proper algebraic subset
$W\subsetneq V_2$, even if $W$ is not irreducible. A chain of maximal
length $d=\dim W$ consists of irreducible sets and ends at some
$Z_d\subseteq W$. Since $Z_d\subsetneq V_2$, appending $V_2$ again
gives $\dim V_2\geq d+1$. Thus every nonempty proper algebraic subset
of an irreducible algebraic set has strictly smaller dimension.
\end{proof}

\begin{lemma}[Geometric identification]
\label{lem:geometric-identification}
Under Assumptions~\ref{ass:algebraic-faithfulness}--\ref{ass:model-overlap},
Assumption~\ref{ass:pointwise-identification} holds.  Equivalently, for every
$G^+\in\Gclass$,
\begin{equation}
    \Sigma^*\in\Mbar(G^+),\quad c(G^+)\leq c^\dagger
    \quad\Longrightarrow\quad G^+\cong_O G^{*+}.
    \label{eq:geometric-identification}
\end{equation}
\end{lemma}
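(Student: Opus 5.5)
Fix $G^+\in\Gclass$ with $\Sigma^*\in\Mbar(G^+)$ and $c(G^+)\le c^\dagger$. The plan is to show, in order, that $V(G^+)=V(G^\dagger)$, that the two covariance families have equal dimension, and that their intersection is full-dimensional. Assumption~\ref{ass:pointwise-identification} then follows directly, and the final replacement of $G^\dagger$ by $G^{*+}$ uses only $\M(G^\dagger)=\M(G^{*+})$.

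\textbf{Step 1 (inclusion).} Assumption~\ref{ass:algebraic-faithfulness} applies directly and gives $V(G^\dagger)\subseteq V(G^+)$.

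\textbf{Step 2 (dimension comparison).} Monotonicity of dimension gives $\dim V(G^\dagger)\le\dim V(G^+)$. By Proposition~\ref{prop:model-geometry}, this is the same as $\dim\M(G^\dagger)\le\dim\M(G^+)$. Suppose the inequality were strict. Then Assumption~\ref{ass:structural-minimality}, together with the inclusion from Step~1, would force $c(G^+)>c^\dagger$. This contradicts the hypothesis $c(G^+)\le c^\dagger$, so the two dimensions are equal.

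\textbf{Step 3 (equality of closures).} Both $V(G^\dagger)$ and $V(G^+)$ are nonempty and irreducible by Proposition~\ref{prop:model-geometry}. If the inclusion $V(G^\dagger)\subseteq V(G^+)$ were strict, Lemma~\ref{lem:strict-inclusion-dimension} would give strictly smaller dimension for $V(G^\dagger)$, contradicting Step~2. Hence $V(G^+)=V(G^\dagger)=:V$.

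\textbf{Step 4 (overlap).} All hypotheses of Assumption~\ref{ass:model-overlap} now hold, so $\dim(\M(G^+)\cap\M(G^\dagger))=\dim V$. Proposition~\ref{prop:model-geometry} gives $\dim V=\dim\M(G^+)=\dim\M(G^\dagger)$, and this is exactly the condition $G^+\cong_O G^\dagger$. This establishes Assumption~\ref{ass:pointwise-identification}.

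\textbf{Step 5 (passing to the true graph).} For Equation~\ref{eq:geometric-identification}, substitute $\M(G^\dagger)=\M(G^{*+})$, which holds by the choice of $G^\dagger$ in Equation~\ref{eq:minimum-representative}. The intersection and the dimensions are unchanged, so $G^+\cong_O G^{*+}$. Transitivity of quasi-equivalence is not needed here, and it is unavailable in general.

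\textbf{Main obstacle.} The delicate step is Step~3. It is the only place where irreducibility is genuinely used, since equal dimension alone would not force equal Zariski closures. We must therefore check that both $V(G^+)$ and $V(G^\dagger)$ are nonempty and irreducible before invoking Lemma~\ref{lem:strict-inclusion-dimension}; nonemptiness holds because each family contains $I_p$.
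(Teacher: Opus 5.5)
Your proposal is correct and follows the paper's proof essentially step for step: faithfulness gives the inclusion of Zariski closures, irreducibility with Lemma~\ref{lem:strict-inclusion-dimension} and structural minimality rule out a strict inclusion, full-dimensional overlap gives the required intersection dimension, and $\M(G^\dagger)=\M(G^{*+})$ transfers the conclusion to the true graph. The only difference is a reordering: you first use structural minimality to force equal dimensions and then apply the strict-inclusion lemma, whereas the paper combines both steps in a single contradiction argument that starts from a strict inclusion.
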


\begin{proof}
Fix a candidate $G^+$ satisfying the two premises in
Equation~\ref{eq:geometric-identification}.  Relevant-candidate algebraic
faithfulness gives
\begin{equation}
    V(G^\dagger)\subseteq V(G^+).
    \label{eq:proof-variety-inclusion}
\end{equation}

Suppose that the inclusion is strict. Proposition~\ref{prop:model-geometry}
shows that both real algebraic sets are irreducible. Applying
Lemma~\ref{lem:strict-inclusion-dimension} with
$V_1=V(G^\dagger)$ and $V_2=V(G^+)$ gives
\[
    \dim V(G^\dagger)<\dim V(G^+).
\]
Using the dimension identities in Proposition~\ref{prop:model-geometry}, this
is equivalent to
\[
    \dim\M(G^\dagger)<\dim\M(G^+).
\]
Structural minimality would therefore imply $c(G^+)>c^\dagger$, contradicting
the premise $c(G^+)\leq c^\dagger$.  Thus
\[
    V(G^+)=V(G^\dagger)=:V.
\]

Equality of the Zariski closures alone is insufficient because the two
semialgebraic covariance families may be disjoint subsets of $V$.
Full-dimensional overlap supplies
\begin{equation}
    \dim\bigl(\M(G^+)\cap\M(G^\dagger)\bigr)=\dim V.
    \label{eq:proof-overlap-dimension}
\end{equation}
Proposition~\ref{prop:model-geometry} also gives
\[
    \dim\M(G^+)=\dim V=\dim\M(G^\dagger).
\]
Consequently,
\[
\begin{split}
    \dim\bigl(\M(G^+)\cap\M(G^\dagger)\bigr)
    &=\dim\M(G^+)\\
    &=\dim\M(G^\dagger),
\end{split}
\]
which is exactly $G^+\cong_O G^\dagger$.

Finally, $\M(G^\dagger)=\M(G^{*+})$ by construction.  Replacing
$\M(G^\dagger)$ by this equal set in the preceding dimension identity proves
$G^+\cong_O G^{*+}$.  The argument applies to every member of
$\mathcal G_0(\Sigma^*)$, so Assumption
\ref{ass:pointwise-identification} holds.
\end{proof}

\subsection{KL separation and convergence of the minimum KL divergence}

\begin{feedbackrevision}
In the finite-dimensional space $\Sym^p$, a set is compact if and only if
it is closed and bounded in that space. A continuous real-valued function
on a nonempty compact set attains its minimum.

We first establish a compactness result used in both lemmas below. It
applies to a fixed first covariance and, more generally, to first
covariances varying over a compact set.

\begin{lemma}[Compactness for bounded Gaussian KL divergence]
\label{lem:kl-sublevel-compactness}
Let $K\subseteq\PD^p$ be nonempty and compact, and let $0\leq t<\infty$.
Then the set
\[
    K_t:=\left\{\Sigma\in\PD^p:
       \inf_{S\in K}D(S\Vert\Sigma)\leq t\right\}
\]
is compact in $\Sym^p$ and consists of positive-definite matrices.
In particular, taking $K=\{\Sigma^*\}$ shows that
$\{\Sigma\in\PD^p:D(\Sigma^*\Vert\Sigma)\leq t\}$ is compact.
\end{lemma}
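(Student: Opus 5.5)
The plan is to show that $K_t$ is bounded, that it stays a uniform distance away from the singular matrices, and that it is closed in $\Sym^p$. Compactness then follows from Heine--Borel. The tool is the eigenvalue form of the Gaussian KL divergence. For $S,\Sigma\in\PD^p$, let $\mu_1,\ldots,\mu_p>0$ be the eigenvalues of $\Sigma^{-1/2}S\Sigma^{-1/2}$, which are the eigenvalues of $\Sigma^{-1}S$. Then
\[
    D(S\Vert\Sigma)=\tfrac12\sum_{i=1}^p(\mu_i-\log\mu_i-1).
\]
Each summand is nonnegative, and the function $u\mapsto u-\log u-1$ tends to $+\infty$ both as $u\to0^+$ and as $u\to\infty$. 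So $D(S\Vert\Sigma)\leq t'$ forces every $\mu_i$ into a fixed interval $[a,b]\subset(0,\infty)$, where $a$ and $b$ depend only on $t'$. Since $K$ is compact and contained in $\PD^p$, there are constants $0<m\leq M<\infty$ with $mI_p\preceq S\preceq MI_p$ for all $S\in K$, because the extreme eigenvalues are continuous and attain their bounds on $K$.

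\textbf{Bounds.} Take $\Sigma\in K_t$. By the definition of the infimum there is some $S\in K$ with $D(S\Vert\Sigma)\leq t+1$, so every eigenvalue of $\Sigma^{-1/2}S\Sigma^{-1/2}$ lies in $[a,b]$, with $a,b$ chosen for $t'=t+1$. This gives $aI_p\preceq\Sigma^{-1/2}S\Sigma^{-1/2}\preceq bI_p$, and conjugating by $\Sigma^{1/2}$ gives $a\Sigma\preceq S\preceq b\Sigma$. Combining this with the bounds on $K$ yields
\[
    (m/b)I_p\preceq\Sigma\preceq(M/a)I_p.
\]
Hence $K_t$ is bounded, and every element has smallest eigenvalue at least $m/b>0$.

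\textbf{Closedness in $\Sym^p$.} This is the main obstacle, because limits of matrices in $\PD^p$ could be singular and because the defining condition involves an infimum over $K$. Let $\Sigma_k\in K_t$ with $\Sigma_k\to\Sigma$ in $\Sym^p$. The uniform eigenvalue bound gives $\Sigma\succeq(m/b)I_p$, so $\Sigma\in\PD^p$ and we stay in the set where $D$ is finite. Next, for each $k$ choose $S_k\in K$ with $D(S_k\Vert\Sigma_k)\leq t+1/k$. By compactness of $K$, a subsequence satisfies $S_k\to S\in K$. The map $(S,\Sigma)\mapsto D(S\Vert\Sigma)$ is jointly continuous on $\PD^p\times\PD^p$, since trace, inverse and $\log\det$ are all continuous there. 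Therefore $D(S\Vert\Sigma)\leq t$, so $\inf_{S'\in K}D(S'\Vert\Sigma)\leq t$ and $\Sigma\in K_t$. The set $K_t$ is thus closed and bounded in $\Sym^p$, hence compact, and it lies inside $\PD^p$.

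For the special case, take $K=\{\Sigma^*\}$, which is compact. This gives the stated compactness of $\{\Sigma\in\PD^p:D(\Sigma^*\Vert\Sigma)\leq t\}$.
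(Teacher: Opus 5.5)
Your proof is correct and follows the same outline as the paper. Both arguments use the uniform bounds $mI_p\preceq S\preceq MI_p$ on $K$ to confine the eigenvalues of every $\Sigma\in K_t$ to a compact interval in $(0,\infty)$, then show closedness and apply Heine--Borel. The differences are only in execution. You bound the relative eigenvalues of $\Sigma^{-1}S$ and transfer the bound through $a\Sigma\preceq S\preceq b\Sigma$, whereas the paper bounds $D(S\Vert\Sigma)$ below by $\frac12\{\sum_i(m/u_i+\log u_i)-p\log M-p\}$, uniformly in $S$, in terms of the eigenvalues $u_i$ of $\Sigma$. For closedness, you extract a convergent subsequence of near-minimizers $S_k$, whereas the paper shows that $\Sigma\mapsto\min_{S\in K}D(S\Vert\Sigma)$ is continuous on the eigenvalue box.
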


\begin{proof}
We first bound the eigenvalues of every $\Sigma\in K_t$ away from zero
and above by a finite constant. Because $K$ is compact and its matrices
are positive definite, continuity of their smallest and largest
eigenvalues gives constants $0<m\leq M<\infty$ such that
\[
    mI_p\preceq S\preceq MI_p\qquad(S\in K).
\]
Write $u_1,\ldots,u_p>0$ for the eigenvalues of $\Sigma$. Expanding
Equation~\ref{eq:gaussian-kl}, and using
$\tr(\Sigma^{-1}S)\geq m\tr(\Sigma^{-1})$ and
$\log\det S\leq p\log M$, gives
\[
    D(S\Vert\Sigma)
    \geq\frac12\left\{\sum_{i=1}^p
      \left(\frac{m}{u_i}+\log u_i\right)-p\log M-p\right\}
    \qquad(S\in K).
\]
For $u>0$, let $h(u)=m/u+\log u$. Its derivative is
$h'(u)=(u-m)/u^2$, so its minimum is $h(m)=1+\log m$.
Also, $h(u)\to+\infty$ as $u\downarrow0$ or $u\to+\infty$.
The preceding lower bound is independent of $S$. Thus, if $\Sigma\in K_t$,
\[
    \sum_{i=1}^p h(u_i)\leq2t+p\log M+p.
\]
Each of the other $p-1$ terms is at least $h(m)$, so each eigenvalue satisfies
\[
    h(u_i)\leq2t+p\log M+p-(p-1)h(m).
\]
Since the right-hand side is finite and $h$ tends to infinity at both
ends of $(0,\infty)$, there are constants $0<a<b<\infty$, depending
only on $K$ and $t$, such that $a\leq u_i\leq b$ for every $i$.
Consequently,
\[
    K_t\subseteq B:=\{\Sigma\in\Sym^p:aI_p\preceq\Sigma\preceq bI_p\}.
\]
The set $B$ is closed and bounded, hence compact in the finite-dimensional
space $\Sym^p$, and $a>0$ ensures that $B\subseteq\PD^p$.

It remains to show that $K_t$ is closed in $B$. Define
$f(\Sigma)=\min_{S\in K}D(S\Vert\Sigma)$; this minimum exists by
continuity and compactness of $K$. For $\Sigma,\widetilde\Sigma\in B$,
\[
    |f(\Sigma)-f(\widetilde\Sigma)|
    \leq\sup_{S\in K}
       |D(S\Vert\Sigma)-D(S\Vert\widetilde\Sigma)|.
\]
Joint continuity of $D$ on the compact product $K\times B$ implies
uniform continuity there, so the right-hand side tends to zero as
$\widetilde\Sigma\to\Sigma$. Hence $f$ is continuous on $B$.
It follows that $K_t=\{\Sigma\in B:f(\Sigma)\leq t\}$ is a closed
subset of the compact set $B$, and is therefore compact.
\end{proof}
\end{feedbackrevision}

\begin{lemma}[Positive KL gap]
\label{lem:positive-kl-gap}
\feedbackedit{Let $\Sigma^*\in\PD^p$.} If $C\subseteq\PD^p$ is closed relative to $\PD^p$ and
$\Sigma^*\notin C$, then
$\inf_{\Sigma\in C}D(\Sigma^*\Vert\Sigma)>0$.
\end{lemma}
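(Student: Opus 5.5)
I will argue by contradiction, using the compactness result just established (Lemma~\ref{lem:kl-sublevel-compactness}) with $K=\{\Sigma^*\}$. Suppose $\inf_{\Sigma\in C}D(\Sigma^*\Vert\Sigma)=0$. If $C$ is empty the infimum is $+\infty$ and there is nothing to prove, so assume $C\neq\varnothing$. Then there is a sequence $\Sigma_k\in C$ with $D(\Sigma^*\Vert\Sigma_k)\to0$. Discarding finitely many terms, we may assume $D(\Sigma^*\Vert\Sigma_k)\leq1$ for all $k$.

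With this normalization, every $\Sigma_k$ lies in the sublevel set $K_1=\{\Sigma\in\PD^p:D(\Sigma^*\Vert\Sigma)\leq1\}$. By Lemma~\ref{lem:kl-sublevel-compactness}, $K_1$ is compact in $\Sym^p$ and contained in $\PD^p$; in fact, its proof places it inside $\{aI_p\preceq\Sigma\preceq bI_p\}$ with $a>0$. Bolzano--Weierstrass then gives a subsequence $\Sigma_{k_j}\to\overline\Sigma\in K_1\subseteq\PD^p$. This is the key point: the limit is positive definite, so it is a limit within $\PD^p$ of points of $C$. Since $C$ is closed relative to $\PD^p$, we get $\overline\Sigma\in C$.

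Next, $D(\Sigma^*\Vert\cdot)$ is continuous on $\PD^p$, because $\Sigma\mapsto\Sigma^{-1}$ and $\log\det$ are continuous there. Hence
\[
D(\Sigma^*\Vert\overline\Sigma)=\lim_j D(\Sigma^*\Vert\Sigma_{k_j})=0.
\]
By the stated property of the Gaussian KL divergence (nonnegative, and zero exactly when its arguments coincide), $\overline\Sigma=\Sigma^*$. Thus $\Sigma^*\in C$, which contradicts the hypothesis $\Sigma^*\notin C$. Therefore the infimum is strictly positive.

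The main obstacle would be ensuring that the limit does not escape to the boundary of $\PD^p$ (a singular matrix) or to infinity. Relative closedness alone cannot handle that case. The compactness lemma supplies exactly the uniform eigenvalue bounds needed, so once that lemma is invoked, the remaining steps are routine.
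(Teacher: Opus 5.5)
Your proof is correct and follows essentially the same route as the paper. Both arguments use the compact sublevel set from Lemma~\ref{lem:kl-sublevel-compactness} with $K=\{\Sigma^*\}$, rely on relative closedness of $C$ to keep the limit point in $C$, and finish with continuity and the zero-iff-equal property of $D$. The only difference is cosmetic: you extract a convergent subsequence from a minimizing sequence under a contradiction hypothesis, whereas the paper first shows the infimum over $C\cap K_t$ is attained and then rules out a zero minimum.
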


\begin{proof}
\begin{feedbackrevision}
If $C$ is empty, the infimum is $+\infty$ and the result is immediate.
Assume henceforth that $C$ is nonempty. Although every $\Sigma\in C$
differs from $\Sigma^*$ and therefore has positive KL divergence, this
alone does not rule out an infimum of zero. We first prove that the
infimum is attained; closedness of $C$ alone does not guarantee this.

Restrict the minimization without changing its infimum.
Choose any $\Sigma_0\in C$ and set
$t=D(\Sigma^*\Vert\Sigma_0)<\infty$. This candidate gives
$\inf_{\Sigma\in C}D(\Sigma^*\Vert\Sigma)\leq t$. Define
\[
    K_t=\{\Sigma\in\PD^p:D(\Sigma^*\Vert\Sigma)\leq t\}.
\]
The intersection $C\cap K_t$ contains $\Sigma_0$, whose divergence is $t$.
Every covariance in $C\setminus K_t$ has divergence greater than $t$,
so excluding these covariances does not change the infimum:
\[
    \inf_{\Sigma\in C}D(\Sigma^*\Vert\Sigma)
    =\inf_{\Sigma\in C\cap K_t}D(\Sigma^*\Vert\Sigma).
\]

Lemma~\ref{lem:kl-sublevel-compactness}, with $K=\{\Sigma^*\}$, proves
that $K_t$ is compact. Since $C$ is closed relative to $\PD^p$ and
$K_t\subseteq\PD^p$, the intersection $C\cap K_t$ is closed in $K_t$.
It is therefore a nonempty compact set. The continuous function
$\Sigma\mapsto D(\Sigma^*\Vert\Sigma)$ attains its minimum on this set
at some $\widehat\Sigma\in C\cap K_t$. Combining this fact with Step~1
gives
\[
    \inf_{\Sigma\in C}D(\Sigma^*\Vert\Sigma)
    =\min_{\Sigma\in C\cap K_t}D(\Sigma^*\Vert\Sigma)
    =D(\Sigma^*\Vert\widehat\Sigma).
\]

Suppose, for a contradiction, that this minimum equals zero. Gaussian KL
divergence is zero only when its two covariances are equal, so
$D(\Sigma^*\Vert\widehat\Sigma)=0$ implies $\widehat\Sigma=\Sigma^*$.
But $\widehat\Sigma\in C$, whereas $\Sigma^*\notin C$ by assumption.
This is a contradiction. The attained minimum is nonnegative and cannot
be zero, so it is strictly positive, as required.
\end{feedbackrevision}
\end{proof}

\begin{lemma}[Convergence of the minimum empirical KL divergence]
\label{lem:profile-convergence}
For fixed $G^+$, let
\begin{equation}
    \Delta(G^+)=\inf_{\Sigma\in\Mbar(G^+)}
    D(\Sigma^*\Vert\Sigma).
    \label{eq:population-gap}
\end{equation}
Then $\delta_n(G^+)\to\Delta(G^+)$ almost surely.  If
$\Sigma^*\notin\Mbar(G^+)$, then $\Delta(G^+)>0$.
\end{lemma}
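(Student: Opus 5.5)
The second claim follows directly from Lemma~\ref{lem:positive-kl-gap}: $\Mbar(G^+)$ is closed relative to $\PD^p$ by definition, so $\Sigma^*\notin\Mbar(G^+)$ gives $\Delta(G^+)>0$. The main work is the almost-sure convergence $\delta_n(G^+)\to\Delta(G^+)$.

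By the strong law of large numbers, $S_n\to\Sigma^*$ almost surely, since the entries of $X_OX_O^\top$ are integrable. I fix an outcome in this probability-one event (intersected with $\{S_n\succ0\text{ for all }n\ge p\}$). Because $\Sigma^*\succ0$, I choose a compact neighbourhood $K=\{S:\|S-\Sigma^*\|_F\le r\}\subseteq\PD^p$. Then $S_n\in K$ for all large $n$. It suffices to show that the function $g(S):=\inf_{\Sigma\in\Mbar(G^+)}D(S\Vert\Sigma)$ is continuous at $\Sigma^*$, because $\delta_n=g(S_n)$ and $\Delta=g(\Sigma^*)$.

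For this I localize the infimum to a compact set. Every graph contains $I_p\in\M(G^+)$, so $g(S)\le D(S\Vert I_p)\le t:=\max_{S\in K}D(S\Vert I_p)<\infty$ for all $S\in K$. Next, take $K_{t+1}$ from Lemma~\ref{lem:kl-sublevel-compactness} with this $K$, and set $C:=\Mbar(G^+)\cap K_{t+1}$. This set is nonempty since it contains $I_p$, and it is compact because $\Mbar(G^+)$ is closed relative to $\PD^p$ and $K_{t+1}\subseteq\PD^p$ is compact. For each $S\in K$, points $\Sigma\in\Mbar(G^+)\setminus K_{t+1}$ have $D(S\Vert\Sigma)>t+1>g(S)$. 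Hence they can be discarded, and $g(S)=\min_{\Sigma\in C}D(S\Vert\Sigma)$ on $K$.

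Since $D$ is jointly continuous on $\PD^p\times\PD^p$, it is uniformly continuous on the compact set $K\times C$. The same sup-difference estimate used in the proof of Lemma~\ref{lem:kl-sublevel-compactness} then gives
\[
|g(S)-g(S')|\le\sup_{\Sigma\in C}|D(S\Vert\Sigma)-D(S'\Vert\Sigma)|\to0
\]
as $S'\to S$ within $K$. So $g$ is continuous on $K$, and therefore $\delta_n(G^+)=g(S_n)\to g(\Sigma^*)=\Delta(G^+)$ on the fixed probability-one event.

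The main obstacle is the localization step. The sublevel set must be chosen uniformly over all $S$ near $\Sigma^*$, not only at $\Sigma^*$ itself. This is exactly why Lemma~\ref{lem:kl-sublevel-compactness} is stated for a compact family $K$ rather than a single point. The uniform bound $t$ supplied by the point $I_p$ is what makes that choice possible.
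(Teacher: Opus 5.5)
Your proposal is correct and follows essentially the same argument as the paper's proof: the strong law, a compact ball $K\subseteq\PD^p$ around $\Sigma^*$, a uniform anchor bound (you use $I_p$, the paper an arbitrary $\Sigma_0\in\Mbar(G^+)$), localization to the compact set $\Mbar(G^+)\cap K_{t+1}$ via Lemma~\ref{lem:kl-sublevel-compactness}, and uniform continuity of $D$ combined with the sup-difference bound for infima, with Lemma~\ref{lem:positive-kl-gap} giving the positive gap. The only difference is cosmetic: you prove continuity of $g$ on all of $K$, whereas the paper directly compares $\delta_n(G^+)$ with $\Delta(G^+)$ along $S_n\to\Sigma^*$.
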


\begin{feedbackrevision}
Here, almost-sure convergence means that the convergence event has
probability one:
\[
    \Pr\!\left(\lim_{n\to\infty}\delta_n(G^+)=\Delta(G^+)\right)=1.
\]
The probability is taken over the infinite sequence of i.i.d.\ observations.
\end{feedbackrevision}

\begin{proof}
Because the observations have finite second moments, the strong law applied
entrywise to $X_O^{(k)}X_O^{(k)\top}$ gives
\[
    S_n\longrightarrow\Sigma^*\qquad\text{almost surely}.
\]
Fix an outcome in this probability-one event for the remainder of the proof.
Choose $\varepsilon>0$ smaller than the smallest eigenvalue of
$\Sigma^*$ and let
\[
    K=\{S\in\Sym^p:\|S-\Sigma^*\|_F\leq\varepsilon\}.
\]
This closed ball is compact (closed and bounded in the finite-dimensional
space $\Sym^p$), lies in $\PD^p$, and contains $S_n$ for all sufficiently
large $n$.

The model closure $\Mbar(G^+)$ is nonempty because the admissible parameter
space is nonempty.  Fix any $\Sigma_0\in\Mbar(G^+)$.  Joint continuity of
$D(S\Vert\Sigma)$ on $\PD^p\times\PD^p$ and compactness of $K$ imply
\[
    R_0:=\sup_{S\in K}D(S\Vert\Sigma_0)<\infty.
\]
Thus $\inf_{\Sigma\in\Mbar(G^+)}D(S\Vert\Sigma)\leq R_0$ uniformly over
$S\in K$.

We next show that the infimum over $\Sigma$ can be restricted to a single
compact set for every $S\in K$. \begin{feedbackrevision}
By Lemma~\ref{lem:kl-sublevel-compactness}, applied with this $K$ and
$t=R_0+1$, the set
\[
    K_0=\{\Sigma\in\PD^p:
    \inf_{S\in K}D(S\Vert\Sigma)\leq R_0+1\}
\]
is compact and contained in $\PD^p$.
\end{feedbackrevision}

For every $S\in K$, the infimum over $\Sigma$ can now be restricted to
$\Mbar(G^+)\cap K_0$.  Indeed, if $\Sigma\notin K_0$, then
$D(S\Vert\Sigma)>R_0+1$, while the fixed feasible covariance $\Sigma_0$ has
$D(S\Vert\Sigma_0)\leq R_0$.  Thus points outside $K_0$ cannot improve the
infimum.  The restricted feasible set $\Mbar(G^+)\cap K_0$ is nonempty
(it contains $\Sigma_0$) and compact. Thus the minimum KL divergence over
the covariance-family closure is attained, although admissible SCM
parameters need not attain the same value.

The function $D$ is uniformly continuous on the compact product
$K\times K_0$.  Since $S_n\to\Sigma^*$ and both matrices eventually belong
to $K$, it follows that
\[
    \sup_{\Sigma\in K_0}
    |D(S_n\Vert\Sigma)-D(\Sigma^*\Vert\Sigma)|\to0
\]
on the fixed probability-one event.  Finally, for any two real-valued
functions $f_n$ and $f$ \feedbackedit{with finite infima on a common nonempty set} $C$,
\[
    \left|\inf_C f_n-\inf_C f\right|
    \leq\sup_C|f_n-f|.
\]
Applying this inequality with
$C=\Mbar(G^+)\cap K_0$ proves
\[
    \delta_n(G^+)\longrightarrow\Delta(G^+)
\]
almost surely.

If $\Sigma^*\notin\Mbar(G^+)$, the set $\Mbar(G^+)$ is closed relative to
$\PD^p$ by definition.  Lemma~\ref{lem:positive-kl-gap}, applied with
$C=\Mbar(G^+)$, then gives $\Delta(G^+)>0$.
\end{proof}

\subsection{Proof of Theorem~\ref{thm:closure-consistency}}

We use the following probability notation for real random variables $Y_n$
and positive deterministic numbers $a_n$. The notation $Y_n=O_p(a_n)$
means that, for every $\varepsilon>0$, there is a finite $M>0$ such that
$\Pr(|Y_n|>Ma_n)<\varepsilon$ for all sufficiently large $n$.
The notation $Y_n=o_p(a_n)$ means that
$\Pr(|Y_n|>\varepsilon a_n)\to0$ for every $\varepsilon>0$.
We write $Y_n\xrightarrow{p}y$ for convergence in probability to $y$,
meaning $Y_n-y=o_p(1)$, and $Y_n\xrightarrow{p}+\infty$ if
$\Pr(Y_n>T)\to1$ for every fixed real $T$.
For deterministic sequences, $a_n=o(n)$ means $a_n/n\to0$.

\begin{proof}
\medskip
\begin{feedbackrevision}
By definition, $G^\dagger$ is distribution-equivalent to the true graph
$G^{*+}$. Hence
\[
    \Sigma^*\in\M(G^{*+})=\M(G^\dagger)
    \subseteq\Mbar(G^\dagger).
\]
Recall that $\delta_n(G^\dagger)$ is the infimum of
$D(S_n\Vert\Sigma)$ over $\Sigma\in\Mbar(G^\dagger)$.
Every divergence in this infimum is nonnegative, giving the lower bound
below. Since $\Sigma^*$ belongs to the set over which the infimum is
taken, the infimum is no greater than the divergence evaluated at
$\Sigma^*$. Thus
\[
    0\leq\delta_n(G^\dagger)
    =\inf_{\Sigma\in\Mbar(G^\dagger)}D(S_n\Vert\Sigma)
    \leq D(S_n\Vert\Sigma^*).
\]
This upper bound does not require the infimum to be attained or assert
that $\Sigma^*$ minimizes the empirical divergence. It uses only that
$\Sigma^*$ is one of the covariances included in the infimum.
\end{feedbackrevision}

\begin{feedbackrevision}
We next bound $D(S_n\Vert\Sigma^*)$ to show that
$n\delta_n(G^\dagger)$ is bounded in probability. Taylor's theorem first
reduces this task to bounding the squared covariance error
$\|S_n-\Sigma^*\|_F^2$.

Define $f(S)=D(S\Vert\Sigma^*)$ for $S\in\PD^p$.
The Gaussian KL formula in Equation~\ref{eq:gaussian-kl} shows that $f$
is twice continuously differentiable on $\PD^p$.
Since $f(\Sigma^*)=0$ is its minimum and $\Sigma^*$ is an interior
point of $\PD^p$ in $\Sym^p$, the first derivative there is zero.
Choose $r>0$ so that the closed Frobenius ball
$\{S\in\Sym^p:\|S-\Sigma^*\|_F\leq r\}$ lies in $\PD^p$.
The continuous second derivatives are bounded on this compact ball.
The standard quadratic Taylor bound
\citep[Section~9.1.2, Equation~(9.13), p.~461]{boyd2004}, applied at
$\Sigma^*$, therefore gives a finite constant $C>0$ such that
\[
    0\leq D(S\Vert\Sigma^*)\leq C\|S-\Sigma^*\|_F^2
    \qquad\text{whenever }\|S-\Sigma^*\|_F\leq r.
\]
Both $r$ and $C$ are fixed independently of $n$.

To obtain the covariance-error rate, we use Markov's inequality: for any
nonnegative random variable $Z$ with finite expectation and any $a>0$,
\[
    \Pr(Z>a)\leq\frac{\mathbb E Z}{a}.
\]
We will apply this inequality to
$Z=\|S_n-\Sigma^*\|_F^2$, with threshold $a=M^2/n$ for $M>0$.
We therefore first compute $\mathbb E\|S_n-\Sigma^*\|_F^2$.

For this calculation, fix $i,j$ and set
$Y_{ij,k}=X_{O,i}^{(k)}X_{O,j}^{(k)}$. Then
$(S_n)_{ij}=n^{-1}\sum_{k=1}^nY_{ij,k}$.
The observations are independent and identically distributed across $k$,
so these products are also independent and identically distributed across
$k$; independence of the two coordinates within an observation is not
required. The known zero mean gives
$\mathbb E Y_{ij,k}=\Sigma^*_{ij}$.
Their variance is finite because Gaussian coordinates have finite fourth
moments and the Cauchy--Schwarz inequality gives
\[
    \mathbb E Y_{ij,1}^2
    \leq\left\{
      \mathbb E\bigl[(X_{O,i}^{(1)})^4\bigr]
      \mathbb E\bigl[(X_{O,j}^{(1)})^4\bigr]
    \right\}^{1/2}<\infty.
\]
Consequently, each sample-covariance entry is unbiased. Independence across
observations implies that the variance of the average is the sum of the
individual variances divided by $n^2$:
\begin{align*}
    \mathbb E\bigl[((S_n)_{ij}-\Sigma^*_{ij})^2\bigr]
    &=\operatorname{Var}((S_n)_{ij})\\
    &=\frac1{n^2}\sum_{k=1}^n\operatorname{Var}(Y_{ij,k})
      =\frac{\operatorname{Var}(Y_{ij,1})}{n}.
\end{align*}
The squared Frobenius norm is the sum of these squared entry errors.
Summing their expectations therefore gives
\[
    \mathbb E\|S_n-\Sigma^*\|_F^2=\frac Kn,
    \qquad
    K:=\sum_{i,j=1}^p\operatorname{Var}(Y_{ij,1})<\infty.
\]
Since $p$ and the true distribution are fixed, $K$ does not depend on $n$.
Substituting this expectation into Markov's inequality gives, for any
$M>0$,
\begin{align*}
    \Pr\!\left(\|S_n-\Sigma^*\|_F>\frac{M}{\sqrt n}\right)
    &=\Pr\!\left(\|S_n-\Sigma^*\|_F^2>\frac{M^2}{n}\right)\\
    &\leq\frac{\mathbb E\|S_n-\Sigma^*\|_F^2}{M^2/n}
      =\frac{K/n}{M^2/n}=\frac{K}{M^2}.
\end{align*}
For any $\varepsilon>0$, choosing $M>\sqrt{K/\varepsilon}$ makes this
probability less than $\varepsilon$ for every $n$. By the definition of
$O_p$ stated above, this proves
\[
    \|S_n-\Sigma^*\|_F=O_p(n^{-1/2}).
\]

Finally, combine the local Taylor bound with the expectation bound.
Outside the ball where the Taylor bound applies, Markov's inequality
gives $\Pr(\|S_n-\Sigma^*\|_F>r)\leq K/(nr^2)$.
Inside that ball, $nD(S_n\Vert\Sigma^*)$ is at most
$nC\|S_n-\Sigma^*\|_F^2$. Hence, for every $T>0$,
\[
    \Pr\!\left(nD(S_n\Vert\Sigma^*)>T\right)
    \leq\frac{K}{nr^2}+\frac{CK}{T}.
\]
For any prescribed probability bound, the second term can be made
arbitrarily small by choosing $T$ large, and the first by taking $n$
large. Thus $nD(S_n\Vert\Sigma^*)=O_p(1)$, or equivalently
$D(S_n\Vert\Sigma^*)=O_p(n^{-1})$.
The preceding inequality
$0\leq n\delta_n(G^\dagger)\leq nD(S_n\Vert\Sigma^*)$ now gives
\begin{equation}
    n\delta_n(G^\dagger)=O_p(1).
    \label{eq:comparator-rate}
\end{equation}
\end{feedbackrevision}

\medskip
\noindent\textbf{Complexity of every global minimizer.}
Let
\[
    \widehat G_n^+
    \in\operatorname*{arg\,min}_{G^+\in\Gclass}s_n(G^+)
\]
be any global minimizer.  Score optimality and cancellation of the common
constant $C_n$ give
\[
    n\delta_n(\widehat G_n^+)+\lambda_nc(\widehat G_n^+)
    \leq n\delta_n(G^\dagger)+\lambda_nc^\dagger.
\]
\begin{feedbackrevision}
Dropping the nonnegative first term on the left and dividing by
$\lambda_n>0$ yields
\[
    c(\widehat G_n^+)
    \leq c^\dagger+
    \frac{n\delta_n(G^\dagger)}{\lambda_n}.
\]
Equation~\ref{eq:comparator-rate} states that the numerator
$n\delta_n(G^\dagger)$ is bounded in probability. This does not imply
that the numerator converges to zero. However, dividing it by the
deterministic sequence $\lambda_n\to\infty$ makes the ratio converge to
zero in probability. To verify this, fix $\varepsilon>0$ and $\eta>0$.
Boundedness in probability provides a finite $M>0$ such that
$\Pr(n\delta_n(G^\dagger)>M)<\eta$ for all sufficiently large $n$.
For such $n$ large enough that $\varepsilon\lambda_n>M$, we obtain
\[
    \Pr\!\left(\frac{n\delta_n(G^\dagger)}{\lambda_n}
                 >\varepsilon\right)
    =\Pr\!\left(n\delta_n(G^\dagger)>\varepsilon\lambda_n\right)
    \leq\Pr\!\left(n\delta_n(G^\dagger)>M\right)<\eta.
\]
Since $\eta$ can be arbitrarily small, the ratio converges to zero in
probability, which is precisely
\[
    \frac{n\delta_n(G^\dagger)}{\lambda_n}=o_p(1).
\]
Thus the complexity bound above can be written as
$c(\widehat G_n^+)\leq c^\dagger+o_p(1)$.
Because complexity is integer-valued,
$c(\widehat G_n^+)>c^\dagger$ implies
$c(\widehat G_n^+)-c^\dagger\geq1$.
The bound therefore gives
\[
    \Pr\!\left(c(\widehat G_n^+)>c^\dagger\right)
    \leq\Pr\!\left(\frac{n\delta_n(G^\dagger)}{\lambda_n}
                     \geq1\right)\longrightarrow0.
\]
\end{feedbackrevision}
This conclusion
holds simultaneously for every global minimizer because the upper bound does
not depend on which minimizer is chosen.

\medskip
\noindent\textbf{Excluding families whose closure does not contain the true covariance.}
Fix a candidate $G^+\in\Gclass$ such that
$\Sigma^*\notin\Mbar(G^+)$.  \begin{feedbackrevision}
Lemma~\ref{lem:positive-kl-gap} gives
\[
    \Delta(G^+)
    =\inf_{\Sigma\in\Mbar(G^+)}D(\Sigma^*\Vert\Sigma)>0.
\]
Lemma~\ref{lem:profile-convergence} gives
$\delta_n(G^+)\to\Delta(G^+)$ almost surely. Almost-sure convergence
implies convergence in probability, so, for every $\varepsilon>0$,
\[
    \Pr\!\left(|\delta_n(G^+)-\Delta(G^+)|>\varepsilon\right)
    \longrightarrow0.
\]
By the definition of $o_p(1)$ stated above, this is equivalent to
\[
    \delta_n(G^+)=\Delta(G^+)+o_p(1).
\]
Equation~\ref{eq:graph-score} writes each graph score as the sum of the
common data-dependent term $C_n$, the empirical KL-divergence infimum
multiplied by $n$, and the structural penalty. Applying this formula to
both graphs, with $c(G^\dagger)=c^\dagger$, gives
\begin{align*}
    s_n(G^+)-s_n(G^\dagger)
    &=\bigl[C_n+n\delta_n(G^+)+\lambda_nc(G^+)\bigr]\\
    &\quad-\bigl[C_n+n\delta_n(G^\dagger)+\lambda_nc^\dagger\bigr]\\
    &=n\delta_n(G^+)-n\delta_n(G^\dagger)
      +\lambda_n\{c(G^+)-c^\dagger\}.
\end{align*}
The two occurrences of $C_n$ cancel because both scores use the same data;
this term does not depend on the candidate graph.

The convergence above gives $\delta_n(G^+)-\Delta(G^+)=o_p(1)$.
Multiplication by $n$ yields
\[
    n\delta_n(G^+)
    =n\Delta(G^+)+n\{\delta_n(G^+)-\Delta(G^+)\}
    =n\Delta(G^+)+o_p(n).
\]
Indeed, the remainder divided by $n$ is
$\delta_n(G^+)-\Delta(G^+)$, which converges to zero in probability.
For the reference graph, Equation~\ref{eq:comparator-rate} gives
$n\delta_n(G^\dagger)=O_p(1)$. Substituting these two bounds into the
exact score difference therefore gives
\begin{align*}
    s_n(G^+)-s_n(G^\dagger)
    &=n\Delta(G^+)+o_p(n)-O_p(1)\\
    &\quad+\lambda_n\{c(G^+)-c^\dagger\}.
\end{align*}
Here the subtracted $O_p(1)$ term is the nonnegative reference quantity
$n\delta_n(G^\dagger)$, which is bounded in probability.

Both graphs are fixed, so their complexity difference is constant.
Since $\lambda_n/n\to0$, the penalty difference divided by $n$ tends
to zero, even if the candidate has smaller complexity than the reference.
Also, an $O_p(1)$ term divided by $n$ converges to zero in probability.
Consequently,
\[
    \frac{s_n(G^+)-s_n(G^\dagger)}{n}
    =\Delta(G^+)+o_p(1)
    \xrightarrow{p}\Delta(G^+)>0.
\]
In particular, the score difference is at least $n\Delta(G^+)/2$ with
probability tending to one. This lower bound tends to infinity, so
$s_n(G^+)-s_n(G^\dagger)\xrightarrow{p}+\infty$.
\end{feedbackrevision}

Thus, for this candidate, the probability of being a global score
minimizer tends to zero.

The class $\Gclass$ is finite.  A finite union bound therefore makes the
preceding exclusion simultaneous over all candidates whose covariance-family
closures do not contain $\Sigma^*$. Combining that event with the
simultaneous complexity bound shows
that, with probability tending to one, every global minimizer satisfies
\[
    \Sigma^*\in\Mbar(\widehat G_n^+),
    \qquad c(\widehat G_n^+)\leq c^\dagger.
\]
Equivalently,
\[
    \operatorname*{arg\,min}_{G^+\in\Gclass}s_n(G^+)
    \subseteq\mathcal G_0(\Sigma^*),
\]
which proves Equation~\ref{eq:closure-selection-conclusion}.

The proof used only $\lambda_n\to\infty$ for the complexity comparison and
$\lambda_n=o(n)$ for exclusion of misspecified covariance models.  Hence it
also proves the stated extension to general deterministic penalty sequences.
\end{proof}

\subsection{Proof of Corollary~\ref{cor:quasi-consistency}}
\label{app:proof-quasi-consistency}

\begin{proof}
Let
\[
    \mathcal A_n:=\operatorname*{arg\,min}_{G^+\in\Gclass}s_n(G^+)
\]
be the set of all globally minimizing graphs. This set is nonempty because
the candidate class is finite and every graph has a finite score, as shown
in Appendix~\ref{app:observed-score-derivation}. Define the event
\[
    E_n:=\{\mathcal A_n\subseteq\mathcal G_0(\Sigma^*)\}.
\]
Theorem~\ref{thm:closure-consistency} gives $\Pr(E_n)\to1$.

Fix a data realization in $E_n$ and take any
$\widehat G_n^+\in\mathcal A_n$. By the definition of $\mathcal G_0(\Sigma^*)$,
\[
    \Sigma^*\in\Mbar(\widehat G_n^+),\qquad
    c(\widehat G_n^+)\leq c^\dagger.
\]
Assumption~\ref{ass:pointwise-identification} therefore gives
$\widehat G_n^+\cong_O G^\dagger$. By the definition of marginal
quasi-equivalence, this means
\[
    \dim\bigl(\M(\widehat G_n^+)\cap\M(G^\dagger)\bigr)
    =\dim\M(\widehat G_n^+)=\dim\M(G^\dagger).
\]
The reference graph $G^\dagger$ is distribution-equivalent to the true
graph, so $\M(G^\dagger)=\M(G^{*+})$. Substituting this equality into the
preceding identity gives
\[
    \dim\bigl(\M(\widehat G_n^+)\cap\M(G^{*+})\bigr)
    =\dim\M(\widehat G_n^+)=\dim\M(G^{*+}).
\]
Thus $\widehat G_n^+\cong_O G^{*+}$. This step uses equality of covariance
families, not transitivity of quasi-equivalence.

Because the minimizer was arbitrary, the conclusion holds simultaneously
for every graph in $\mathcal A_n$, including when the score has ties. Hence
\[
    E_n\subseteq
    \left\{\mathcal A_n\subseteq
        \{G^+\in\Gclass:G^+\cong_O G^{*+}\}\right\}.
\]
It follows that
\[
    1\geq
    \Pr\!\left\{\mathcal A_n\subseteq
        \{G^+\in\Gclass:G^+\cong_O G^{*+}\}\right\}
    \geq\Pr(E_n)\longrightarrow1,
\]
which proves Equation~\ref{eq:quasi-consistency-conclusion}.
\end{proof}

\subsection{Details and proof of the Bernoulli formulation}
\label{app:bernoulli-details}

This subsection supplies the parameterization, likelihood, and
differentiability details used in Section~\ref{sec:bernoulli}, followed by
the proof of Proposition~\ref{prop:bernoulli-exactness}.

\paragraph{Effective latent loadings.}
For a graph with $\ell$ latent variables, place
$\Omega_L^{1/2}\Lambda$ in $\ell$ rows of $\Gamma$ and fill the remaining
$L_{\max}-\ell$ rows with zeros. The corresponding gates select the
graph's observed edges, active latent slots, and latent-to-observed edges.
Absorbing latent noise standard deviations preserves the covariance
family: conversely, any effective loadings can be represented using unit
latent variances. All configurations share the same continuous parameters
$\vartheta=(W,\Gamma,\Omega_O)$. For every binary $(H,Z)$, the latent
contribution in Equation~\ref{eq:masked-parameters} is positive
semidefinite, so $\Psi(H,Z)\succ0$ because $\Omega_O\succ0$.

\paragraph{Sampled likelihood and complexity.}
For nonsingular $A(M)$, evaluate Equation~\ref{eq:full-observed-nll}
at the masked covariance and remove the Gaussian constant
$np\log(2\pi)/2$. Using the identities derived in
Appendix~\ref{app:observed-score-derivation}, this gives
\begin{equation}
\begin{split}
    L_n(M,H,Z;\vartheta)={}&-n\log|\det A(M)|+\frac n2\log\det\Psi(H,Z)\\
    &+\frac n2\tr\!\left[\Psi(H,Z)^{-1}
    A(M)^\top S_n A(M)\right].
\end{split}
\label{eq:masked-likelihood}
\end{equation}
As in the main text, $L_n=+\infty$ when $A(M)$ is singular.
The sampled structural complexity is
\begin{equation}
    c(M,H,Z)=\|M\|_1+
    \sum_{h=1}^{L_{\max}}Z_h\bigl(1+\|H_{h\cdot}\|_1\bigr),
    \label{eq:masked-complexity}
\end{equation}
where $H_{h\cdot}$ denotes row $h$ of $H$. The first term counts selected
edges among observed variables. Each active slot contributes one latent variable and one
unit for each of its selected edges to observed variables. An inactive
slot contributes neither covariance nor complexity, regardless of its
edge gates. The active latent count is $\sum_hZ_h$, with expectation
$\sum_h\pi_h$. Independence of the gates gives
\[
    \mathbb E[c(M,H,Z)]
    =\sum_{i\neq j}q_{ij}
      +\sum_{h=1}^{L_{\max}}\pi_h
        \left(1+\sum_{j=1}^p r_{hj}\right).
\]
Thus Equation~\ref{eq:exact-bernoulli-objective} is exactly
$\mathbb E[L_n+\lambda_nc]$, not an approximation obtained by replacing
the binary masks in the likelihood with their probabilities.

\paragraph{Differentiability and singular configurations.}
Fix $\vartheta$ and suppose every binary configuration has finite
likelihood. Its probability is a product of gate probabilities or their
complements, with at most one factor involving each scalar probability.
The expectation is a finite sum of these products multiplied by the
configuration scores. It is therefore a polynomial that is affine
(linear plus a constant) in each scalar probability when all others are
fixed. In particular, it is differentiable in the probabilities, but need
not be convex in all of them jointly.

{\color{black}
The same finite-sum representation also gives joint differentiability in
the probabilities and continuous parameters. At a parameter vector
$\vartheta$ for which every binary configuration has finite likelihood,
the noise variances remain positive and all required determinants remain
nonzero on a sufficiently small open neighborhood of $\vartheta$. Each
configuration's likelihood in Equation~\ref{eq:masked-likelihood} is
continuously differentiable there because the matrix inverses and logarithms
in that expression are continuously differentiable on the stated domains.
Adding each configuration's constant complexity penalty preserves this
property. The finite sum of configuration scores weighted by polynomial
configuration probabilities is therefore jointly continuously differentiable
in the gate probabilities and $\vartheta$ on this domain.
}

If some configurations have singular $A(M)$, any positive probability
assigned to one of them makes the objective infinite. A configuration of
zero probability is excluded from the expectation; no product of zero
and infinity is evaluated. Differentiability at a finite objective value
does not automatically extend to probability changes that activate a
singular configuration. Instead, if some probabilities are fixed at zero
or one so that every singular configuration is excluded throughout the
remaining probability domain, the same polynomial argument applies to
the free probabilities on that domain. Probabilities equal to zero or one
are permitted; when all are binary, the distribution selects a single
configuration deterministically.

\begin{feedbackrevision}
\paragraph{Simultaneous invertibility.}
When every observed-edge probability lies strictly between zero and one,
all binary observed-edge masks have positive probability. Finite expected
likelihood then requires
\[
    \det(I_p-M\od W)\neq0
    \qquad\text{for every binary observed-edge mask }M.
\]
For each fixed mask, this determinant is a polynomial in the off-diagonal
entries of $W$, and it equals one at $W=0$. It is therefore not the zero
polynomial. The excluded weights form a finite union of zero sets of
nonzero polynomials. Each such zero set has Lebesgue measure zero, as
explained in Appendix~\ref{app:genericity}, so their union also has measure
zero. The permitted weights form a Euclidean open set because there are
only finitely many determinant conditions and each determinant is
continuous. Thus this requirement does not force a norm bound or an
acyclic graph, although approaching a singular configuration can cause
numerical difficulties.

\end{feedbackrevision}
\paragraph{Proof of Proposition~\ref{prop:bernoulli-exactness}.}

\begin{proof}
Let $\mathcal K$ denote the finite set of all binary triples
$k=(M,H,Z)$.  The optimization has a finite feasible value: take
$W=\Gamma=0$ and $\Omega_O=I_p$.  Moreover, every finite likelihood is
bounded below by $K_n$ from Equation~\ref{eq:parameter-covariance-profile},
and the complexity penalty is nonnegative.  Hence the infima below are finite.
For fixed gate probabilities $(Q,R,\pi)$, write $w_k(Q,R,\pi)$ for the
product Bernoulli probability of configuration $k$.
Then $w_k\geq0$, $\sum_{k\in\mathcal K}w_k=1$, and, for every
$\vartheta$ for which the expected objective is finite,
\[
    \mathbb E[\feedbackedit{F_n^{\mathrm{bin}}}(M,H,Z;\vartheta)]
    =\sum_{k\in\mathcal K}w_k(Q,R,\pi)\feedbackedit{F_n^{\mathrm{bin}}}(k;\vartheta).
\]
Terms with zero probability make no contribution. The remaining terms
form a \emph{convex combination}: a weighted average with nonnegative
weights summing to one. At least one score in this average is no larger
than the average itself. Hence there is a configuration
$k_0=(M_0,H_0,Z_0)$ in the \emph{support}, meaning $w_{k_0}>0$, such that
\begin{equation}
    \feedbackedit{F_n^{\mathrm{bin}}}(k_0;\vartheta)
    \leq \mathbb E[\feedbackedit{F_n^{\mathrm{bin}}}(M,H,Z;\vartheta)].
    \label{eq:support-point-bound}
\end{equation}
For any such tuple $(Q,R,\pi,\vartheta)$,
\begin{align*}
    \mathbb E[\feedbackedit{F_n^{\mathrm{bin}}}(M,H,Z;\vartheta)]
    &\geq \feedbackedit{F_n^{\mathrm{bin}}}(k_0;\vartheta)\\
    &\geq \inf_{\vartheta'}\feedbackedit{F_n^{\mathrm{bin}}}(k_0;\vartheta')\\
    &\geq \min_{k\in\mathcal K}\inf_{\vartheta'}
       \feedbackedit{F_n^{\mathrm{bin}}}(k;\vartheta').
\end{align*}
Taking the infimum over $(Q,R,\pi,\vartheta)$ on the left proves
\[
    \inf_{Q,R,\pi,\vartheta}\mathbb E[\feedbackedit{F_n^{\mathrm{bin}}}]
    \geq\min_{k\in\mathcal K}\inf_\vartheta
       \feedbackedit{F_n^{\mathrm{bin}}}(k;\vartheta).
\]

For the reverse inequality, fix an arbitrary binary configuration
$k=(M,H,Z)$.  Choosing the entries of $Q$, $R$, and $\pi$ equal to the
corresponding zeros and ones in $k$ makes the product Bernoulli distribution
a \emph{point mass} at $k$, meaning $w_k=1$ and all other configuration
probabilities are zero. The expectation then equals
$\feedbackedit{F_n^{\mathrm{bin}}}(k;\vartheta)$ for every $\vartheta$. Taking the infimum over
$\vartheta$ and then choosing the best binary configuration gives
\[
    \inf_{Q,R,\pi,\vartheta}\mathbb E[\feedbackedit{F_n^{\mathrm{bin}}}]
    \leq\min_{k\in\mathcal K}\inf_\vartheta
       \feedbackedit{F_n^{\mathrm{bin}}}(k;\vartheta).
\]
The two inequalities prove Equation~\ref{eq:global-exactness}.

If the infimum of the expected objective is attained at a possibly fractional
tuple, apply Equation~\ref{eq:support-point-bound} to that minimizer.  Binary
gate probabilities concentrated on $k_0$ give a feasible point whose value is
no larger than the attained global minimum.  Its value must therefore be
equal to the minimum, proving the existence of a binary global minimizer.

It remains to show that optimizing over all binary masks has the same optimum
as optimizing over the candidate class $\Gclass$, which requires every active
latent to have at least two observed children.  First, a slot with $Z_h=0$
has no contribution to either $\Psi$ or the complexity in
Equation~\ref{eq:masked-complexity}; it can simply be omitted.  Next, suppose
$Z_h=1$ but $\|H_{h\cdot}\|_1=0$.  The $h$th latent contribution to $\Psi$ is
zero.  Setting $Z_h=0$ therefore preserves the likelihood and strictly lowers
the structural complexity.

Finally, suppose an active slot has exactly one child, indexed by $j$.  Let
$e_j$ be the $j$th standard basis vector in $\mathbb R^p$.  Its contribution
to the effective noise covariance is
\[
    (H_{h\cdot}\od\Gamma_{h\cdot})^\top
    (H_{h\cdot}\od\Gamma_{h\cdot})
    =\Gamma_{hj}^2e_je_j^\top.
\]
Define a new diagonal observed-noise covariance by
\[
    \Omega'_O=\Omega_O+\Gamma_{hj}^2e_je_j^\top
\]
and deactivate slot $h$.  The matrix $\Omega'_O$ remains diagonal and positive
definite, and the total effective covariance $\Psi$ is unchanged.  The
observed coefficient matrix $A(M)$ is also unchanged, so every term in the
likelihood is identical, while the structural complexity decreases by the
latent node and its single outgoing edge.  Repeating this operation removes
all active slots with fewer than two children without increasing the score.
This argument applies separately to a fixed binary configuration.
Because the adjustment of $\Omega_O$ depends on that configuration, it
does not define a simultaneous transformation of a Bernoulli distribution
whose configurations share the same continuous parameters.

After inactive slots are deleted and the remaining slots are relabeled, the
resulting mask represents a graph in $\Gclass$.  Conversely, every graph in
$\Gclass$ can be represented by some binary $(M,H,Z)$. For a fixed mask
representing $G^+$, the infimum of the binary-mask objective over the
continuous parameters is therefore
$K_n+n\delta_n(G^+)+\lambda_nc(G^+)$, by
Equation~\ref{eq:parameter-covariance-profile}.
Since $C_n=K_n+np\log(2\pi)/2$, this value equals
$s_n(G^+)-np\log(2\pi)/2$. The conversion from arbitrary masks to
admissible graphs cannot increase the objective, and every admissible
graph has a binary representation. Therefore the common infimum is
$\min_{G^+\in\Gclass}s_n(G^+)-np\log(2\pi)/2$, and the two formulations
have the same minimizing graphs after the stated conversion.
\end{proof}

\paragraph{Scope of the exactness result.}
The proposition does not assert that all globally minimizing probabilities
are binary. At a finite global minimum, a fractional probability vector
can also be optimal if every configuration assigned positive probability
attains that same minimum score at the shared continuous parameters.
The proof establishes the existence of a deterministic minimizer only
when the infimum is attained. Direct evaluation of the expectation sums
\feedbackedit{over exponentially many configurations. The closed-form complexity
expectation does not remove this cost for the likelihood expectation.
Equality of global infima neither bounds the variance of sampled gradient
estimates nor guarantees that a numerical optimizer reaches either
infimum.}

\section{Scope and genericity of the identification assumptions}
\label{app:assumption-scope}

We examine the three sufficient identification assumptions in
Section~\ref{sec:consistency-result}.
Subsection~\ref{app:genericity} establishes genericity of algebraic
faithfulness and full-dimensional overlap. Structural minimality concerns
the target covariance family, the entire candidate graph class, and the
chosen structural complexity, not the particular value of $\Sigma^*$.
It is not implied by the genericity result.
Subsection~\ref{app:minimality-limitation} shows that the penalty counting
edges and latent variables can violate structural minimality.
Subsection~\ref{app:positive-minimality} presents a weaker condition at the
true covariance and sufficient conditions for a restricted class of
covariance families.

\subsection{Genericity of algebraic faithfulness and full-dimensional overlap}
\label{app:genericity}

Fix a true graph and a finite candidate class. We show that algebraic
faithfulness and full-dimensional overlap hold for almost every admissible
choice of the true model parameters. Here, ``almost every'' means that
the exceptional set has Lebesgue measure zero, or zero volume, in the free
parameter coordinates. This is the meaning of \emph{generic} in this
result; it does not establish these assumptions at every true covariance.

\begin{lemma}[Genericity of faithfulness and model overlap]
\label{lem:genericity}
Assume Proposition~\ref{prop:model-geometry} holds for every graph in the
finite candidate class. Let $d$ be the number of free parameter coordinates
of the true graph, and write $\mathcal P(G^{*+})\subseteq\mathbb R^d$. Set
\[
    V^*:=V(G^{*+})=V(G^\dagger),
    \qquad
    \phi^*:\mathcal P(G^{*+})\longrightarrow V^*,
    \quad \phi^*(\theta)=\Sigma_O(\theta).
\]
Thus $\phi^*$ sends each admissible parameter vector to its observed
covariance. Outside a set of Lebesgue measure zero in $\mathbb R^d$, every
admissible true parameter vector satisfies the following statements:
\begin{enumerate}
    \item Assumption~\ref{ass:algebraic-faithfulness} holds simultaneously
    for all candidates;
    \item Assumption~\ref{ass:model-overlap} holds simultaneously for all
    candidates satisfying its premises.
\end{enumerate}
Moreover, for the second conclusion, the full-dimensional common subset of
the two covariance models can be chosen to contain $\Sigma^*$.
\end{lemma}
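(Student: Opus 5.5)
The plan is to reduce both conclusions to one mechanism. For each fixed candidate $G^+$ the exceptional true parameters will lie in $\{\theta:f(\phi^*(\theta))=0\}$ for some polynomial $f$ in covariance coordinates that vanishes on a set but not on all of $V^*$. Such a set has Lebesgue measure zero, and a finite union over the finite class $\Gclass$ then finishes the proof.

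The measure-zero step is borrowed from Step~3 of the proof of Proposition~\ref{prop:model-geometry}. There, $f(\phi^*(\theta))=P_f(\theta)/\Delta(\theta)^{2r}$ with $P_f$ a polynomial on $\mathbb R^d$. If $f$ does not vanish on $V^*$, then it does not vanish on $\M(G^{*+})$, because $V^*$ is the Zariski closure of $\M(G^{*+})$. Hence $P_f$ is not the zero polynomial, and its zero set has measure zero.

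\emph{Faithfulness.} Fix $G^+$ with $c(G^+)\le c^\dagger$ and $V^*\not\subseteq V(G^+)$; candidates with $V^*\subseteq V(G^+)$ need nothing. Choose a polynomial $f$ that vanishes on $V(G^+)$ but not on $V^*$. By Appendix~\ref{app:closure-properties}, $\Mbar(G^+)\subseteq V(G^+)$. So if $\Sigma^*=\phi^*(\theta)\in\Mbar(G^+)$, then $P_f(\theta)=0$. The set of true parameters violating Assumption~\ref{ass:algebraic-faithfulness} for this candidate is therefore null, and taking the finite union over candidates gives the first conclusion.

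\emph{Overlap.} Fix $G^+$ with $V(G^+)=V^*=V$ and write $k=\dim V$. I will build a bad set $E_G\subseteq V$ as the union of three pieces:
\begin{itemize}
\item[(a)] the singular locus of $V$;
\item[(b)] the set $\cl_{\Sym^p}\M(G^+)\setminus\operatorname{int}_V\M(G^+)$;
\item[(c)] the analogous set for $G^{*+}$.
\end{itemize}
The key claim is $\dim E_G<k$. For (a) this is standard for irreducible $V$. For (b) and (c) I will use the fact that the frontier of a semialgebraic set has dimension strictly smaller than the set itself \citep[Section~3.3]{coste2000}. Since $\M(G^+)$ has dimension $k$ in $V$ by Proposition~\ref{prop:model-geometry}, the points of its closure that are not relative interior points form a semialgebraic set of dimension $<k$.

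Granting the claim, the Zariski closure of $E_G$ has dimension $<k$. It is therefore a proper algebraic subset of the irreducible set $V$, by Lemma~\ref{lem:strict-inclusion-dimension} and the dimension theorem. This yields a polynomial $g$ that vanishes on $E_G$ but not on $V$, and the mechanism above makes $\{\theta:\phi^*(\theta)\in E_G\}$ null.

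Outside this null set the argument runs as follows.
\begin{itemize}
\item The hypothesis $\Sigma^*\in\Mbar(G^+)$ together with $\Sigma^*\notin E_G$ forces $\Sigma^*\in\operatorname{int}_V\M(G^+)$.
\item Trivially $\Sigma^*\in\M(G^{*+})$, and $\Sigma^*\notin E_G$ gives $\Sigma^*\in\operatorname{int}_V\M(G^{*+})$.
\item Intersecting the two relative neighborhoods gives a set $B_\varepsilon(\Sigma^*)\cap V\subseteq\M(G^+)\cap\M(G^{*+})$.
\item Since $\Sigma^*$ is a regular point of $V$, this neighborhood is a $k$-dimensional manifold patch. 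By the implicit function theorem, some coordinate projection of it contains an open ball in $\mathbb R^k$, so it has dimension $k$.
\end{itemize}
This proves Assumption~\ref{ass:model-overlap} with $\M(G^\dagger)=\M(G^{*+})$, and shows that the full-dimensional common subset contains $\Sigma^*$. A finite union over candidates completes the second conclusion.

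The main obstacle is the dimension claim for (a)--(c), especially the real-algebraic subtleties. For a real irreducible $V$ the smooth points of top dimension must be handled carefully: I would define the "regular" locus as points where $V$ is locally a $k$-manifold and cite \citet[Section~3.3]{coste2000} that its complement has dimension $<k$. For the frontier statement I must apply it relative to $V$ rather than $\Sym^p$. I would first try to show that the relative boundary $\cl\M\setminus\operatorname{int}_V\M$ lies in the union of the frontier $\cl\M\setminus\M$, which has dimension $<k$, and the set of points of $\M$ where $\M$ has local dimension $<k$ or $V$ is non-regular. Each of these has dimension $<k$ by cell decomposition.
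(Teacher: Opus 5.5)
Your overall architecture matches the paper's. You use the same mechanism to pull back a proper algebraic subset of $V^*$ to a null set of parameters. Your faithfulness argument is the same, since your polynomial vanishing on $V(G^+)$ but not on $V^*$ is just a defining equation of $V^*\cap V(G^+)$. Your three-piece bad set for overlap is also the paper's, except that the paper uses the Akbulut--King exceptional set $E_V$ where you use the singular locus.

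The gap is in the step you flag as the main obstacle: bounding the dimension of the points of $\M(G^+)$ itself that are not in $\operatorname{int}_V\M(G^+)$. The frontier fact applied to $\M(G^+)$ only controls $\cl\M(G^+)\setminus\M(G^+)$. For the remaining points you propose an inclusion: a point of $\M(G^+)$ outside $\operatorname{int}_V\M(G^+)$ must be a point where $\M(G^+)$ has local dimension $<k$ or where $V$ is non-regular. That inclusion is false. Take $V=\mathbb R^2$ and $S$ the closed upper half-plane. Every point of the $x$-axis lies in $S$, $S$ has local dimension $2$ there, and $V$ is smooth, yet none of these points is interior. This is not an artificial worry: $\M(G^+)$ is the image of an open parameter set under a rational map, and such images can contain their own boundary points at folds. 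For example, $t\mapsto t^2$ maps $\mathbb R$ onto $[0,\infty)$, and the image contains the boundary point $0$. So cell decomposition applied to your two auxiliary sets does not bound the relative boundary.

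The missing idea is to pass to the complement, which also makes the local-dimension set unnecessary. Let $S=\M(G^+)$. A point of $S$ that is not in $\operatorname{int}_V S$ is a limit of points of $V\setminus S$ but does not itself lie in $V\setminus S$. It therefore belongs to the frontier $\cl(V\setminus S)\setminus(V\setminus S)$; since $V$ is Euclidean closed, closures relative to $V$ and in $\Sym^p$ agree.

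Now apply the frontier bound (Coste, Proposition~3.16(2)) to $T=V\setminus S$, which is semialgebraic of dimension at most $k$. This piece has dimension $<k$. Combined with your bound on $\cl S\setminus S$, the relative boundary of $S$ in $V$ has dimension $<k$; this is exactly the decomposition in Equation~\ref{eq:relative-boundary-decomposition}. Do the same for $\M(G^{*+})$, and the rest of your overlap argument goes through as written.
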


\begin{proof}
We first explain how an exceptional algebraic set of covariances gives a
measure-zero set of true parameters. We then apply this argument to
faithfulness and overlap.

\paragraph{Exceptional covariance sets and parameter values.}
Let $E\subsetneq V^*$ be an algebraic set strictly smaller than $V^*$.
There is a polynomial $f$ in the covariance entries that equals zero on
$E$ but not throughout $V^*$: choose a point of $V^*\setminus E$ and a
defining equation of $E$ that does not hold at that point.

Write $\Delta(\theta)=\det(I_p-B_{OO})$. As shown in the proof of
Proposition~\ref{prop:model-geometry}, each entry of $\phi^*(\theta)$ is
a ratio of polynomials in the free parameters, with denominator
$\Delta(\theta)^2$. Substituting these entries into $f$ and putting the
terms over a common denominator gives
\[
    f(\phi^*(\theta))=\frac{P(\theta)}{\Delta(\theta)^k}
\]
for a polynomial $P$ and a nonnegative integer $k$. The polynomial $P$
cannot be identically zero. Otherwise, $f$ would equal zero on every
generated covariance and therefore on its Zariski closure $V^*$,
contradicting the choice of $f$. Since $\Delta(\theta)\neq0$ for admissible
parameters,
\[
    \{\theta\in\mathcal P(G^{*+}):\phi^*(\theta)\in E\}
    \subseteq\{\theta\in\mathbb R^d:P(\theta)=0\}.
\]
The zero set of a nonzero polynomial has Lebesgue measure zero. This follows
by induction on the number of variables: a nonzero polynomial in one
variable has finitely many roots. In several variables, view the polynomial
as a polynomial in the last variable. At least one coefficient polynomial
is nonzero, so the set where all coefficient polynomials are zero has
measure zero by induction. Outside that set, the polynomial has finitely
many roots in the last variable; integrating over the other coordinates
gives the claim. Thus the displayed
set of exceptional parameter values has measure zero. This reasoning is
needed because taking the inverse image of a measure-zero set does not,
in general, preserve measure zero.

\medskip
\noindent\textit{Relevant-candidate algebraic faithfulness.}
Fix a candidate graph $G^+$ such that
\[
    V^*\not\subseteq V(G^+).
\]
The intersection $E=V^*\cap V(G^+)$ is an algebraic set strictly smaller
than $V^*$. The preceding argument shows that the true parameters whose
covariances belong to $E$ form a measure-zero set. Therefore, for this
candidate and outside that exceptional set,
\[
    \Sigma^*=\phi^*(\theta^*)\notin V(G^+).
\]
The candidate class is finite, so the union of these exceptional parameter
sets over all candidates with $V^*\not\subseteq V(G^+)$ still has measure
zero. Outside this union, any candidate whose Zariski closure contains
$\Sigma^*$ must therefore contain all of $V^*=V(G^\dagger)$:
\[
    \Sigma^*\in V(G^+)
    \quad\Longrightarrow\quad
    V(G^\dagger)\subseteq V(G^+)
\]
simultaneously for every candidate.  In particular, because
$\Mbar(G^+)\subseteq V(G^+)\cap\PD^p$
(Appendix~\ref{app:closure-properties}), it implies
Assumption~\ref{ass:algebraic-faithfulness}, whose premise is restricted to
candidates with $c(G^+)\leq c^\dagger$ and
$\Sigma^*\in\Mbar(G^+)$.

\medskip
\noindent\textit{Full-dimensional model overlap.}
Now fix a candidate with
\[
    V(G^+)=V^*=:V.
\]
Using the interior relative to $V$ defined in
Appendix~\ref{app:topology-notation}, a matrix $\Sigma\in S\subseteq V$
belongs to $\operatorname{int}_V S$ exactly when some $\varepsilon>0$
satisfies
\[
    \{\Sigma'\in V:\|\Sigma'-\Sigma\|_F<\varepsilon\}\subseteq S.
\]
The intersection of the ball with $V$, rather than the full ball in
$\Sym^p$, must be contained in $S$. Write
\[
    I_G:=\operatorname{int}_V\M(G^+),
    \qquad
    I_\dagger:=\operatorname{int}_V\M(G^\dagger)
\]
for the interiors of the two covariance families relative to $V$. Both covariance
families have dimension $\dim V$ by Proposition~\ref{prop:model-geometry}.

For a semialgebraic $S\subseteq V$, its boundary relative to $V$ is
$\cl_V S\setminus\operatorname{int}_V S$. A boundary point either lies
outside $S$ but is a limit of points in $S$, or lies in $S$ and is a limit
of points in $V\setminus S$. Hence
\begin{equation}
    \cl_V S\setminus\operatorname{int}_V S
    \subseteq
    (\cl_V S\setminus S)
    \cup
    \bigl(\cl_V(V\setminus S)\setminus(V\setminus S)\bigr)\feedbackedit{.}
    \label{eq:relative-boundary-decomposition}
\end{equation}
For any nonempty semialgebraic set $T$, the added limit points
$\cl_{\Sym^p}T\setminus T$ have dimension strictly smaller than $\dim T$
\citep[Proposition~3.16(2)]{coste2000}. Here $V$ is Euclidean closed, so
the closure of a subset of $V$ within $V$ agrees with its closure in the
full symmetric matrix space. Apply this result to $T=S$ and
$T=V\setminus S$; an empty set contributes no points. Since both sets
have dimension at most $\dim V$, the displayed boundary has dimension
strictly smaller than $\dim V$. These sets are semialgebraic by the
closure and set-operation properties in \citet[Section~2.1.2 and
Corollary~2.5]{coste2000}.

We also need a local-coordinate property of the irreducible real algebraic set $V$
\citep[Section~1, pp.~426--427]{akbulut1981}. Write $r=\dim V$. There is
an algebraic subset $E_V\subsetneq V$, with $\dim E_V<r$, such that near
each point of $V\setminus E_V$, the set $V$ can be described as follows:
$r$ selected covariance coordinates vary over a nonempty open subset of
$\mathbb R^r$, and the remaining coordinates are continuously
differentiable functions of those selected coordinates. The choice of
coordinates may depend on the point. Consequently, intersecting $V$ with
a sufficiently small open ball around such a point gives a semialgebraic
set of dimension $r$. The exclusion of
$E_V$ concerns this local description of the set $V$, not matrix
invertibility; membership in $E_V$ is not defined by a zero covariance
determinant.

Combining this property with the boundary argument shows that each of
the semialgebraic sets
\[
    \cl_V\M(G^+)\setminus I_G,
    \qquad
    \cl_V\M(G^\dagger)\setminus I_\dagger,
    \qquad
    E_V
\]
has dimension strictly smaller than $\dim V$. Denote their union by $B_G$.
A finite union has the maximum dimension of its members. Moreover,
\citet[Theorem~3.20]{coste2000} gives
\[
    \dim\overline{B_G}^{\,\mathrm{Zar}}=\dim B_G<\dim V.
\]
Since $B_G\subseteq V$ and $V$ is algebraic, this Zariski closure is an
algebraic subset strictly smaller than $V$.

Apply the exceptional-parameter argument at the start of the proof with
$E=\overline{B_G}^{\,\mathrm{Zar}}$. The admissible true parameters whose
covariances belong to $B_G$ form a measure-zero set. The union of these
sets over the finitely many candidates with $V(G^+)=V^*$ also has measure
zero. Take a true parameter outside this union and suppose the premises
of Assumption~\ref{ass:model-overlap} hold for $G^+$. Since
\[
    \Sigma^*\in\Mbar(G^+)\subseteq\cl_V\M(G^+)
\]
and $\Sigma^*\notin B_G$, we have $\Sigma^*\in I_G$. Also,
$\Sigma^*\in\M(G^\dagger)$ because $G^\dagger$ and the true graph have
the same covariance family. Exclusion from $B_G$ therefore gives
$\Sigma^*\in I_\dagger$ and $\Sigma^*\notin E_V$.

Consider the common set
\[
    U_{G^+}
    :=I_G\cap I_\dagger\cap(V\setminus E_V)\cap\PD^p.
\]
It contains $\Sigma^*$, is semialgebraic, and is open relative to $V$.
Indeed, $I_G$ and $I_\dagger$ are interiors relative to $V$,
$V\setminus E_V$ is open relative to $V$ because $E_V$ is Euclidean
closed, and $\PD^p$ is open in $\Sym^p$ by the strict inequalities in
Sylvester's criterion. The local-coordinate description at $\Sigma^*$
therefore supplies an open neighborhood of $\Sigma^*$ relative to $V$,
contained in $U_{G^+}$, whose projection contains a nonempty open subset
of $\mathbb R^r$. Our dimension definition gives
$\dim U_{G^+}\geq r$; the reverse inequality follows from
$U_{G^+}\subseteq V$. Thus
\[
    \dim U_{G^+}=\dim V.
\]
Since $U_{G^+}\subseteq\M(G^+)\cap\M(G^\dagger)$,
\[
    \dim(\M(G^+)\cap\M(G^\dagger))\geq\dim V.
\]
On the other hand, the intersection is contained in $\M(G^+)$, whose
dimension is $\dim V$ by Proposition~\ref{prop:model-geometry}.  The reverse
inequality follows, and hence
\[
    \dim(\M(G^+)\cap\M(G^\dagger))=\dim V.
\]
This is Assumption~\ref{ass:model-overlap}, and the construction shows that
the full-dimensional common subset may be chosen to contain $\Sigma^*$.
Finally, the union of the exceptional parameter sets for faithfulness and
overlap has measure zero, so both conclusions hold outside a single such
set.
\end{proof}

Lemma~\ref{lem:genericity} does not prove structural minimality.
Assumption~\ref{ass:structural-minimality} compares strict inclusion of
Zariski closures with the discrete complexity $c(G^+)$ and must be verified or
imposed for the chosen candidate class.  Consequently, if structural
minimality holds, then relevant-candidate algebraic faithfulness and
full-dimensional overlap hold for generic true parameters by
Lemma~\ref{lem:genericity}.  Lemma~\ref{lem:geometric-identification} then
implies that identification at the true covariance also holds generically,
and Corollary~\ref{cor:quasi-consistency} yields consistency up to marginal
quasi-equivalence.  This generic conclusion can fail at exceptional parameter
values: structural minimality does not by itself establish pointwise
identification at every covariance.

\subsection{Failure of structural minimality}
\label{app:minimality-limitation}

The following example shows why structural minimality must be checked for
the chosen complexity.  It does not contradict the conditional recovery
result, but rules out an unconditional quasi-equivalence guarantee over the
full candidate class.

Take $p=4$ and $L_{\max}\geq4$.  Let $G_\square^+$ have no observed edges
and four exogenous latent variables, with respective child sets
\[
    \{1,2\},\qquad\{2,3\},\qquad\{3,4\},\qquad\{4,1\}.
\]
Its complexity is $c(G_\square^+)=8+4=12$, and
\[
    V_\square:=V(G_\square^+)
    =\{\Sigma\in\Sym^4:\Sigma_{13}=\Sigma_{24}=0\},
    \qquad \dim V_\square=8.
\]
To verify the claimed Zariski closure, fix the latent variances to one and take
nonzero loadings on each permitted latent edge.  Each of the four allowed
off-diagonal covariances is the product of the two loadings of its own
latent, so these four covariances can be varied independently to first
order.  The four observed noise variances independently vary the diagonal
entries and compensate for changes in their latent contributions.  The
Jacobian of the covariance map, whose entries are partial derivatives of
covariance coordinates with respect to the free parameters, therefore has
rank eight at such an admissible parameter point with strictly positive
observed noise variances. Selecting eight parameter coordinates and keeping the
others fixed gives an invertible Jacobian for these eight covariance
entries. The inverse function theorem therefore shows that their projection
contains a nonempty open subset of $\mathbb R^8$.
Definition~\ref{def:semialgebraic-dimension} and containment in the displayed
eight-dimensional linear space therefore give covariance-family dimension
eight.  Proposition~\ref{prop:model-geometry} then implies that its Zariski
closure has dimension eight; Lemma~\ref{lem:strict-inclusion-dimension}
then shows that it is the entire linear space.

We next show that every candidate $G^+$ with $V(G^+)=V_\square$ has
complexity at least twelve.  For every candidate graph in $\Gclass$, an
observed covariance $\Sigma_{ij}$ is identically zero over its admissible
parameter space if and only if $i$ and $j$ have no common ancestor, with
each vertex counted as its own ancestor.  Indeed, a noise source can affect
an observed variable only if there is a directed path from the source to
that variable, so absence of a
common ancestor means that no structural noise contributes to both
variables, so their covariance is zero.
Conversely, if a common ancestor exists, choose all permitted coefficients
strictly positive and sufficiently small that the spectral radius of the
full coefficient matrix is less than one.  This choice supplies an
admissible parameter point; it imposes no additional assumption on the
candidate class.  The convergent series
$(I-B)^{-1}=\sum_{k=0}^{\infty}B^k$ then gives a strictly positive
coefficient from each noise source to every vertex reachable from it.
Positive noise variances make the common ancestor's contribution to
$\Sigma_{ij}$ strictly positive, and all other contributions
are nonnegative.  Hence $\Sigma_{ij}$ is not identically zero.  This argument
also applies to graphs containing observed directed cycles.

In the following argument, adjacency refers to the undirected cycle with
pairs $\{1,2\},\{2,3\},\{3,4\},\{4,1\}$. These are exactly the pairs
whose covariances are not identically zero. This cycle describes covariance
pairs, not directed edges among observed variables. The opposite pairs are $\{1,3\}$ and
$\{2,4\}$.  An observed edge $i\to j$ in $G^+$ must join adjacent vertices
of this cycle, since the opposite-pair covariances vanish identically.  Let $k$
be the other neighbor of $i$ in the four-cycle; it is opposite $j$.
Since $\Sigma_{ik}$ is not identically zero, $i$ and $k$ have a common
ancestor.  The edge $i\to j$ makes this source an ancestor of $j$ as well,
contradicting the identity $\Sigma_{jk}=0$.  Thus $G^+$ has no observed
edges.  Any two children of the same latent must consequently be adjacent
in the cycle, since their covariance is not identically zero.  Because no
three vertices of the cycle are pairwise adjacent and the
candidate class requires at least two children, each latent has exactly two
children.  Each of the four adjacent pairs needs its own latent source.
There are therefore at least four latents and eight latent-to-observed edges,
which proves $c(G^+)\geq12$.

In particular, when $G^{*+}=G_\square^+$, every distribution-equivalent
representative has complexity at least twelve, and $c^\dagger=12$.
A complete directed acyclic graph on the four observed vertices, denoted
$G_{\mathrm{sat}}^+$, has only six edges
and generates every positive-definite covariance through sequential linear
Gaussian regressions.  Thus
\[
    V_\square\subsetneq V(G_{\mathrm{sat}}^+)=\Sym^4,
    \qquad c(G_{\mathrm{sat}}^+)=6<c^\dagger,
\]
which violates Assumption~\ref{ass:structural-minimality}.

The score comparison is stronger than this failure of a sufficient
condition. Every graph quasi-equivalent to $G_\square^+$ also has
Zariski closure $V_\square$. To verify this, let
$U=\M(G^+)\cap\M(G_\square^+)$ for such a graph. Quasi-equivalence and
Proposition~\ref{prop:model-geometry} give
\[
    \dim U=\dim V(G^+)=\dim V_\square=8,
    \qquad U\subseteq V(G^+)\cap V_\square.
\]
If the two Zariski closures were different, their intersection would be
a nonempty proper algebraic subset of at least one of them. Both closures
are irreducible, so the proper-subset argument in the proof of
Lemma~\ref{lem:strict-inclusion-dimension} would give intersection
dimension strictly below eight. This contradicts its containment of $U$,
which has dimension eight. Therefore $V(G^+)=V_\square$, and the preceding
complexity bound gives $c(G^+)\geq12$. For any $S_n\succ0$ and
$\lambda_n>0$,
\[
    s_n(G_{\mathrm{sat}}^+)=C_n+6\lambda_n
    < C_n+12\lambda_n
    \leq s_n(G^+)
    \quad\text{whenever }G^+\cong_O G_\square^+,
\]
because $\delta_n(G_{\mathrm{sat}}^+)=0$ and $\delta_n(G^+)\geq0$ for every
graph. Thus the original structural penalty can exclude every
quasi-equivalent graph, even with exact global optimization.  The
quasi-equivalence guarantee therefore cannot extend to all graphs in the
candidate class without conditions excluding this obstruction.
A dimension-based penalty would avoid this particular strict-inclusion
obstruction, but is not the penalty analyzed here.

\subsection{Weaker and sufficient conditions for structural minimality}
\label{app:positive-minimality}

This subsection discusses a weaker condition at the true covariance and a
restricted class for which structural minimality can be verified. The
positive result assumes that the target covariance family admits a
latent-free representation without reciprocal edge pairs, meaning that
$i\to j$ and $j\to i$ never both occur. It does not establish
structural minimality for families with no latent-free representation.

\paragraph{A weaker condition at the true covariance.}
Structural minimality is not needed for
Theorem~\ref{thm:closure-consistency}; it is one sufficient route to the
identification condition used in Corollary~\ref{cor:quasi-consistency}.
The main-text assumption compares the target family with every candidate
graph. For identification at a fixed $\Sigma^*$, it suffices to make this
comparison only for candidates whose covariance-family closures contain
$\Sigma^*$. Specifically, Assumption~\ref{ass:structural-minimality} can be
replaced in Lemma~\ref{lem:geometric-identification} by
\[
    \Sigma^*\in\Mbar(G^+),\quad
    V(G^\dagger)\subsetneq V(G^+)
    \quad\Longrightarrow\quad c(G^+)>c^\dagger
    \qquad(G^+\in\Gclass).
\]
Indeed, every candidate considered in that proof already satisfies
$\Sigma^*\in\Mbar(G^+)$ and $c(G^+)\leq c^\dagger$.
Algebraic faithfulness gives inclusion of Zariski closures; the displayed condition
excludes strict inclusion, and full-dimensional overlap then gives
quasi-equivalence. This weaker condition imposes no requirement on candidates
whose covariance-family closures do not contain $\Sigma^*$.
Restricting the original condition only to $c(G^+)\leq c^\dagger$, however,
would not weaken it: every violation already has that complexity bound.
We retain the global version in the main text because it is a property of
the target family and competing graphs, independent of the realized
covariance. Neither version can be discarded in general without an
alternative identification condition, as Subsection~\ref{app:minimality-limitation}
demonstrates.

\paragraph{The role of the complexity penalty.}
Proposition~\ref{prop:model-geometry} establishes irreducibility and
identifies covariance-family dimensions with those of their Zariski
closures. Lemma~\ref{lem:strict-inclusion-dimension} then shows that strict
inclusion $V(G^\dagger)\subsetneq V(G^+)$ is equivalent to inclusion with
strictly larger covariance-family dimension. If complexity were instead defined by
$c_{\mathrm{dim}}(G^+)=\dim\M(G^+)$, the corresponding structural-minimality
condition would hold automatically. This is not the edge-plus-latent
complexity used in our score, and a standard BIC interpretation of a
dimension-based penalty would still require additional regularity.

{\color{black}
The simple mixed graphs of \citet{amendola2020} have at most one edge of
any type between each pair of observed vertices. Let $k$ count a graph's
directed edges and its bidirected edges, which permit correlations
between structural noises. Their Theorem~3.1 gives
covariance-family dimension $p+k$. With $p$ fixed, a larger dimension
therefore requires more edges. Consequently, the analogous structural-minimality
condition holds automatically if all candidates are simple mixed graphs
and complexity counts their edges. This is a consequence of their dimension
theorem, not a statistical recovery guarantee. In our explicit-latent class,
the edge-plus-latent count need not increase with covariance-family dimension,
so structural minimality requires a separate justification.
}

\begin{feedbackrevision}
\paragraph{Structural complexity and covariance dimension.}
\label{app:complexity-dimension}
Our penalty counts the edges and latent vertices in a graph; it does not
count the independently varying covariance entries. For example, consider
two graphs on two observed variables: one has a single latent parent of
both variables and no observed edge; the other has only the observed edge
$1\to2$. Both generate all of $\PD^2$, a family of dimension three, but
their structural complexities are three and one, respectively. The
construction is given at the end of this subsection.

This distinction is deliberate. The structural penalty expresses a
preference for fewer edges and latent vertices, and its additive form
allows the expected penalty under Bernoulli gates to be evaluated without
computing model dimensions for the sampled graphs. It also introduces a
limitation: greater covariance-family dimension need not imply greater
structural complexity, as Subsection~\ref{app:minimality-limitation}
demonstrates. Replacing this penalty by model dimension would change the
objective and its preference among graphs, rather than merely remove a
redundant parameterization.

\end{feedbackrevision}
\paragraph{\textcolor{black}{A dimension criterion.}}
\begin{feedbackrevision}
The original free parameter coordinates consist of one coefficient per
permitted edge and $p+\ell(G^+)$ noise variances, giving
$p+c(G^+)$ coordinates. A sharper dimension bound follows by absorbing
latent variances into the loadings. Set
$\Gamma=\Omega_L^{1/2}\Lambda$, where the diagonal entries of
$\Omega_L^{1/2}$ are the positive square roots of the latent variances.
Then $\Lambda^\top\Omega_L\Lambda=\Gamma^\top\Gamma$, and $\Gamma$
has the same graph-required zeros as $\Lambda$. Conversely, any such
$\Gamma$ is realized by $\Lambda=\Gamma$ and $\Omega_L=I_{\ell}$.
Thus the covariance family is also the image of
\[
    (B_{OO},\Gamma,\Omega_O)\longmapsto
    (I_p-B_{OO})^{-\top}
    (\Omega_O+\Gamma^\top\Gamma)(I_p-B_{OO})^{-1}.
\]
Its domain has $p+|E(G^+)|$ free coordinates: one per permitted edge and
one per observed noise variance. Positive observed variances and
$\det(I_p-B_{OO})\neq0$ define a nonempty Euclidean open semialgebraic
subset of this coordinate space. The displayed map is rational and
semialgebraic on that domain. The dimension of a semialgebraic image cannot
exceed the dimension of its domain \citep[Theorem~3.18]{coste2000}, so
\begin{equation}
    \dim\M(G^+)\leq p+|E(G^+)|
    =p+c(G^+)-\ell(G^+).
    \label{eq:normalized-dimension-bound}
\end{equation}
This is an upper bound, not a claim that the remaining parameters are
identifiable or that their number equals model dimension. The original
free coordinates remain valid for the geometry and genericity proofs.
\end{feedbackrevision}
Consequently, if the target family satisfies
$\dim\M(G^{*+})=p+c^\dagger$, any candidate of strictly larger model
dimension must have $c(G^+)>c^\dagger$. This verifies structural minimality
even without requiring inclusion of Zariski closures. The following class satisfies
this criterion. Its dimension calculation is the diagonal-noise directed
special case of the simple cyclic mixed-graph dimension result of
\citet[Theorem~3.1 and Lemma~3.3]{amendola2020}{\color{black}. We give a direct
proof and combine this dimension equality with
Equation~\ref{eq:normalized-dimension-bound} to verify structural minimality
against our full candidate class, including graphs with latent variables
or reciprocal observed edges. The competitors need not themselves admit
a latent-free representation.}

\begin{proposition}[A class with a latent-free representative]
\label{prop:positive-minimality}
Suppose $\M(G^{*+})=\M(G_{\mathrm{dir}}^+)$ for a latent-free graph
$G_{\mathrm{dir}}^+\in\Gclass$ with $e$ observed edges and at most one of
$i\to j$ and $j\to i$ for each distinct pair. Directed cycles are allowed.
Then
\[
    \dim\M(G^{*+})=p+e,\qquad c^\dagger=e,
\]
and Assumption~\ref{ass:structural-minimality} holds against every candidate
in $\Gclass$, including graphs with latent variables or reciprocal edges.
Under the sampling conditions of Theorem~\ref{thm:closure-consistency},
global score minimizers therefore recover marginal quasi-equivalence for
generic admissible parameters of $G^{*+}$.
\end{proposition}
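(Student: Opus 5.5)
The plan has three parts. First compute $\dim\M(G_{\mathrm{dir}}^+)=p+e$. Then show $c^\dagger=e$. Finally combine these with Equation~\ref{eq:normalized-dimension-bound} to get structural minimality, and invoke the genericity and identification lemmas.

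\emph{Upper bound on dimension.} For the latent-free graph $G_{\mathrm{dir}}^+$ we have $\ell=0$, so Equation~\ref{eq:normalized-dimension-bound} gives $\dim\M(G_{\mathrm{dir}}^+)\leq p+e$.

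\emph{Lower bound on dimension.} I would exhibit one admissible point where the covariance map has Jacobian rank $p+e$. By the inverse function theorem, as in the four-cycle example of Subsection~\ref{app:minimality-limitation}, some projection of $\M$ then contains an open ball in $\mathbb R^{p+e}$.

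The cleanest point is $B_{OO}=0$, $\Omega_O=I_p$. Perturbing $B_{OO}$ by $\epsilon$ and $\Omega_O$ by $\eta$ gives, to first order,
\[
    \Sigma=I_p+\epsilon^\top+\epsilon+\eta .
\]
The diagonal entries then move through $\eta$ alone. The off-diagonal entry $(i,j)$ moves through $\epsilon_{ij}+\epsilon_{ji}$, and only one of these two coefficients is free because the graph has no reciprocal pair.

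So the $e$ edge coefficients map injectively onto $e$ distinct off-diagonal coordinates, and the $p$ variances map onto the diagonal. The differential is therefore injective, with rank $p+e$. Cycles are irrelevant at this point, since $A=I_p$ is invertible. This gives $\dim\M(G^{*+})=\dim\M(G_{\mathrm{dir}}^+)=p+e$.

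\emph{The value of $c^\dagger$.} Because $G_{\mathrm{dir}}^+\equiv_O G^{*+}$ and $c(G_{\mathrm{dir}}^+)=e$, we get $c^\dagger\leq e$. For the reverse inequality, any $G^+\equiv_O G^{*+}$ has $\dim\M(G^+)=p+e$, and Equation~\ref{eq:normalized-dimension-bound} gives
\[
    p+e\leq p+c(G^+)-\ell(G^+)\leq p+c(G^+).
\]
Hence $c(G^+)\geq e$, and so $c^\dagger=e$.

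\emph{Structural minimality.} Let $G^+$ be any candidate with $\dim\M(G^+)>\dim\M(G^\dagger)=p+e$. Equation~\ref{eq:normalized-dimension-bound} gives
\[
    p+e<\dim\M(G^+)\leq p+c(G^+)-\ell(G^+)\leq p+c(G^+),
\]
so $c(G^+)>e=c^\dagger$. This argument does not use the Zariski-inclusion premise, so it covers latent and reciprocal-edge competitors alike.

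\emph{Genericity and recovery.} Lemma~\ref{lem:genericity} gives algebraic faithfulness and full-dimensional overlap for almost every admissible true parameter. Lemma~\ref{lem:geometric-identification} then yields Assumption~\ref{ass:pointwise-identification}, and Corollary~\ref{cor:quasi-consistency} gives quasi-equivalence recovery.

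The main obstacle is the lower bound on dimension. The Jacobian computation must be set up carefully in the upper-triangular coordinates, and the non-reciprocal hypothesis must be used to rule out two free coefficients feeding the same off-diagonal entry. If the point $B_{OO}=0$ raised any issue, I would perturb to a nearby generic admissible point; rank is lower semicontinuous, so full rank persists nearby.
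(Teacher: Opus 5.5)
Your proposal is correct and follows essentially the same route as the paper. The paper likewise takes the Jacobian at $(B_{OO},\Omega_O)=(0,I_p)$ in the diagonal and edge-position coordinates, which are distinct by the no-reciprocal-pair hypothesis, and applies the inverse function theorem for the lower bound; it uses the semialgebraic-image bound (Equation~\ref{eq:normalized-dimension-bound}) for the upper bound, for $c^\dagger=e$, and for structural minimality, and then invokes Lemmas~\ref{lem:genericity} and~\ref{lem:geometric-identification} with Corollary~\ref{cor:quasi-consistency}.
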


\begin{proof}
We first determine the covariance-family dimension and then compare
structural complexities.

\paragraph{Dimension of the latent-free covariance family.}
List the observed edges of $G_{\mathrm{dir}}^+$ as
$i_1\to j_1,\ldots,i_e\to j_e$. Write their coefficients as
$b_k=(B_{OO})_{i_kj_k}$ and the observed noise variances as
$\omega_i=(\Omega_O)_{ii}$. The free parameter vector is
\[
    \theta=(\omega_1,\ldots,\omega_p,b_1,\ldots,b_e)
    \in\mathbb R^{p+e}.
\]
Let $U$ be its admissible domain, defined by $\omega_i>0$ for all $i$
and $\det(I_p-B_{OO})\neq0$. These strict conditions make $U$ open in
$\mathbb R^{p+e}$. The covariance map
\[
    \phi:U\longrightarrow\PD^p,\qquad
    \phi(\theta)=(I_p-B_{OO})^{-\top}\Omega_O(I_p-B_{OO})^{-1}
\]
is rational, continuously differentiable, and semialgebraic on $U$, with image
$\M(G_{\mathrm{dir}}^+)$. The parameter point
$\theta_0=(1,\ldots,1,0,\ldots,0)$ corresponds to
$(B_{OO},\Omega_O)=(0,I_p)$ and lies in $U$. This point is admissible
because an allowed edge coefficient may equal zero.

For a parameter direction $\dot\theta$, write
$D\phi(\theta_0)[\dot\theta]
:=\left.\frac{d}{dt}\phi(\theta_0+t\dot\theta)\right|_{t=0}$.
Differentiating the covariance map gives
\[
    D\phi(\theta_0)[\dot\theta]
    =\dot\Omega+\dot B^\top+\dot B,
\]
where $\dot B$ and $\dot\Omega$ denote changes in the edge coefficients
and diagonal noise variances, respectively. Indeed, the derivative of
$(I_p-t\dot B)^{-1}$ at $t=0$ is $\dot B$, and the product rule gives
the three displayed terms. At $\theta_0$, the partial derivative with
respect to $\omega_i$ is therefore $E_{ii}$, while that with respect to $b_k$ is
$E_{i_kj_k}+E_{j_ki_k}$; here $E_{ij}$ has one unit entry in position
$(i,j)$ and zeros elsewhere.

To connect this derivative calculation to our definition of dimension,
select the following covariance coordinates:
\[
    \pi(\Sigma)
    =(\Sigma_{11},\ldots,\Sigma_{pp},
      \Sigma_{i_1j_1},\ldots,\Sigma_{i_ej_e})
    \in\mathbb R^{p+e}.
\]
Symmetry allows each off-diagonal entry to be identified with its
upper-triangular coordinate. Since no reciprocal edge pair is present,
these $e$ off-diagonal coordinates are distinct. Thus $\pi$ is a coordinate
projection up to a reordering of its output coordinates, which preserves
open sets. Define $F=\pi\circ\phi$, which maps $U$ into
$\mathbb R^{p+e}$. In the parameter order specified above, its Jacobian is
\[
    DF(\theta_0)=I_{p+e}.
\]
Thus the inverse function theorem applies: there are open neighborhoods
$U_0\subseteq U$ of $\theta_0$ and $W_0\subseteq\mathbb R^{p+e}$ of
$F(\theta_0)$ such that $F$ maps $U_0$ one-to-one onto $W_0$, with a
continuously differentiable inverse. In particular,
\[
    W_0=F(U_0)\subseteq\pi\bigl(\M(G_{\mathrm{dir}}^+)\bigr).
\]
The projected covariance family therefore contains a nonempty open ball
in $\mathbb R^{p+e}$. Definition~\ref{def:semialgebraic-dimension} gives
$\dim\M(G_{\mathrm{dir}}^+)\geq p+e$. Conversely, the semialgebraic-image
dimension bound stated above gives
\[
    \dim\M(G_{\mathrm{dir}}^+)=\dim\phi(U)
    \leq\dim U=p+e.
\]
Combining the two inequalities, and using the assumed equality of
covariance families, yields
\[
    \dim\M(G^{*+})=\dim\M(G_{\mathrm{dir}}^+)=p+e.
\]
The Jacobian calculation thus establishes the dimension using the existing
definition; it does not introduce a different definition or assume that
the Jacobian has the same rank at every parameter value.

\paragraph{Minimum complexity and structural minimality.}
For any distribution-equivalent candidate $G^+$, the universal dimension
bound gives
\[
    p+e=\dim\M(G^+)\leq \feedbackedit{p+c(G^+)-\ell(G^+)}.
\]
Thus $\feedbackedit{c(G^+)\geq e+\ell(G^+)\geq e}$, while
$G_{\mathrm{dir}}^+$ itself has complexity $e$.
It follows that $c^\dagger=e$. For every candidate of strictly larger
dimension,
\[
    \feedbackedit{p+c(G^+)-\ell(G^+)}\geq\dim\M(G^+)
    >\dim\M(G^{*+})=p+e,
\]
so $c(G^+)>e=c^\dagger$. This proves global structural minimality without
restricting the competing graphs. Lemmas~\ref{lem:genericity} and
\ref{lem:geometric-identification} then establish the identification
condition for generic true parameters, and
Corollary~\ref{cor:quasi-consistency} gives the final assertion.
\end{proof}

\paragraph{A cyclic example with covariance-family dimension eight.}
Take four observed vertices, no latents, and the directed cycle
\[
    1\longrightarrow2\longrightarrow3\longrightarrow4\longrightarrow1.
\]
Proposition~\ref{prop:positive-minimality} gives $c^\dagger=4$ and
$\dim\M(G^{*+})=8<10=\dim\Sym^4$. Its covariance family therefore has
smaller dimension than the space of all positive-definite covariances.
Any strictly larger Zariski closure must have complexity at least five,
even when latent competitors are allowed. For generic true parameters,
global score minimizers recover marginal quasi-equivalence. This conclusion
does not identify a unique orientation or give a guarantee for a local
optimizer.

\paragraph{A latent example with an observed representative.}
A single exogenous latent with exactly two observed children and no observed
edge generates every positive-definite covariance on that pair. To see this,
write the desired covariance as
$\bigl[\begin{smallmatrix}a&b\\b&d\end{smallmatrix}\bigr]\succ0$.
Positive definiteness gives $d>0$ and $ad>b^2$, so $b^2/d<a$.
For $b\neq0$, choose $u^2\in(b^2/d,a)$ and set $v=b/u$.
Unit latent variance, loadings $(u,v)$, and positive observed noise variances
$(a-u^2,d-v^2)$ give the desired covariance. For $b=0$, set both loadings
to zero, which is admissible because permitted coefficients may be zero,
and use observed noise variances $(a,d)$.
\begin{feedbackrevision}
The latent-free graph $1\to2$ generates the same covariance using edge
coefficient $b/a$ and noise variances $a$ and $d-b^2/a$, which are
positive. Both families therefore equal $\PD^2$ and have dimension three.
The latent graph has five original parameter coordinates and four after
absorbing its latent variance, so even the normalized parameter count
exceeds its model dimension. The proposition applies with $c^\dagger=1$,
although the latent graph has complexity three.
\end{feedbackrevision}
Disjoint unions of such pairs, isolated vertices, and the
directed graphs in the proposition also admit representatives of the same
type. Thus the data-generating graph may contain both directed cycles and
latent variables in separate components. The conclusion does not establish
structural minimality for families that require latent variables in every
representation, or identify the number of latent variables in the
data-generating graph.

\section{\textcolor{black}{Practical optimization}}
\label{sec:concrete}

{\color{black}
This appendix describes an optional approximation for numerical optimization
of the exact Bernoulli objective in Equation~\ref{eq:exact-bernoulli-objective}.
For latent-variable models, we use the Binary Concrete construction of \citet{maddison2017} to obtain a differentiable surrogate for the discrete gates.
This approximation is not required for the differentiability of the exact
expectation or for Proposition~\ref{prop:bernoulli-exactness}.
}
Let
$\sigma(t)=(1+\exp(-t))^{-1}$ and parameterize
$q_{ij}=\sigma(\alpha_{ij})$, $r_{hj}=\sigma(\beta_{hj})$, and
$\pi_h=\sigma(\gamma_h)$.  If $G=\log U-\log(1-U)$ with
$U\sim\operatorname{Uniform}(0,1)$, the corresponding Binary Concrete gate is
\begin{equation}
    \widetilde K=\sigma\!\left(\frac{a+G}{\tau}\right),
    \label{eq:binary-concrete-gate}
\end{equation}
where $a$ is the relevant \emph{logit}: a probability $q\in(0,1)$ is
represented by $a=\log(q/(1-q))$. The \emph{temperature} $\tau>0$
controls how closely the continuous gate approximates a binary gate.
Applying
Equation~\ref{eq:binary-concrete-gate} to each nonconstant gate, with independent
noise draws and $\widetilde M_{ii}=0$, gives
$\widetilde M,\widetilde H,\widetilde Z$. These gates are differentiable
functions of the logits for each fixed noise draw; the noise distribution
does not depend on the parameters. For each realization with invertible
$I_p-\widetilde M\od W$, we can therefore differentiate the sampled
likelihood while holding the noise draws fixed. The approximate objective
at temperature $\tau$ is
\begin{equation}
\begin{split}
    F_{n,\tau}={}&\mathbb{E}\bigl[
    L_n(\widetilde M,\widetilde H,\widetilde Z;\vartheta)\bigr]\\
    &+\lambda_n\!\left[\sum_{i\neq j}q_{ij}
    +\sum_h\pi_h\left(1+\sum_jr_{hj}\right)\right].
\end{split}
\label{eq:concrete-objective}
\end{equation}

{\color{black}
\paragraph{Invertibility of binary and continuous masks.}
Invertibility for every binary mask does not imply invertibility for
every continuous mask. For example, with two observed variables and
\[
    W=\begin{pmatrix}0&2\\2&0\end{pmatrix},
    \qquad \det(I_2-M\od W)=1-4M_{12}M_{21},
\]
every binary mask gives determinant one or minus three. In contrast,
the continuous mask with $\widetilde M_{12}=\widetilde M_{21}=1/2$
gives $\det(I_2-\widetilde M\od W)=0$. Binary Concrete therefore does
not guarantee invertibility for all continuous masks.
\par}

Using sampled derivatives as an unbiased gradient estimator for
$F_{n,\tau}$ additionally requires justification for interchanging
differentiation and expectation, for example through an integrable bound on
the derivatives that is uniform over a parameter neighborhood.
Invertibility of $I_p-\widetilde M\od W$ with probability one is not
sufficient: the absolute derivatives can have infinite expectation when
this matrix approaches a noninvertible matrix. Thus differentiability for
individual noise realizations does not by itself justify differentiating
the expectation.

We deliberately retain the expected \emph{discrete} complexity in closed
form: at positive temperature, $\mathbb{E}[\widetilde M_{ij}]\neq q_{ij}$ in
general. Replacing the closed-form penalty by a sum of sampled continuous
gate values would therefore change the complexity penalty and introduce
additional sampling variation. For numerical optimization, a positive
lower bound on each observed-noise variance can be imposed through
$(\Omega_O)_{jj}=\varepsilon+\operatorname{softplus}(d_j)$ with
$\varepsilon>0$, where $\operatorname{softplus}(t)=\log(1+e^t)$ and
$d_j\in\mathbb R$ is optimized. To assess proximity to a noninvertible
matrix, one can monitor the smallest singular value
$\sigma_{\min}(I_p-\widetilde M\od W)$, where
$\sigma_{\min}(A)=\min_{\|x\|_2=1}\|Ax\|_2$. Monitoring alone does not
justify interchanging differentiation and expectation. The optional restriction
$\|\,|W|\,\|_2\leq1-\varepsilon_A$, with $0<\varepsilon_A<1$, gives
$\|\widetilde M\od W\|_2\leq\|\,|W|\,\|_2$ and hence
$\sigma_{\min}(I_p-\widetilde M\od W)\geq\varepsilon_A$
for every binary mask or continuous mask with entries in $[0,1]$.
Together with the lower bound on noise variances and bounds on the remaining
parameters in a neighborhood of their current values, this supplies
derivative bounds on that neighborhood at fixed
$\tau>0$.  Both restrictions are stronger than the statistical model and can
exclude admissible covariances or well-defined cyclic systems.

\paragraph{Numerical implementation used in the experiments.}
The implementation in Section~\ref{sec:experiments} differs between the observed-only and latent-variable settings. 
When no latent variables are present, the latent activation and latent-edge
gates disappear, and we optimize the exact Bernoulli formulation in
Equation~\ref{eq:exact-bernoulli-objective} using 256 Monte Carlo samples of
hard observed-edge masks per update. Gradients of the observed-edge logits
are estimated using a leave-one-out score-function estimator. We additionally use a discrete compound support-refinement step: when the score plateaus, deletion and reversal-plus-deletion candidates are evaluated, accepted deletions are permanently removed from the search space, and optimization continues from the best improving support.

When latent variables are present, we use a straight-through Binary Concrete estimator. The forward likelihood is evaluated using hard binary gates, while the backward pass differentiates through their soft Binary Concrete counterparts. Thus the experimental implementation does not evaluate the forward likelihood on fractional graph masks. The value of $\tau$ is also fixed throughout training. This estimator is a numerical approximation and should be distinguished from Proposition~\ref{prop:bernoulli-exactness}, which concerns the exact Bernoulli expectation.

\paragraph{Discrete support selection and refitting.}
After either optimization procedure, the learned gate probabilities and continuous parameters are converted to a binary support $(M,H,Z)$. Inactive latent slots and active slots with fewer than two selected children are removed using the conversion in Appendix~\ref{app:bernoulli-details}. For each resulting fixed graph, we refit $(W,\Gamma,\Omega_O)$ using the original Gaussian negative log-likelihood. To target the discrete score in Equation~\ref{eq:graph-score}, this refit optimizes over positive diagonal $\Omega_O$ and invertible $A(M)=I_p-M\od W$ without imposing the optional norm restriction on $W$. Candidate graphs are then compared using the refitted discrete penalized score. Exact thresholds, initialization, optimization budgets, and refitting settings are reported in Appendix~\ref{app:experimental-details}.

These numerical procedures do not change the scope of the theoretical guarantees. Theorem~\ref{thm:closure-consistency} applies when the resulting graph is a global minimizer of the discrete score, and Corollary~\ref{cor:quasi-consistency} additionally requires Assumption~\ref{ass:pointwise-identification}. Guarantees for Monte Carlo optimization, straight-through gradients, discrete support refinement, and finite numerical refitting require additional analysis.

\clearpage

\section{Additional experimental details and sensitivity analyses}
\label{app:experimental-details}

\subsection{Synthetic data generation}
\label{app:synthetic-details}

Synthetic observations follow the linear Gaussian SCM in Section~\ref{sec:problem-setting} with $n=10{,}000$ training samples per trial. We use the observed-graph generation procedure from the public implementation of \citet{ghassami2020}. The parameter called maximum degree is a hard bound on total observed degree, i.e., observed in-degree plus observed out-degree; self-loops are excluded. 

For every selected observed edge, the coefficient sign is sampled uniformly and its magnitude is sampled uniformly from $[0.2,0.8]$. Coefficient draws are accepted only when the observed coefficient matrix satisfies spectral radius below $0.99$ and $\operatorname{cond}(I-B_{OO})\le20$. Observed noise variances are drawn independently from $\operatorname{Uniform}[1,3]$. Each exogenous latent variable independently selects an integer number of distinct observed children uniformly from $\{2,3,4\}$ and then selects that many observed nodes uniformly without replacement; child sets of different latent variables may overlap. Nonzero latent loadings use the same random-sign and $[0.2,0.8]$ magnitude distribution as observed coefficients. Latent variances are normalized to one, with their scale absorbed into the latent loading rows.

Latent percentages specify the nominal ratio $\ell/p$. We evaluate $\ell/p\in\{0.1,0.2,0.3\}$ and resolve the integer count as $\lfloor p(\ell/p)+0.5\rfloor$, with at least one latent whenever the requested ratio is positive. The estimator receives $L_{\max}=2\lfloor p(\ell/p)+0.5\rfloor+1$ latent slots for latent experiments and $L_{\max}=0$ in the no-latent experiments. We uses 10 trials with different seeds.

\subsection{Optimization and evaluation settings}
\label{app:optimization-details}

The numerical procedures for the exact-Bernoulli and straight-through
Binary Concrete searches are described in Appendix~\ref{sec:concrete}.
Table~\ref{tab:experimental-settings} summarizes the principal
implementation, graph-extraction, refitting, and evaluation settings used
in the experiments. During optimization, 
Adam is run for $1000$, $1500$, $3000$, $4000$, $6000$, $8000$, $10000$,
and $10000$ updates for $p=4,8,16,32,64,128,256$, and $512$, respectively.
For the GNW experiments with $p=100$, we use $10000$ updates.

\begin{table}[h!]
    \centering
    \caption{Principal numerical settings used in the experiments.}
    \label{tab:experimental-settings}
    \setlength{\tabcolsep}{5pt}
    \renewcommand{\arraystretch}{1.10}
    \begin{tabular}{@{}ll@{}}
        \toprule
        Setting & Value \\
        \midrule
        \multicolumn{2}{@{}l}{\textit{Structure optimization}} \\
        Mask samples per update & $256$ \\
        Adam learning rate & $10^{-2}$ \\
        Binary Concrete temperature & $\tau=1$ (fixed) \\
        Structure-search initializations & $1$ per trial \\
        Initial observed-edge probability & $0.01$ \\
        Initial latent-edge probability & $0.5$ \\
        Initial latent activation probability & $0.99$ \\
        Initial coefficient scale & $0.5$ \\
        Candidate latent bound & $L_{\max}=2\lfloor p(\ell/p)+0.5\rfloor+1$ \\
        \addlinespace
        \multicolumn{2}{@{}l}{\textit{Graph extraction and refitting}} \\
        Gate probability threshold & $0.5$ \\
        Coefficient threshold & $10^{-4}$ \\
        \addlinespace
        \multicolumn{2}{@{}l}{\textit{Compatibility evaluation}} \\
        Covariance targets per direction & $K_{\mathrm{eval}}=20$ \\
        Compatibility-fit starts & $5$ \\
        Compatibility maximum iterations & $1000$ \\
        Compatibility tolerance & $10^{-8}$ \\
        \addlinespace
        \multicolumn{2}{@{}l}{\textit{Observed-only support refinement}} \\
        Plateau window & $50$ updates \\
        Relative score tolerance & $10^{-4}$ \\
        Edges deleted per branch & $1$ \\
        Deletion branches & $10$ \\
        Reversal-plus-deletion branches & $10$ \\
        \bottomrule
    \end{tabular}
\end{table}

After graph extraction, inactive latent slots and active slots with fewer
than two retained children are removed. Continuous parameters are then
refit on each fixed support using the unrelaxed Gaussian likelihood before
the discrete score and covariance-family compatibility are evaluated.

\subsection{Bidirectional compatibility evaluation}
\label{app:compatibility-evaluation}

For evaluation we use the symmetric Gaussian divergence
\[
J(S,\Sigma)
=
\frac12\left\{
D(S\Vert\Sigma)+D(\Sigma\Vert S)
\right\},
\]
where $D$ is defined in Equation~\ref{eq:gaussian-kl}. Let $G^{*+}$ denote
the reference graph and $\widehat G^+$ the recovered graph. For generic
admissible parameter values
$\theta_1^*,\ldots,\theta_K^*\in\mathcal P(G^{*+})$, define
$\Sigma_k^*=\Sigma_{O}(\theta_k^*;G^{*+})$ and
\begin{equation}
d_k^{*\to\widehat G}
:=
\inf_{\widehat\theta\in\mathcal P(\widehat G^+)}
J\!\left(
\Sigma_k^*,
\Sigma_{O}(\widehat\theta;\widehat G^+)
\right).
\end{equation}
The forward discrepancy is
\begin{equation}
d_{\mathrm{fwd}}
:=
\max_{1\leq k\leq K}
d_k^{*\to\widehat G}.
\end{equation}

For the reverse direction, choose generic admissible parameter values
$\widetilde\theta_1,\ldots,\widetilde\theta_K
\in\mathcal P(\widehat G^+)$ and let
$\widetilde\Sigma_k=
\Sigma_{O}(\widetilde\theta_k;\widehat G^+)$. Define
\begin{equation}
d_k^{\widehat G\to *}
:=
\inf_{\theta\in\mathcal P(G^{*+})}
J\!\left(
\widetilde\Sigma_k,
\Sigma_{O}(\theta;G^{*+})
\right),
\end{equation}
and
\begin{equation}
d_{\mathrm{rev}}
:=
\max_{1\leq k\leq K}
d_k^{\widehat G\to *}.
\end{equation}

We use $K=20$ in the reported experiments and summarize bidirectional
compatibility by $d_{\mathrm{fwd}}+d_{\mathrm{rev}}$. The continuous
parameters of the fitted graph are reoptimized independently for each
target covariance while its graph support remains fixed.

\subsection{Baseline implementations and runtime protocol}
\label{app:baseline-details}

Ghassami et al.~\citep{ghassami2020} is evaluated using the public dglearn implementation with its original configuration: hill-climbing search, one initialization, at most 1000 iterations, maximum cycle/SCC parameter five, virtual refinement, and support reduction. DCCD-CONF \citep{sethuraman2025} is evaluated using its public implementation with its provided hyperparameters: 500 epochs, batch size 512, learning rate $10^{-2}$, $\lambda_c=\rho=10^{-2}$, Lipschitz constant $0.9$, linear activation, adjacency threshold $0.8$, and confounding-covariance threshold $0.01$. These baseline configurations are default values used in their corresponding repository. For DCCD-CONF, the observational data are supplied as a single regime with no intervention targets so that no additional interventional information is provided. The~\citet{amendola2020} baseline follows the greedy cyclic mixed-graph search with add, remove, and edge-reversal moves; to keep its neighborhood search computationally feasible, at most 64 candidate branches are evaluated at each step. All methods evaluated at a shared setting receive identical training data and trial seeds.

All experiments were run on compute nodes equipped with AMD EPYC 9334 32-Core processors and 8\,GB of memory. DCCD-CONF and Am\'endola et al.\ were evaluated up to $p=64$, while dglearn was evaluated up to $p=128$, beyond which their computation became infeasible for the experiments.

\subsection{GeneNetWeaver details}
\label{app:gnw-details}

We construct fixed GNW reference networks by first generating a weakly connected directed acyclic graph over the observed variables with a bounded total degree. Directed cycles are then introduced by adding back edges while preserving the prescribed degree constraint. Exogenous latent variables are subsequently added before simulation. Each latent variable has no parents and is connected to a randomly selected subset of observed variables, allowing different latent variables to share observed children.

Data are generated using the GNW multifactorial simulator with all perturbation inputs set to zero, so that no observed variable is directly intervened upon. Each fixed network yields 20,000 observations. In each trial, 16,000 observations are used for structure learning and 4,000 are held out. The network topology and simulated dataset remain fixed across trials, while the train--test partition, stochastic optimization, mask sampling, and compatibility-evaluation draws vary. 

\subsection{Sensitivity to graph density and latent confounding}
\label{sec:experiment-stress}

We stress test recovery over all nine combinations of maximum observed degree
$\{1,2,4\}$ and latent ratio $\ell/p\in\{0.1,0.2,0.3\}$. These experiments
vary the structure of the data-generating graphs and therefore evaluate
sensitivity to graph density and latent confounding.

\begin{figure*}[t]
    \centering

    \begin{subfigure}[t]{0.32\linewidth}
        \centering
        \includegraphics[width=\linewidth]{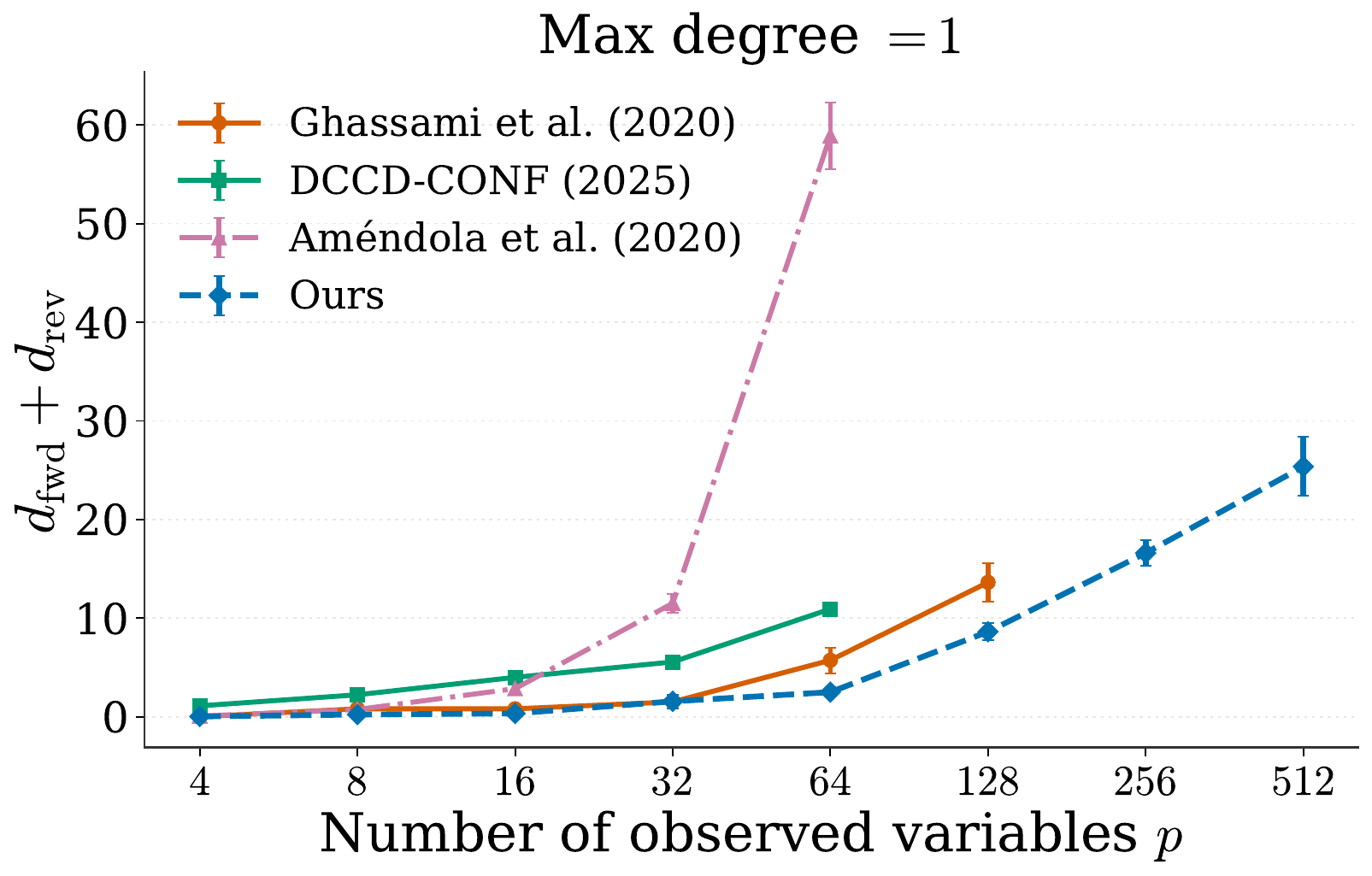}
        \caption{$10\%$ latent, degree 1}
    \end{subfigure}
    \hfill
    \begin{subfigure}[t]{0.32\linewidth}
        \centering
        \includegraphics[width=\linewidth]{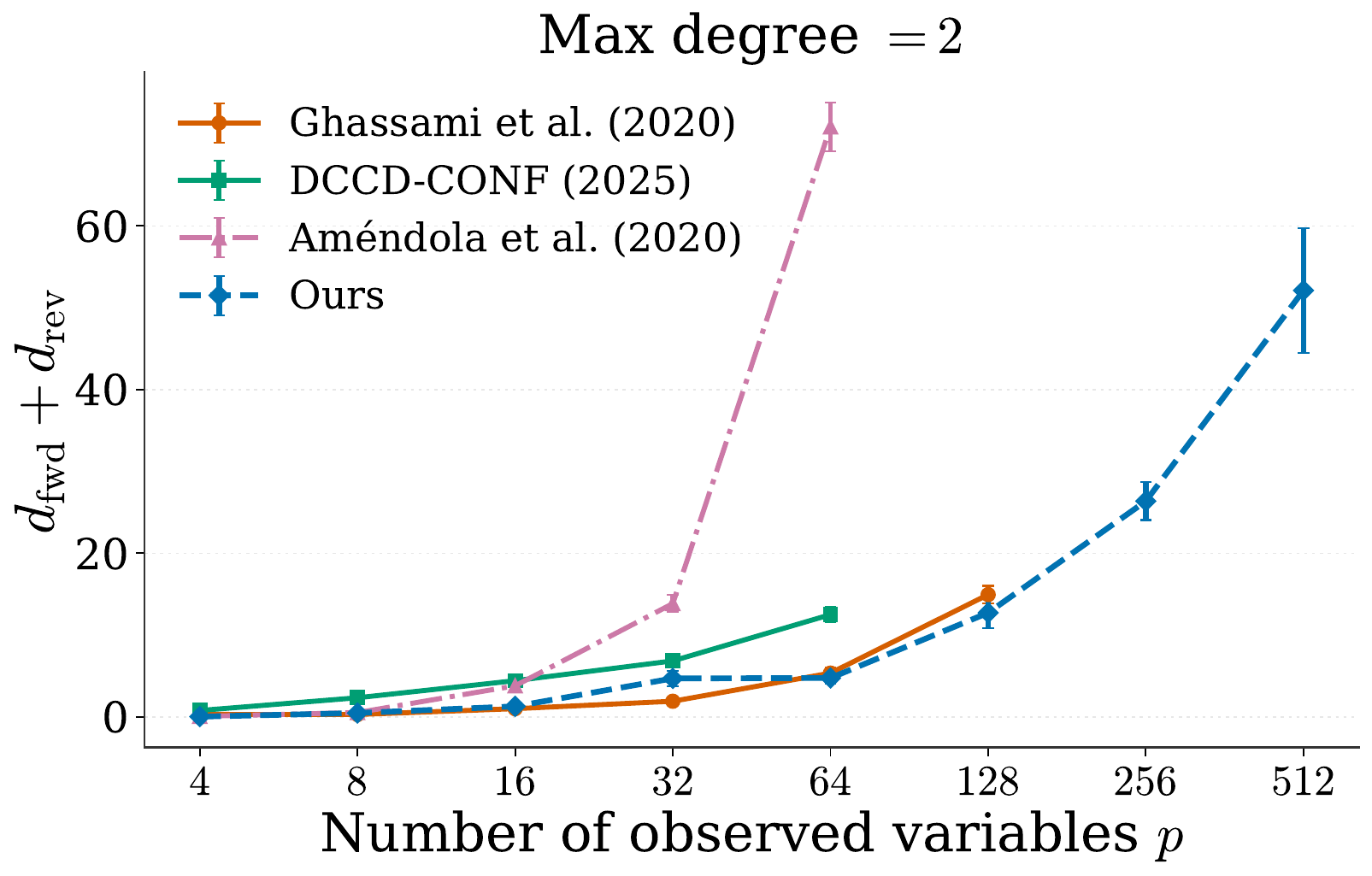}
        \caption{$10\%$ latent, degree 2}
    \end{subfigure}
    \hfill
    \begin{subfigure}[t]{0.32\linewidth}
        \centering
        \includegraphics[width=\linewidth]{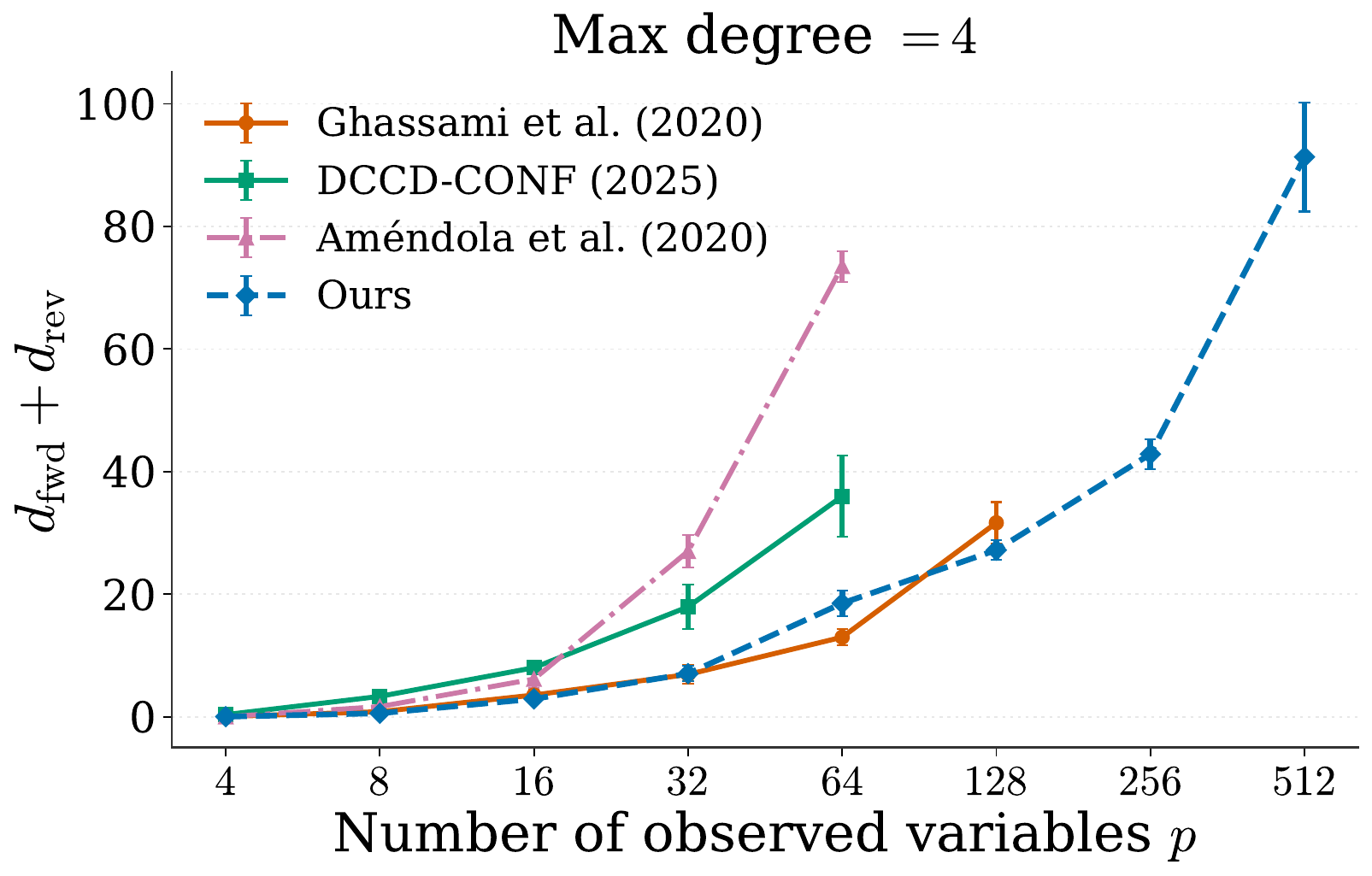}
        \caption{$10\%$ latent, degree 4}
    \end{subfigure}

    \vspace{1mm}

    \begin{subfigure}[t]{0.32\linewidth}
        \centering
        \includegraphics[width=\linewidth]{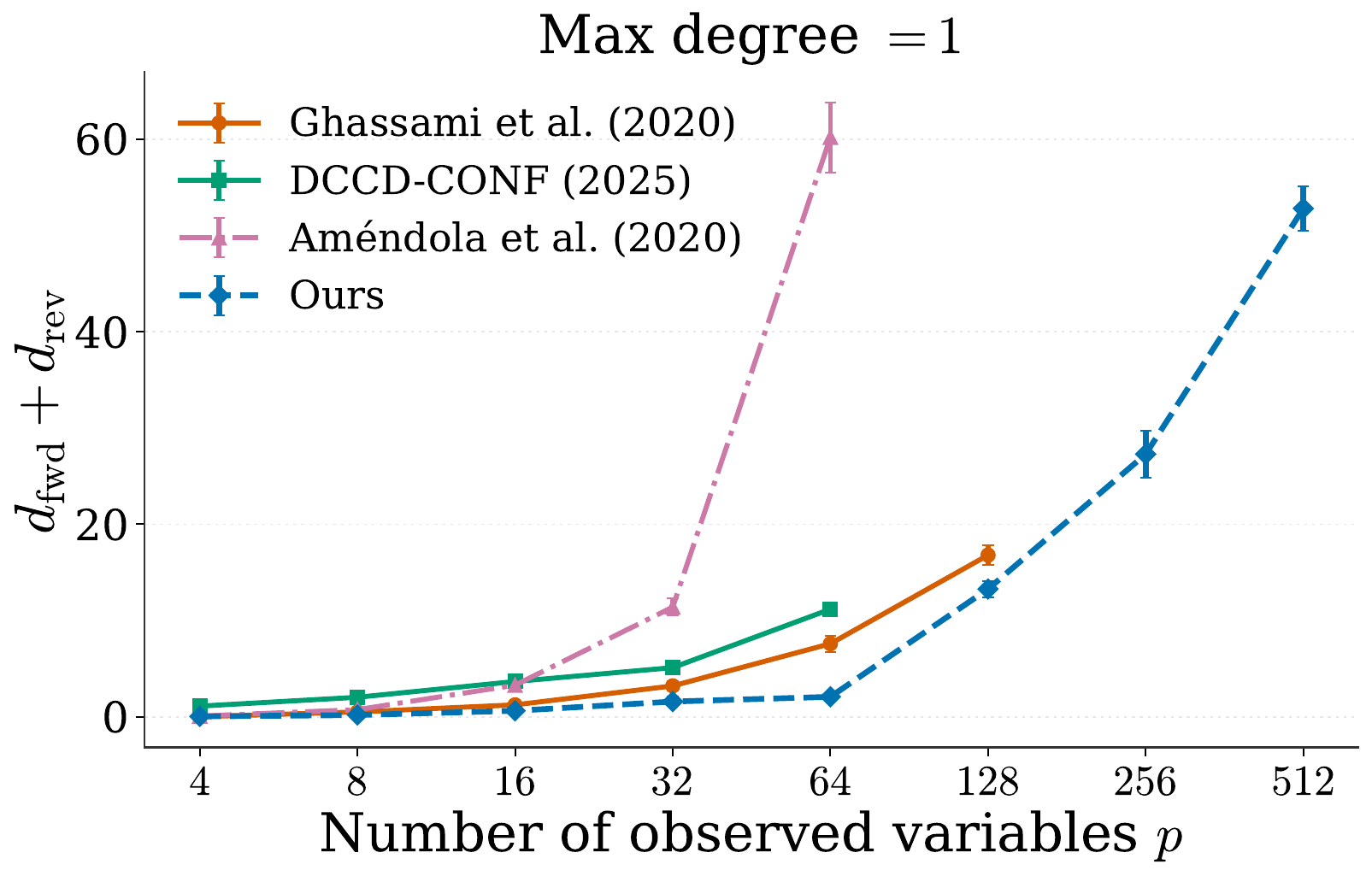}
        \caption{$20\%$ latent, degree 1}
    \end{subfigure}
    \hfill
    \begin{subfigure}[t]{0.32\linewidth}
        \centering
        \includegraphics[width=\linewidth]{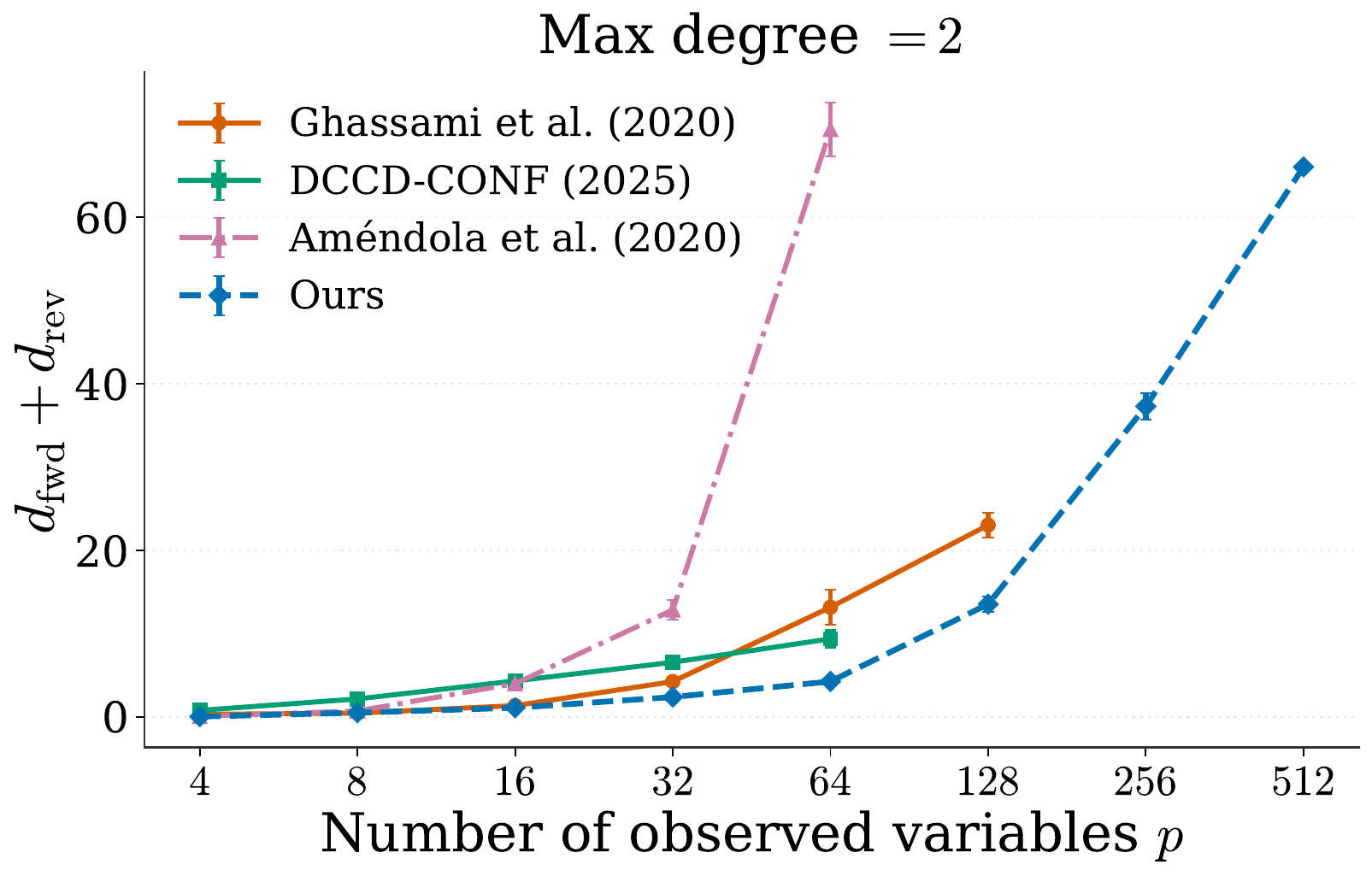}
        \caption{$20\%$ latent, degree 2}
    \end{subfigure}
    \hfill
    \begin{subfigure}[t]{0.32\linewidth}
        \centering
        \includegraphics[width=\linewidth]{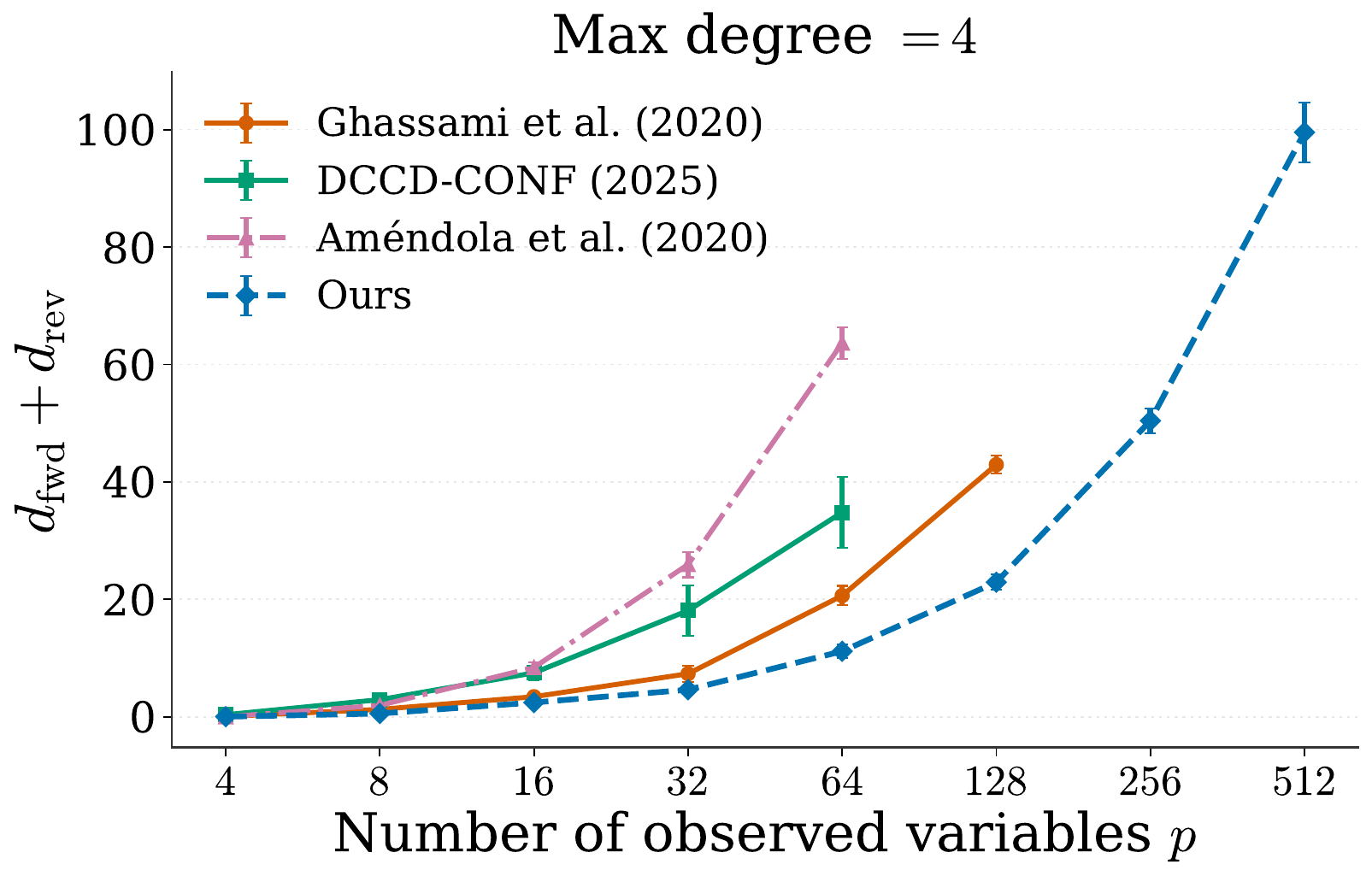}
        \caption{$20\%$ latent, degree 4}
    \end{subfigure}

    \vspace{1mm}

    \begin{subfigure}[t]{0.32\linewidth}
        \centering
        \includegraphics[width=\linewidth]{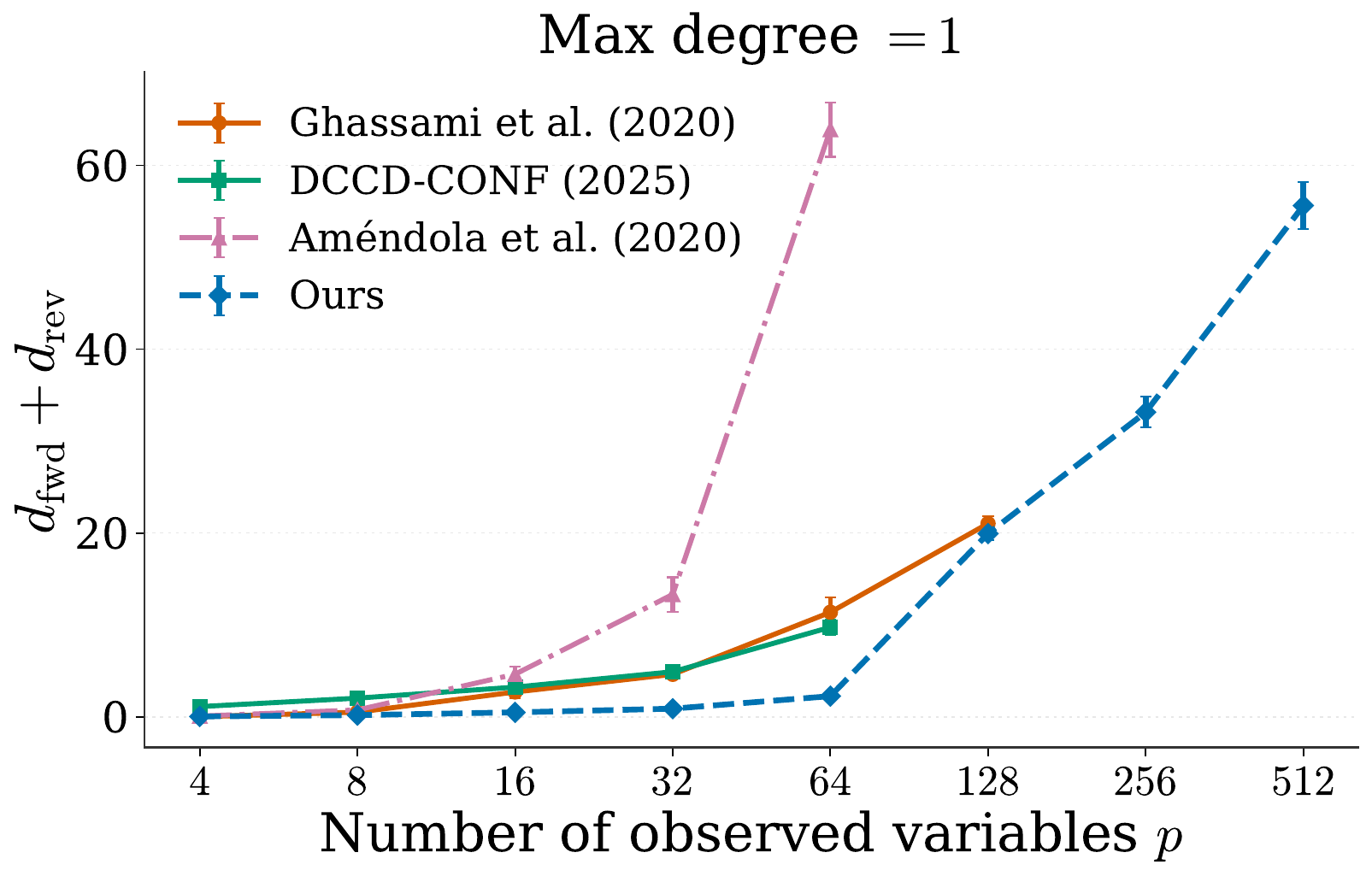}
        \caption{$30\%$ latent, degree 1}
    \end{subfigure}
    \hfill
    \begin{subfigure}[t]{0.32\linewidth}
        \centering
        \includegraphics[width=\linewidth]{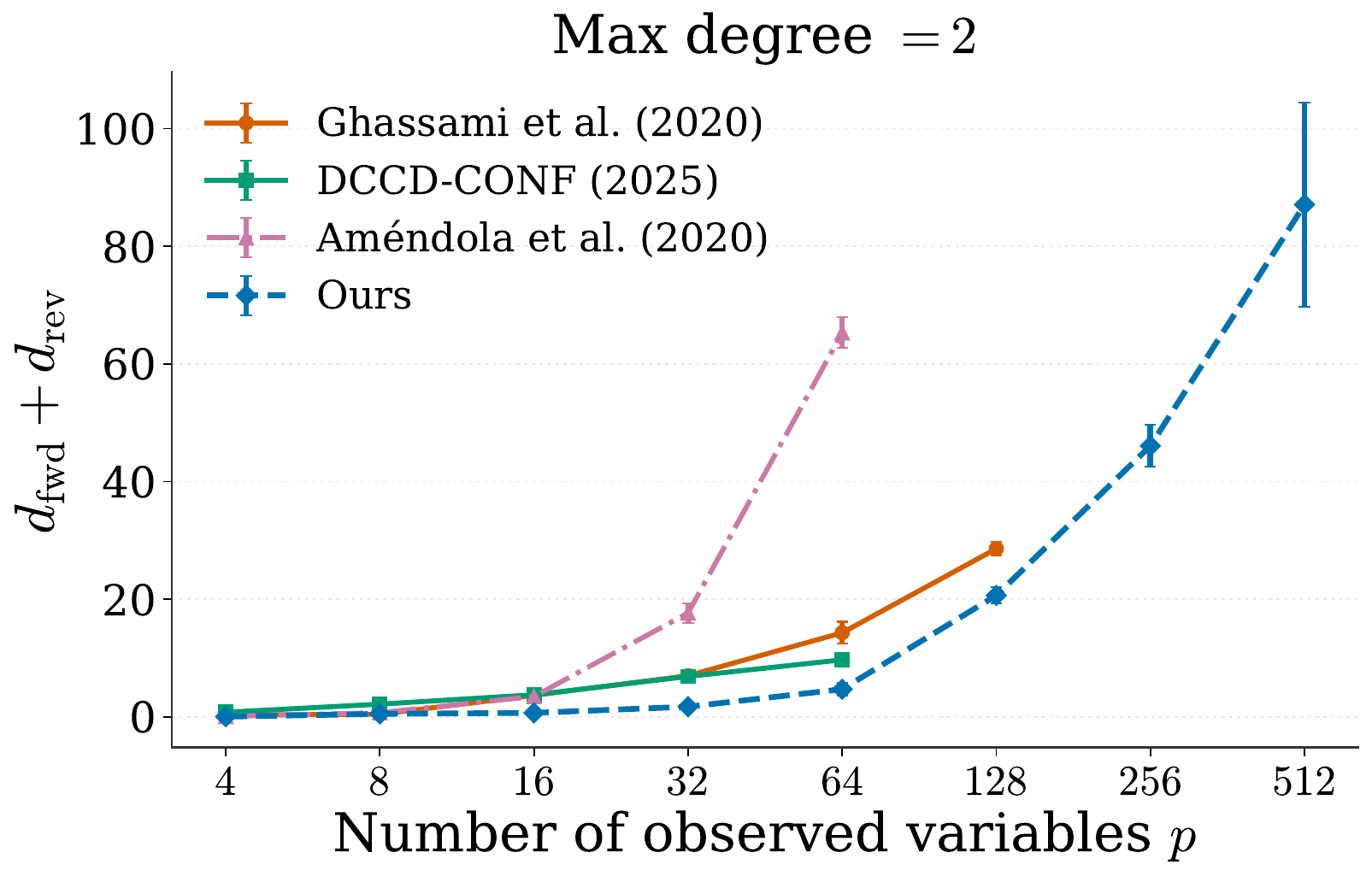}
        \caption{$30\%$ latent, degree 2}
    \end{subfigure}
    \hfill
    \begin{subfigure}[t]{0.32\linewidth}
        \centering
        \includegraphics[width=\linewidth]{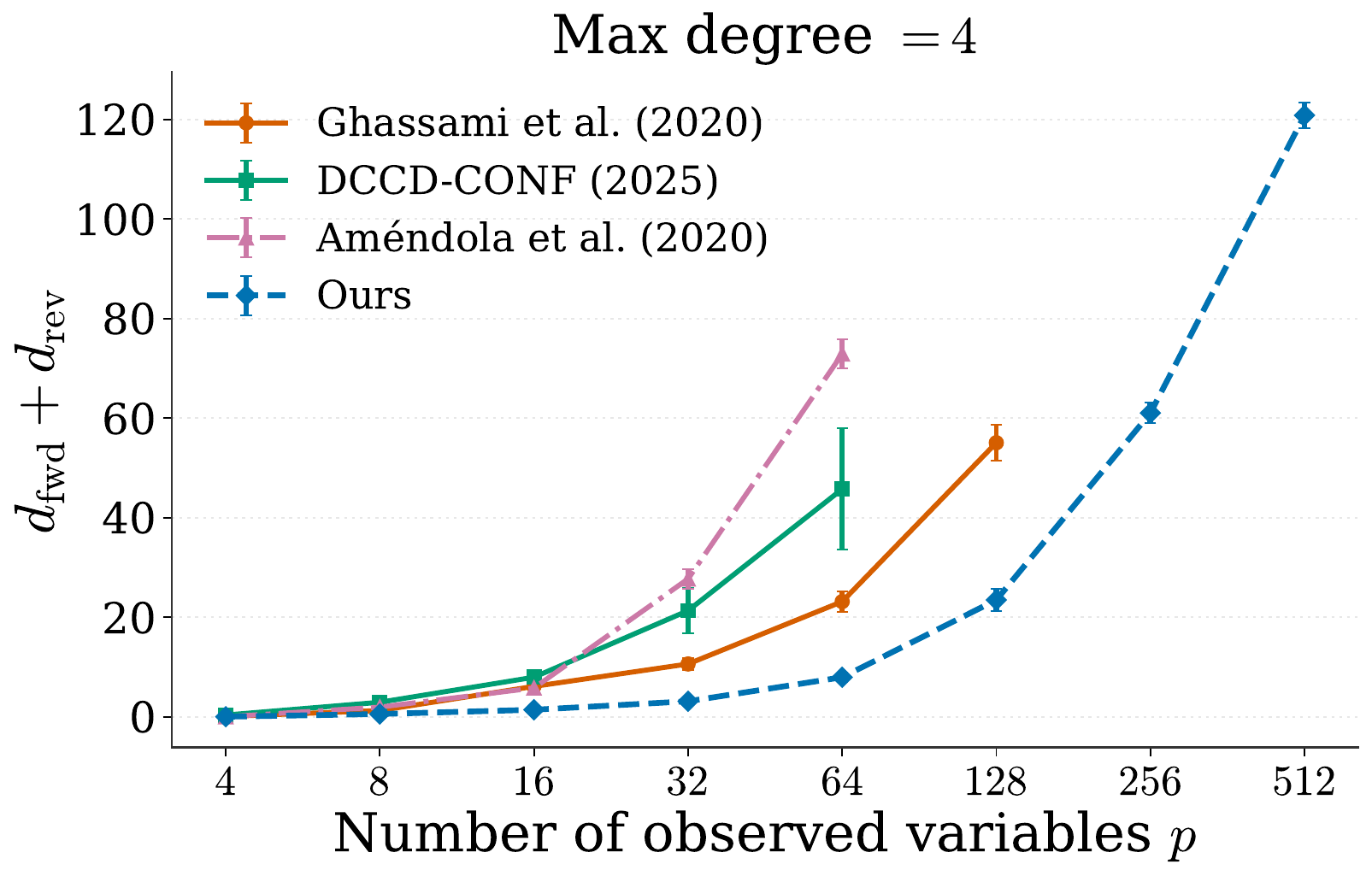}
        \caption{$30\%$ latent, degree 4}
    \end{subfigure}

    \caption{Sensitivity to graph density and latent confounding. Rows correspond
    to latent ratios $\ell/p$ of $10\%$, $20\%$, and $30\%$, and columns to
    maximum observed degrees of $1$, $2$, and $4$. Points show mean
    $d_{\mathrm{fwd}}+d_{\mathrm{rev}}$ with SEM across trials; lower is better.}
    \label{fig:stress-grid}
\end{figure*}

\paragraph{Effect of density and latent confounding.}
Increasing the maximum observed degree permits denser cyclic structure, while increasing $\ell/p$ introduces more exogenous common causes and enriches the disturbance term $\Lambda^\top\Omega_L\Lambda$. The results show that recovery depends jointly on graph size, density, and latent structure rather than on any single factor alone. Because confounding also depends on the sampled child sets and loading magnitudes, $\ell/p$ should be interpreted as the number of latent common causes rather than a direct measure of confounding strength. Each baseline is evaluated up to the largest graph size for which its computation remains feasible.

\clearpage
\begingroup
\color{black}

\section{Related work}
\label{app:related-work}

\paragraph{Distributional comparisons for cyclic Gaussian models.}
\citet{ghassami2020} study cyclic linear Gaussian models without latent
confounders. They compare the distribution families induced by compatible
parameters, define quasi-equivalence through positive-measure overlap,
and prove consistency of global penalized-likelihood minimizers under
generalized faithfulness and an additional structural condition
\citep[Definition~9, Assumption~1, Theorem~3]{ghassami2020}.
We extend the overlap-based perspective to observed covariance families
after marginalizing latent variables, requiring an intersection with the
dimension of both families; Appendix~\ref{app:quasi-equivalence-rationale}
explains the distinction. In the same no-latent setting,
\citet{yi2024} propose Filter, Rank, and Prune. Under their assumptions,
filtering retains every edge of at least one objective-minimizing graph
with high probability for sufficiently large samples
\citep[Theorem~3.4]{yi2024}. This is a filtering guarantee, not a
quasi-equivalence consistency result for the final graph.

\paragraph{Discovery with cycles and latent confounding.}
\citet[Theorem~12]{hyttinen2012} use randomized interventions to identify
observed coefficients and structural-noise covariances in linear cyclic
models under intervention and solvability conditions. Correlated noises
represent confounding without specifying individual latent variables.
\citet{forre2018} study nonlinear cyclic models with latent confounders
using observational and interventional conditional-independence
information. They assume that these independences agree exactly with
$\sigma$-separation, their graphical criterion for cyclic models; linear
models need not satisfy this assumption
\citep[Definition~2.15, Remark~2.16]{forre2018}. We instead use
observational data and compare complete Gaussian covariance families.

\paragraph{Comparison with Am\'endola et al.}
\label{app:amendola-comparison}
\citet[Section~2]{amendola2020} study cyclic linear Gaussian mixed graphs.
Bidirected edges permit correlated structural noises. Their simple-graph
restriction allows at most one edge of any type per vertex pair, excluding
reciprocal directed edges and directed--bidirected pairs. Our explicit
latent model allows both reciprocal edges and a directed edge between
variables sharing a latent parent, but the different representations of
confounding need not define the same covariance families.

Their Theorem~3.1 gives dimension $p+k$ for every simple mixed graph, where
$k$ counts directed and bidirected edges. Their Theorem~4.1 gives a
graphical sufficient condition for two such graphs to have equal
Euclidean covariance-family closures. Neither theorem establishes
statistical consistency of their score or search. Our results instead
concern selection by global score minimizers as the sample size increases,
with recovery up to marginal quasi-equivalence under identification
conditions.

Their proposed penalty adds an edge-count term to dimension-based BIC
\citep[Equations~(15)--(16)]{amendola2020}; ours counts directed edges and
latent variables. On latent-free directed graphs without reciprocal
edges, ordinary BIC and our score select the same globally optimal graphs;
this equivalence need not hold with their additional edge-count penalty.
Appendix~\ref{app:positive-minimality} relates their dimension theorem to
our structural-minimality condition for targets admitting such a
latent-free representative, allowing all our latent and cyclic
competitors. It does not verify structural minimality for general
confounded targets.

Their algorithm uses greedy edge additions, deletions, and reversals with
parameter fitting \citep[Section~5]{amendola2020}. We optimize Bernoulli
inclusion probabilities, although our implementation also uses discrete
refinement in no-latent experiments and fixed-graph refitting
(Appendix~\ref{sec:concrete}).

\paragraph{Latent confounding in acyclic models.}
\citet{kaltenpoth2023} learn independent exogenous latent confounders in
acyclic linear Gaussian models. Their identification results restrict the
sizes and incoming edges of groups sharing a latent confounder;
identifying observed directed coefficients additionally requires equal observed-noise
variances \citep[Sections~2.1--2.3, Theorem~2]{kaltenpoth2023}.
SPOT uses mixed graphs without directed cycles and represents confounding
through bidirected edges, rather than optimizing an explicit latent count
\citep[Sections~2--3]{ma2024spot}. Our model allows cycles and unequal
noise variances, with recovery up to marginal quasi-equivalence under
identification conditions rather than unique identification.

\paragraph{Continuous structure optimization.}
NOTEARS uses a smooth equality constraint characterizing acyclic
coefficient matrices \citep[Theorem~1]{zheng2018}. We require invertibility
of $I_p-B_{OO}$, not acyclicity. DCCD-CONF learns nonlinear cyclic models
from interventional data, with correlated Gaussian structural noises and
Bernoulli adjacency gates \citep[Sections~2--3]{sethuraman2025}.
Its straight-through estimator evaluates binary masks but computes
derivatives through continuous gates. Its theoretical result uses a
population score, with likelihood averaged under the data-generating
distributions. Under its assumptions and a suitable penalty, graphs
selected by exact maximizers admit the same family of distributions as
the true graph for the supplied interventions
\citep[Theorem~2, Definitions~A.8 and~A.10]{sethuraman2025}. Our guarantee
instead uses observational data and concerns increasing sample sizes. Bernoulli
graph gates are therefore not novel here; our formulation additionally
gates explicit latent activation and connections and proves equality of
global infima with the corresponding discrete objective.

The Concrete distribution provides continuous approximations to discrete
variables \citep[Section~3.2, Appendix~B]{maddison2017}.
Appendix~\ref{sec:concrete} distinguishes fully continuous masks from
straight-through optimization and describes our implementation.
Proposition~\ref{prop:bernoulli-exactness} concerns the exact Bernoulli
expectation, not either approximation, and does not guarantee numerical
convergence to a global minimum.

\endgroup

\end{document}